\pdfoutput=1
\PassOptionsToPackage{table}{xcolor}
\documentclass{article} %
\usepackage{iclr2027_conference,times}

\usepackage{amsmath,amsfonts,bm}

\def\eqref#1{equation~\ref{#1}}

\def\plaineqref#1{\ref{#1}}

\def\1{\bm{1}}

\def\vdelta{{\bm{\delta}}}

\def\va{{\bm{a}}}

\def\ve{{\bm{e}}}

\def\vm{{\bm{m}}}

\def\vu{{\bm{u}}}

\def\vx{{\bm{x}}}
\def\vy{{\bm{y}}}
\def\vz{{\bm{z}}}

\def\mA{{\bm{A}}}

\def\mD{{\bm{D}}}

\def\mG{{\bm{G}}}

\def\mI{{\bm{I}}}

\def\mL{{\bm{L}}}
\def\mM{{\bm{M}}}

\def\mQ{{\bm{Q}}}
\def\mR{{\bm{R}}}

\def\mU{{\bm{U}}}

\def\mW{{\bm{W}}}
\def\mX{{\bm{X}}}

\DeclareMathAlphabet{\mathsfit}{\encodingdefault}{\sfdefault}{m}{sl}
\SetMathAlphabet{\mathsfit}{bold}{\encodingdefault}{\sfdefault}{bx}{n}

\newcommand{\R}{\mathbb{R}}

\usepackage{amsmath}
\usepackage{amssymb}
\usepackage{amsthm}
\usepackage{enumitem}
\usepackage{array}
\usepackage{booktabs}
\usepackage{bm}
\usepackage{adjustbox}
\usepackage{float}
\usepackage{graphicx}
\usepackage{tikz}
\usetikzlibrary{arrows.meta,calc,positioning}
\usepackage{longtable}
\usepackage{makecell}
\usepackage{multirow}
\usepackage{placeins}
\usepackage{hyperref}
\usepackage{url}
\usepackage{xurl}

\newcommand{\cL}{\mathcal{L}}

\newcommand{\cS}{\mathcal{S}}

\newcommand{\TopK}{\operatorname{TopK}}

\newcommand{\cG}{\mathcal{G}}

\newcommand{\norm}[1]{\left\lVert#1\right\rVert_2}
\newcommand{\normF}[1]{\left\lVert#1\right\rVert_F}
\newcommand{\inner}[2]{\left\langle#1,#2\right\rangle}

\newcommand{\normalize}{\operatorname{normalize}}
\newcommand{\ptitle}[1]{\smallskip\noindent\textbf{#1}\quad}

\theoremstyle{plain}
\newtheorem{theorem}{Theorem}
\newtheorem{proposition}{Proposition}
\newtheorem{lemma}{Lemma}
\newtheorem{corollary}{Corollary}
\theoremstyle{definition}
\newtheorem{definition}{Definition}

\theoremstyle{remark}

\title{Do Words Matter?\\
What Text Contributes to Few and Full-Shot\\
Vision--Language Anomaly Detection}

\author{Arda Baris Basaran$^{1}$ \quad Mesut Ceylan$^{1}$ \quad Elgar Fleisch$^{1,2}$ \quad Filipe Barata$^{1}$\\
$^{1}$ETH Zurich \quad $^{2}$University of St.\ Gallen\\
\texttt{\{abasaran,ceylanm,efleisch,fbarata\}@ethz.ch}}

\iclrfinalcopy
\begin{document}

\maketitle
\lhead{Preprint. Under review.}

\begin{abstract}
Vision--language anomaly detectors describe normal and abnormal appearance with text, and their gains
are commonly credited to the semantic content of these descriptions. This attribution is plausible in the zero-shot setting,
but few and full-shot detectors interpose a trained visual adapter, and in that setting it has not been
isolated. Changing a description changes both its meaning and its numerical geometry, so accuracy
comparisons cannot separate the two. We ask what a fixed pair of text embeddings, or \emph{anchors},
contributes to such a detector: semantic content, effective scale, or joint anchor geometry. For a
single anchor pair, the answer is effective scale alone. The score reads the anchors only through their
difference, and a trained adapter can rotate to follow any replacement direction, leaving
that direction, and with it the linguistic content, unidentifiable. We prove that replacement anchors with the same difference-vector length permit exactly the same probability maps and attainable losses. Across seven medical
datasets, three vision backbones, both supervision regimes, and three adapter families, scale-matched
random and orthogonal anchors track correct prompts; breaking the scale match opens a clear gap that
restoring the scale closes without text. The same pattern holds empirically in MVFA's residual,
multi-level detector and when MadCLIP's learned prompts are replaced by a text-free learned scale.
When one adapter serves many categories, the answer expands to
their joint arrangement. Rotating all category axes together does not change anything, as predicted. For the tested
CLIP prompt banks, these axes are nearly collinear, and meaning-free rotations that spread
them apart improve performance on both industrial benchmarks. Even under frozen zero-shot transfer, a random anchor frame
transfers as well as the prompt frame: coordinate consistency, not meaning. What these detectors read from
their prompts is scale and arrangement, not semantic content.
\end{abstract}

\section{Introduction}
\label{sec:intro}

Anomaly detection asks a model to identify appearances it has rarely or never seen. Because anomalies
are rare, visually diverse, and seldom labeled, the field has largely trained detectors on normal
data alone, scoring a test image by how far it departs from the resulting model of normality
\citep{roth2022patchcore,deng2022rd4ad,gudovskiy2022cflow,salehi2021mkd}; supervising anomalies
directly only shifts the scarcity, since labeled anomalous examples are rarer still
\citep{ding2022dra,yao2023bgad}. Such detectors carry no explicit representation of what
\emph{constitutes} abnormality, and must define it negatively, as departure from a reference
distribution that has to be estimated exhaustively from images. The cost is a per-category fit: every
new object or organ needs its own corpus of normal data, and accuracy degrades when only a handful of
examples are available \citep{huang2022regad}. Vision--language models supply the missing half.
Contrastive pretraining on web-scale image--text pairs places visual and textual concepts in a shared
space \citep{radford2021clip}, so the reference for \emph{abnormal} can be written in language
instead of estimated from target-domain images. This both enables low-sample operation and lets a
single category-agnostic detector be re-targeted by changing the noun in a prompt.

\begin{figure}[t]
    \centering
    \resizebox{\linewidth}{!}{\input{figures/geometry_overview}}
    \caption{\textbf{What the text anchors contribute to a trained fixed-anchor detector.} The
    score reads the two anchors only through their difference, the scoring axis. \textbf{Left:} a
    separately trained adapter rotates to follow any replacement axis of the same effective scale, so
    its direction, and with it the semantic content, is unidentifiable. \textbf{Middle:} with one
    adapter serving many categories, a shared rotation preserves the arrangement of their axes;
    independent rotations keep each scale but change the arrangement. \textbf{Right:} a frozen adapter
    cannot rotate, so it must keep the anchor frame it was trained with, which need not be
    linguistic. Embeddings are schematic.}
    \label{fig:geometry_overview}
\end{figure}
\pagebreak

These detectors perform well in practice. Handcrafted prompts give competitive zero-shot
results on industrial benchmarks \citep{jeong2023winclip}; trained visual projections or learned
prompts improve zero- and few-shot performance \citep{chen2023aprilgan,zhou2024anomalyclip};
and the approach extends to medical imaging, where labeled anomalies are scarcest
\citep{huang2024mvfa,shiri2025madclip}. Language-conditioned detection is thus common in the
low-label regime. The trained fixed-anchor subclass studied here uses two prescribed embeddings,
or \emph{anchors}, for its normal--abnormal readout.

The field typically attributes these gains to semantic factors, including anomaly semantics,
language-compatible features, or vision--language alignment
\citep{li2024promptad,zuo2024clipfsac,huang2024mvfa,zhang2024mediclip,shiri2025madclip}.
In a training-free detector, the anchors directly define the scoring rule: no learned visual map
can compensate when their coordinates change. The fixed-anchor few- and full-shot setting differs:
real or synthetic anomaly targets train a visual adapter against the anchors, letting it absorb
changes in their direction. Because changing a description alters both its meaning and its numerical
geometry, ordinary prompt comparisons cannot determine which anchor properties remain observable
after adaptation. Curated prompt banks cost clinical expertise for each new dataset and finding;
if anchors contribute only geometry, pursuing better descriptions and text encoders targets the
wrong variable. This motivates our central question: \emph{For trained fixed-anchor vision--language anomaly detectors, what information from
the text anchors actually affects representational capacity and observed performance: semantic
content, effective scale, or joint anchor geometry?}
We answer this by identifying what the score can read, then testing whether training respects it. The
binary score depends on the anchors only through their difference: a direction, which a trained
adapter can rotate its output to follow (Figure~\ref{fig:geometry_overview}, left), and a magnitude,
which it cannot. We prove the direction is therefore unidentifiable, and probe the claim with five
anchor conditions (correct medical prompts, a mismatched-dataset bank,
and three text-free pairs that preserve or break the surviving magnitude) across seven medical
datasets, two supervision regimes, three vision backbones and three adapter families. We then find
the same scale-controlled pattern in MVFA's residual multi-level detector and in MadCLIP after
replacing its learned prompt path with a text-free learned scale. Two further settings
test the boundaries: one adapter serving many categories, where the invariant becomes the joint
geometry of their difference vectors (Figure~\ref{fig:geometry_overview}, middle), and frozen zero-shot transfer,
where the adapter can no longer rotate.

Our contributions are:

\noindent\textbf{Single-pair identification.}\quad Only the effective scale is identifiable.
Scale-matched random and orthogonal anchors track correct medical prompts with mean gaps of
$0.16\%$--$0.26\%$ image AUROC; breaking the scale opens a clear gap that restoring it
closes with no text.

\noindent\textbf{Shared multi-category geometry.}\quad The invariant expands to the Gram matrix of the
category difference vectors. The tested CLIP prompt axes are nearly collinear, and spreading
them apart, while preserving every category's scale and adding no meaning, improves both industrial
benchmarks, by $5.6\%$ image and $4.4\%$ pixel AUROC on VisA and up to $8.8\%$ and $8.3\%$ across
vision backbones.

\noindent\textbf{Frozen zero-shot transfer boundary.}\quad With the adapter frozen, its anchor frame is
load-bearing but need not be linguistic: a retained random frame averages within $0.5\%$ AUROC
of prompts.

\section{Related Work}
\label{sec:related}

Vision--language anomaly detectors differ in where adaptation occurs. WinCLIP scores handcrafted
prompts with frozen encoders \citep{jeong2023winclip}; APRIL-GAN learns visual projections into a
fixed text space \citep{chen2023aprilgan}; PALADIN aligns DINOv3 patch features to the CLIP text
space for prompt-guided localization \citep{basaran2026paladin}; and AnomalyCLIP learns
object-agnostic prompt tokens \citep{zhou2024anomalyclip}. Medical systems likewise combine prescribed prompt ensembles with
residual visual adapters \citep{huang2024mvfa}, or jointly learn prompt tokens and visual adaptation
\citep{zhang2024mediclip,shiri2025madclip}, while AnomalyDINO removes text entirely
\citep{damm2024anomalydino}. These distinctions determine whether a changed text readout can be
compensated by a trainable visual branch.
Prompt-based detectors commonly credit anomaly semantics or vision--language alignment
\citep{li2024promptad,zuo2024clipfsac,huang2024mvfa,zhang2024mediclip,shiri2025madclip}, yet strong
visual-only systems show that accuracy alone cannot identify a contribution from language
\citep{roth2022patchcore,deng2022rd4ad,gudovskiy2022cflow,salehi2021mkd,
liu2023simplenet,li2021cutpaste,schluter2022natural,huang2022regad}. Learned-prompt methods further
blur attribution because their text representations are fitted rather than prescribed
\citep{zhou2022coop,zhou2022cocoop,khattak2023maple}. Evidence on textual content is mixed even with
frozen visual features: semantic descriptors can improve classification and make its criteria
editable \citep{menon2023visual}, whereas random words can match curated descriptors
\citep{roth2023waffle}. Neither setting studies fixed text anchors with a separately trainable visual
branch. Identifiability analyses more generally characterize representations only up to
transformations that leave the observable model unchanged \citep{roeder2021linear}. The remaining
question is which properties of fixed text embeddings stay identifiable after visual adaptation.

\section{Preliminaries}
\label{sec:prelim}

This section fixes notation and describes the detector family the paper analyzes: how frozen image
patches, a trainable adapter, and a prescribed pair of text embeddings produce an anomaly map.

\ptitle{Notation and the studied detector.}
A frozen vision backbone $b$ exposes $P$ patch features $\vz^{(\ell)}_{1:P}$, one per token of the
patch grid, at each of $L$ retained layers indexed by $\ell$. Each layer has its own trainable
adapter $g^{(\ell)}$ into the $d$-dimensional anchor space, and its normalized adapted patch is
\begin{equation}
\vz^{(\ell)}_{1:P}=b^{(\ell)}(x),
\qquad
\hat\vu^{(\ell)}_p
=\normalize\!\left(g^{(\ell)}(\vz^{(\ell)}_p)\right)\in\R^d.
\label{eq:patch_pipeline}
\end{equation}
The scoring head contains two prescribed unit vectors, $\va_N$ for normal and $\va_A$ for
abnormal, which we call \emph{anchors}. For text conditions, each is the normalized mean of a fixed
prompt ensemble encoded by
the frozen text encoder; random-coordinate controls construct the two vectors directly. In either
case the vectors remain fixed throughout a run. Patch $p$ at layer $\ell$ receives the two cosine
logits
\begin{equation}
s^{(\ell)}_{p,N}=\tau\inner{\hat\vu^{(\ell)}_p}{\va_N},
\qquad
s^{(\ell)}_{p,A}=\tau\inner{\hat\vu^{(\ell)}_p}{\va_A}.
\label{eq:score}
\end{equation}
Here $\tau>0$ is the multiplicative similarity, or logit, scale; under the conventional softmax
parameterization it is an inverse temperature rather than a temperature.

\ptitle{From logits to anomaly maps.}
The patch-grid logits are reshaped and bilinearly upsampled to image resolution before the
two-class softmax. Its abnormal channel is
\begin{equation}
p^{(\ell)}_{A}
=\frac{\exp s^{(\ell)}_A}{\exp s^{(\ell)}_N+\exp s^{(\ell)}_A}
=\sigma\!\left(s^{(\ell)}_A-s^{(\ell)}_N\right).
\label{eq:binary_probability}
\end{equation}

\ptitle{Training objective and evaluation.}
Focal and Dice losses \citep{lin2017focal,milletari2016vnet} train every layer map. At inference,
the $L$ abnormal maps are averaged;
pixel AUROC reads this averaged map, while the image score is the mean of its top $1\%$ pixels.
Appendix~\ref{app:objective} verifies the interpolation, loss, and evaluation steps exactly.

\section{Theoretical Analysis}
\label{sec:theory}

The goal of this section is to identify which properties of the prompt embeddings actually determine the detector's capabilities, and thereby separate the contribution of linguistic meaning from that of embedding geometry. We ask when text-derived anchors can be replaced by text-free vectors without changing the representable abnormal-probability maps. We first answer this for one normal--abnormal pair, then for several pairs sharing an adapter.

\subsection{One anchor pair: separating direction from scale}
\label{sec:gram}

What does the detector read from its anchors? By~\eqref{eq:binary_probability}, only the abnormal
logit minus the normal logit determines the abnormal probability. The anchors therefore enter through their
difference, $\vdelta=\va_A-\va_N$, the \emph{scoring axis}. With anchor cosine
$\rho=\inner{\va_N}{\va_A}$, we separate its length from its direction:
\begin{equation}
s^{(\ell)}_{p,A}-s^{(\ell)}_{p,N}
=\tau\inner{\hat\vu^{(\ell)}_p}{\vdelta}
=\underbrace{\tau\sqrt{2-2\rho}}_{\lambda_{\mathrm{eff}}}
\inner{\hat\vu^{(\ell)}_p}{\hat\vdelta},
\qquad \hat\vdelta=\frac{\vdelta}{\norm{\vdelta}}.
\label{eq:effective_logit_scale}
\end{equation}
The inner product measures an adapted patch's alignment with the scoring axis; positive values
favor abnormality. The factor $\lambda_{\mathrm{eff}}$, the \emph{effective logit scale} (hereafter effective scale), sets how strongly this alignment changes
the probability. Closer anchors give a smaller effective scale.

Suppose we rotate the anchors and then retrain the adapter. Training now encourages higher cosine
alignment with the new scoring axis for abnormal patches and lower alignment for normal patches.
An adapter that can rotate its outputs with that axis can preserve the original cosine values
(Figure~\ref{fig:geometry_overview}, left).
Normalization fixes the output length, however, so rescaling the adapter cannot absorb a change in
axis length. This motivates replacing linguistic coordinates while preserving effective scale.

An adapter family $\cG$ is \emph{rotation-closed} if every rotation of an adapter's outputs can be
achieved by changing its parameters. One matrix $\mQ\in O(d)$ rotates the outputs of every layer; $O(d)$ is
the set of $d$-dimensional orthogonal matrices, which preserve lengths and angles.
A freely trained final linear map absorbs $\mQ$, so our linear, MLP, and Transformer adapters
all qualify (Appendix~\ref{app:closure}).

This gives a direct test of prompt meaning: if replacement anchors preserve the effective scale,
can a retrained adapter reproduce every original abnormal-probability map?
Theorem~\ref{thm:gram} below answers this question for representable probability maps.

\begin{theorem}[Single-pair equivalence and effective scale]
\label{thm:gram}
Consider unit anchor pairs $(\va_N,\va_A)$ and $(\va'_N,\va'_A)$, the original and replacement, used with
positive similarity scales $\tau$ and $\tau'$. Hold each pair fixed within its run and train a separate adapter from a
rotation-closed family. If
\begin{equation}
\tau\norm{\va_A-\va_N}
=\tau'\norm{\va'_A-\va'_N},
\label{eq:matched_effective_logit_scale}
\end{equation}
then the two choices permit exactly the same category-agnostic abnormal-probability maps. Any
training objective or evaluation statistic computed only from those maps has the same attainable
values under both choices. Its infimum and supremum coincide, as does its optimum whenever attained. At fixed $\tau$,
matching the anchor cosine $\rho$ is sufficient; more generally the surviving scalar is the effective
scale $\lambda_{\mathrm{eff}}=\tau\sqrt{2-2\rho}$.
\end{theorem}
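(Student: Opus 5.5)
The plan is to reduce everything to the scoring axis and then transport adapters between the two runs by a single orthogonal map. By~\eqref{eq:binary_probability} and~\eqref{eq:effective_logit_scale}, the abnormal probability at every patch and layer is a function only of the logit difference $\tau\inner{\hat\vu^{(\ell)}_p}{\vdelta}$. The bilinear upsampling is linear, so it commutes with taking the difference of the two logit channels. Hence every map the detector emits, per layer and after averaging, is determined by the collection of scalars $\lambda_{\mathrm{eff}}\inner{\hat\vu^{(\ell)}_p}{\hat\vdelta}$. I will first record this observation. I will then note that hypothesis~\eqref{eq:matched_effective_logit_scale} says exactly $\lambda_{\mathrm{eff}}=\lambda'_{\mathrm{eff}}$, because $\norm{\va_A-\va_N}=\sqrt{2-2\rho}$ for unit anchors.

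Next comes the transport step. If $\vdelta$ or $\vdelta'$ vanishes, then by the matched scale both do, and every map is the constant $1/2$ in both runs. Otherwise I choose $\mQ\in O(d)$ with $\mQ\hat\vdelta=\hat\vdelta'$; a Householder reflection suffices, or the identity if the two directions already agree. Given any adapter tuple $(g^{(\ell)})_\ell$ in the original run, rotation-closedness provides parameters realizing $(\mQ g^{(\ell)})_\ell$, with the same $\mQ$ at every layer. Normalization commutes with orthogonal maps, so $\normalize(\mQ g)=\mQ\,\normalize(g)$, and
\[
\inner{\mQ\hat\vu^{(\ell)}_p}{\hat\vdelta'}=\inner{\hat\vu^{(\ell)}_p}{\mQ^\top\hat\vdelta'}=\inner{\hat\vu^{(\ell)}_p}{\hat\vdelta}.
\]
With equal effective scales, every patch logit difference coincides, so every per-layer map coincides, and so does their average. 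Applying the same argument with $\mQ^\top$ gives the reverse inclusion. The two sets of realizable map tuples are therefore equal.

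The remaining clauses follow directly. Any objective or evaluation statistic that depends only on the maps, such as focal and Dice losses, pixel AUROC, or the top-$1\%$ image score, is a function on a common set. Its image, and hence its infimum, supremum, and attained optimum, is identical in both runs. The fixed-$\tau$ remark follows because matching $\rho$ matches $\sqrt{2-2\rho}$.

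I expect the main obstacle to be stating precisely what object is being compared. The claim concerns the set of maps over all parameters, not trained outcomes. So I must make sure that rotation-closedness is applied to the whole multi-layer tuple with a single $\mQ$, as defined in the paper. I must also check that the losses are evaluated on the upsampled maps, so that nothing reads $\hat\vu$ beyond its projection on the axis; this is the content invoked from Appendix~\ref{app:objective}.
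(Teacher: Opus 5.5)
Your proposal is correct and follows the paper's proof essentially step for step. Both arguments reduce the output to the logit difference $\tau\norm{\vdelta}\inner{\hat\vu}{\hat\vdelta}$, pick one $\mQ\in O(d)$ with $\mQ\hat\vdelta=\hat\vdelta'$ (the paper gets it from its Gram-to-rotation lemma on one-row matrices; your Householder reflection is an explicit instance), compose that $\mQ$ into every layer's adapter by rotation closure, use $\mQ^\top$ for the reverse inclusion, and treat the vanishing-difference case separately. Your explicit remark that normalization commutes with orthogonal maps fills in a step the paper uses only implicitly when it writes $\mQ\hat\vu^{(\ell)}_p$.
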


Thus, scale-matched text-free anchors can reproduce every map achievable with prompts, and vice
versa. Equal representational capabilities need not yield identical training outcomes: stochasticity and finite optimization can lead to different solutions. Our experiments test whether the two anchor choices achieve comparable performance. Appendices~\ref{app:gram} and~\ref{app:residual} give the
proof and explain possible residuals.

\subsection{Several anchor pairs: why their arrangement matters}
\label{sec:multicategory}

On industrial anomaly detection datasets such as MVTec-AD, one adapter serves object categories
such as bottles and cables, each with its own anchor pair.
Rotating all scoring axes together still allows one compensating adapter rotation. Rotating them
independently can change their relative angles, which no common rotation can undo
(Figure~\ref{fig:geometry_overview}, middle). We therefore need to track their joint arrangement,
as well as each scale.
Category $t$ selects only its own unit anchor pair $(\va_t^N,\va_t^A)$, with cosine $\rho_t$;
all $T$ categories share the adapter. Stack their scoring axes $\vdelta_t=\va_t^A-\va_t^N$ as rows
of $\mD$. Their pairwise inner products form a \emph{Gram matrix}, recording lengths and alignments:
\begin{equation}
\mM=\mD\mD^\top,
\qquad
M_{ts}=\inner{\vdelta_t}{\vdelta_s},
\qquad
M_{tt}=2-2\rho_t.
\label{eq:multicat_gram}
\end{equation}
The diagonal fixes each category's scale at a given $\tau$; the off-diagonals record how axes align.
If two banks share $\mM$, one rotation maps every original scoring axis to its replacement. A shared
adapter can follow that one rotation. The corollary states what this implies for the probability
maps of all categories.

\begin{corollary}[Shared multi-category extension]
\label{cor:multicategory}
Let two category-conditioned anchor banks have difference matrices $\mD$ and $\mD'$. If
$\mD\mD^\top=\mD'{\mD'}^\top$, then separately trained rotation-closed adapters permit exactly
the same category-conditioned abnormal-probability maps under the two banks at the same similarity scale.
\end{corollary}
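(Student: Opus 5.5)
The plan is to reduce the corollary to a single orthogonal map and then reuse the rotation-closure argument behind Theorem~\ref{thm:gram}. Write $\mD,\mD'\in\R^{T\times d}$ with rows $\vdelta_t^\top$ and ${\vdelta'_t}^\top$. The category-conditioned abnormal map for category $t$ depends on the anchors only through $\sigma(\tau\inner{\hat\vu^{(\ell)}_p}{\vdelta_t})$, by~\eqref{eq:binary_probability} and~\eqref{eq:effective_logit_scale}. Here upsampling acts linearly on the logit difference before the sigmoid, so it commutes with everything below. It therefore suffices to find $\mQ\in O(d)$ with $\mQ\vdelta_t=\vdelta'_t$ for every $t$ simultaneously.

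\textbf{Step 1: a common rotation from equal Gram matrices.} This is the step I expect to carry the real content. The claim is the standard fact that two families of vectors with equal Gram matrices differ by an orthogonal map. I would prove it via the linear map $\phi:\operatorname{span}\{\vdelta_t\}\to\operatorname{span}\{\vdelta'_t\}$ defined by $\sum_t c_t\vdelta_t\mapsto\sum_t c_t\vdelta'_t$.

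\begin{itemize}
\item \emph{Well defined and isometric.} We have $\norm{\sum_t c_t\vdelta_t}^2=\vm^\top\mM\vm$ for $\vm=(c_t)_t$, and the same expression holds for the primed bank with the same $\mM$. Hence a vanishing combination on one side vanishes on the other, and $\phi$ preserves norms, hence inner products.
\item \emph{Extension to $O(d)$.} The two spans have equal dimension $\operatorname{rank}\mM$, so their orthogonal complements in $\R^d$ also have equal dimension. Extend $\phi$ by any isometry between the complements, for example by mapping an orthonormal basis to an orthonormal basis. The result is $\mQ\in O(d)$ with $\mQ\vdelta_t=\vdelta'_t$.
\end{itemize}

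Equivalently, one can write $\mD^\top=\mU\Sigma\mathbf{V}^\top$ and match singular vectors. The span argument avoids degenerate-singular-value bookkeeping, and it covers rank-deficient $\mM$, such as the nearly collinear prompt axes.

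\textbf{Step 2: transport adapters.} Take any shared adapter $\{g^{(\ell)}\}\in\cG$ under the original bank. By rotation closure (Appendix~\ref{app:closure}), there are parameters realizing $\{\mQ g^{(\ell)}\}$ for all layers at once, with a single $\mQ$ applied to every layer. Normalization commutes with orthogonal maps, so the adapted patches become $\mQ\hat\vu^{(\ell)}_p$. Then
\[
\inner{\mQ\hat\vu^{(\ell)}_p}{\vdelta'_t}=\inner{\mQ\hat\vu^{(\ell)}_p}{\mQ\vdelta_t}=\inner{\hat\vu^{(\ell)}_p}{\vdelta_t}
\]
for every $p$, $\ell$ and $t$. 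At the common scale $\tau$, every per-layer, per-category logit difference, and hence every abnormal-probability map and its layer average, is reproduced under the primed bank.

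\textbf{Step 3: the reverse direction.} Apply the same argument with $\mQ^\top$, which maps $\vdelta'_t$ to $\vdelta_t$. This shows the two sets of attainable category-conditioned maps coincide. As in Theorem~\ref{thm:gram}, any objective or evaluation statistic computed from these maps then has identical attainable values.

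The point to emphasize is that one $\mQ$ must serve all categories, since the adapter is shared and category $t$ only selects which row of $\mD$ is read. This is exactly why the full Gram matrix, rather than only its diagonal, is required.
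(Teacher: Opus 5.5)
Your proof is correct and follows the paper's route: Lemma~\ref{lem:gram_rotation} gives one $\mQ$ with $\mQ\vdelta_t=\vdelta'_t$ for every $t$, proved as you do by a well-defined isometry between the row spans extended to an orthogonal map of $\R^d$; the paper then composes that $\mQ$ into every layer's adapter and uses the same inner-product identity together with the loss and evaluation factorization. Your explicit reverse inclusion via $\mQ^\top$ and your remark that normalization commutes with $\mQ$ spell out steps the paper's proof of the corollary leaves implicit.
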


Our experiments therefore use shared rotations to test equivalence at fixed $\mM$, and independent
rotations to test the effect of changing the arrangement at fixed category scales. The corollary
predicts no direction for that effect.
Here $\mM$ characterizes the axes up to a common orthogonal transformation; matching it is sufficient,
but not universally necessary, for equal capabilities (Appendix~\ref{app:multicategory}).

\ptitle{When the argument stops applying.} Both results require fixed anchors, losses and metrics
computed from cosine-based binary probability maps, and an adapter able to follow changed anchors.
The next proposition collects cases where the compensating rotation is unavailable. Learned prompts
fall outside the fixed-anchor premise; these conclusions concern the studied detector family.

\begin{proposition}[Proof in Appendix~\ref{app:breaking}]
\label{prop:breaking}
The equivalence guarantees above no longer follow if the adapter family cannot absorb the required
output rotation or if the adapter is frozen. In shared multi-category scoring, they also do not
follow when the replacement changes $\mD\mD^\top$, because no single rotation then maps all scoring
axes to their replacements. None of these conditions guarantees different performance or a benefit
from correct text; each only removes the equivalence guarantee.
\end{proposition}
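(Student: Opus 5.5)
The statement only asserts that the sufficient conditions behind Theorem~\ref{thm:gram} and Corollary~\ref{cor:multicategory} can fail. So I will not try to show that performance differs. I will show that the specific proof mechanism is unavailable, and then give small examples confirming that the conclusion of equivalence is genuinely not implied. The mechanism in both earlier proofs is the same. First, choose $\mQ\in O(d)$ with $\tau'\mQ\vdelta=\tau\,\vdelta'$; in the multi-category case, choose one $\mQ$ with $\mQ\vdelta_t=\vdelta'_t$ for all $t$, which exists exactly when $\mD\mD^\top=\mD'{\mD'}^\top$. Second, replace every adapter $g^{(\ell)}$ by $\mQ g^{(\ell)}$. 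Since $\normalize(\mQ\vu)=\mQ\normalize(\vu)$, the logit difference $\tau\inner{\hat\vu}{\vdelta}$ is preserved, and so is every map computed from it. I will go through the three conditions one at a time and identify which ingredient each one removes.

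\textbf{Non-closed adapters.} If $\cG$ is not rotation-closed, then $\mQ g$ need not belong to $\cG$, and the map-preserving construction is no longer in the permitted family. To show that the conclusion can actually fail, I will use a concrete family. Take $d=2$ and let $\cG$ consist of adapters whose outputs lie in a fixed half-line cone, say the first coordinate direction times positive scalars. Take $\vdelta$ along $\ve_1$ and $\vdelta'$ orthogonal to it, at the same length. Then every map under the replacement is constant at $\sigma(0)=1/2$, whereas the original bank attains $\sigma(\lambda_{\mathrm{eff}})\neq 1/2$. So the two sets of representable maps differ.

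\textbf{Frozen adapter.} This is the special case where $\cG=\{g_0\}$ is a singleton, which is not rotation-closed unless $g_0$ is trivial. The same orthogonal-axis example gives different maps, unless the image of $g_0$ happens to be orthogonal to both axes in a way that makes the logits coincide.

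\textbf{Changed Gram matrix.} Suppose $\mD\mD^\top\neq\mD'{\mD'}^\top$. Any $\mQ$ satisfying $\mQ\vdelta_t=\vdelta'_t$ for all $t$ would preserve inner products, which would force $\mD'{\mD'}^\top=\mD\mQ^\top\mQ\mD^\top=\mD\mD^\top$, a contradiction. So no single rotation maps all the axes to their replacements, and the shared-adapter compensation argument does not apply. For a witness that the conclusion can fail, I will take $T=2$ with collinear axes $\vdelta_1=\vdelta_2$ in the original bank and orthogonal axes in the replacement. The replacement can represent a patch that is abnormal for category $1$ and neutral for category $2$. The original cannot, because it forces identical logits for both categories.

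\textbf{Final clause.} The statement also says none of these conditions guarantees different performance or a benefit from correct text. I will show this with examples in which equivalence still holds despite the broken premise.

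1. For a non-closed family, take a non-closed $\cG$ that happens to be closed under the one particular $\mQ$ needed. Equivalence then holds.
2. For the Gram condition, Appendix~\ref{app:multicategory} notes that matching $\mM$ is not necessary. An example is the case $d$ large with a trivial loss, where all maps attain the same optimum.

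These examples also show that nothing in the statement favours the original text anchors over the replacement.

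I expect the main obstacle to be making the Gram-breaking witness rigorous: showing that the two map families really differ, not just that the proof fails. The difficulty is that upsampling and normalization complicate pointwise arguments. I would handle this by evaluating at a single patch with the identity upsampling weight, using the exact interpolation facts already verified in Appendix~\ref{app:objective}.
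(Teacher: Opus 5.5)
\textbf{Comparison with the paper.} Your core argument is the paper's. Appendix~\ref{app:breaking} observes only that each condition removes the compensating construction: non-closure makes $\mQ\circ g^{(\ell)}$ inadmissible, freezing removes the reparameterization, and a changed $\mD\mD^\top$ rules out one common $\mQ$. It then calls these failures of a sufficient argument rather than sufficient conditions for a difference. You go further in two worthwhile ways. Your computation $\mD'=\mD\mQ^\top\Rightarrow\mD'{\mD'}^\top=\mD\mD^\top$ proves the ``because'' clause, which the paper only asserts. Your witnesses also establish genuine non-implication, whereas showing that one proof technique breaks does not by itself show that the conclusion fails to follow. The half-line family in $d=2$ is a clean witness for both the non-closed and the frozen case. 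The paper's shorter version never has to fix a domain for the category-conditioned maps; your added witnesses do.

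\textbf{Details to fix.}

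(i) The mechanism should read $\tau\mQ\vdelta=\tau'\vdelta'$, i.e.\ $\mQ\hat\vdelta=\hat\vdelta'$ as in the paper. If you take norms in your $\tau'\mQ\vdelta=\tau\vdelta'$ and combine with $\tau\norm{\vdelta}=\tau'\norm{\vdelta'}$, you are forced to $\tau=\tau'$.

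(ii) Your $T=2$ witness works only if the same patch feature $\vz$ is scored under both categories, i.e.\ if $(t,\vz)\mapsto p_{t,A}(\vz;g)$ is read on the full product domain. State that explicitly, because otherwise the conclusion can survive a changed Gram matrix. Suppose category $t$'s patch features lie in a subspace $V_t$ with $V_1+V_2$ direct, and the adapter is bias-free linear. Choose $\mQ_t$ with $\mQ_t\vdelta_t=\vdelta'_t$ and set $\mW'v=\mQ_t\mW v$ for $v\in V_t$. The two banks then give identical category-conditioned maps, even though their Gram matrices differ. This domain question, not upsampling, is the real subtlety: interpolation weights sum to one, so a constant patch field stays constant. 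It is also why the paper declines a converse in Appendix~\ref{app:multicategory}.

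(iii) In the final clause, closure under $\mQ$ gives only one inclusion of map sets; you also need closure under $\mQ^\top$. This is automatic if you take $\mQ$ to be a Householder reflection. Note also that such a family \emph{can} absorb the required rotation. Your example therefore addresses non-closure under all of $O(d)$, which is the reading the paper's proof uses, not the literal ``cannot absorb'' wording.

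(iv) The ``$d$ large with a trivial loss'' example is not a witness. A constant loss equalizes objective values for any anchors whatsoever and says nothing about the maps or AUROC. Use the direct-sum construction from (ii) instead. For the frozen case, the two maps coincide exactly when every normalized output of $g_0$ is orthogonal to $\tau\vdelta-\tau'\vdelta'$; orthogonality to both axes is only a special case. That condition gives a frozen adapter for which equivalence still holds.
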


\section{Method: Anchor Controls}
\label{sec:controls}

Suppose we train an adapter with prompts and then replace their anchors with scale-matched,
text-free vectors. Theorem~\ref{thm:gram} says we \emph{could} rotate the trained adapter's outputs
to keep every abnormal-probability map unchanged. In our experiments, however, we do not apply
that rotation: we train a new adapter from random weights for a fixed schedule. Its initialization
is tied to coordinate axes, and Adam rescales updates coordinate by coordinate, so the new run need
not follow the corresponding rotated path. Stochastic batches and limited numerical precision can add further
variation (Appendix~\ref{app:residual}). We therefore measure whether ordinary training reaches
similar held-out performance without words, then change scale, joint geometry, and the ability to
retrain the adapter to test the limits of this result.

\phantomsection\label{sec:instruments}
\ptitle{Few-shot medical controls.} We first test the single-pair result with a newly trained adapter for every anchor choice. Correct
medical prompts provide the reference: for each dataset, complete normal and abnormal description
banks specify its anatomy, imaging modality, state definitions, and templates. We encode each bank,
average and normalize its embeddings, and obtain the reference cosine
$\rho_{\mathrm{correct}}$ and effective scale $\lambda_{\mathrm{eff}}$. All anchor pairs stay fixed
throughout training; learning them would also change $\rho$. We repeat the comparisons across
supervision regimes, vision backbones, and adapter families to check finite-training behavior.
The decisive test removes language while keeping exactly what the theorem says matters. A seeded
random unit-vector pair with cosine $\rho_{\mathrm{correct}}$ bypasses the text encoder but preserves
$\lambda_{\mathrm{eff}}$. These \emph{Gram-matched random anchors} permit the same probability maps
as correct prompts. Any performance gap after separately training the adapters is therefore a
finite-training residual, not a loss of representational capability.

Next we ask whether scale, rather than words, explains a gap when anchors are replaced. \emph{Unmatched
orthogonal anchors} are seeded, perpendicular unit vectors with zero cosine. They also contain no
text, but change $\lambda_{\mathrm{eff}}$. For \emph{matched orthogonal anchors}, we reuse
those same vectors and multiply the similarity scale during training and evaluation by
$\sqrt{1-\rho_{\mathrm{correct}}}$. This restores the original $\lambda_{\mathrm{eff}}$ without
restoring text. The orthogonal pair is thus a scale-changing control, and its matched version is a
second exact probability-map-level control: recovery after scale matching implicates scale rather
than linguistic coordinates.

Finally, a \emph{mismatched-dataset bank} tests a more recognizable semantic error. We replace the
target's complete description banks with those of a fixed donor, changing anatomy, modality, and
state descriptions together; for example, Kaggle BrainMRI is scored with LiverCT banks. This is a
stress test, not an exact test of meaning, because the donor need not match
$\rho_{\mathrm{correct}}$. A difference cannot isolate semantics, and a null result cannot show
that every word is ignored. We fix donor mappings and full-bank construction before training.
The two scale-matched text-free conditions carry the semantic-necessity test; the mismatched-dataset bank
provides complementary evidence. The audit follows controlled identifiability and capacity-matched
controls \citep{locatello2019challenging,hewitt2019control} to test a possible geometric shortcut
\citep{geirhos2020shortcut}. Section~\ref{sec:experimental_setup} gives the protocol;
Appendices~\ref{app:mismatched_dataset_prompts} and~\ref{app:anchor_geometry_values} document
the donor banks and geometry.

\phantomsection\label{sec:architectural_extensions}
\ptitle{Architectural and model extensions.} The exact single-pair result assumes fixed anchors
and a visual adapter that can absorb output rotations. We test how far its observed scale dependence
carries as the detector becomes more complex. First, approximately parameter-matched linear, MLP,
and contextual Transformer adapters retain rotation closure but differ in nonlinear capacity and
patch interaction. This tests whether the result depends on our linear head. We then retain the
published visual and scoring pipelines of MVFA~\citep{huang2024mvfa} and
MadCLIP~\citep{shiri2025madclip}. MVFA inserts residual adapters at multiple levels of one CLIP
image encoder and uses separate detection and segmentation readouts; its prescribed anchors permit
the same scale controls. MadCLIP instead has separate normal and abnormal adaptation branches within
one CLIP image encoder and learns its prompts, so its anchor cosine is fitted. There we replace the
prompt path with a text-free learned scale on a random frame. These studies test empirical reach
where the fixed-anchor theorem alone does not guarantee equivalence. Appendix~\ref{app:external_models}
specifies the arms, optimization, scoring, and checkpoint rules.

\phantomsection\label{sec:industrial_instruments}
\ptitle{Unified industrial controls.} When one adapter serves many industrial categories, matching each pair's scale is not enough to
guarantee equivalence: Corollary~\ref{cor:multicategory} also requires the arrangement of their
scoring axes to stay fixed. We therefore compare two geometric interventions. A \emph{shared
rotation} applies one orthogonal transformation to the whole bank, preserving $\mM$. A single
adapter rotation can compensate, so this is the exact control for finite-training invariance.
\emph{Independent rotations} reorient categories separately. They preserve every diagonal
$M_{tt}$, hence every category's effective scale, but generally change the off-diagonal alignments.
Their performance tests whether joint geometry matters when individual scales are held fixed.

\ptitle{Frozen zero-shot medical transfer.} Theorem~\ref{thm:gram} requires the adapter to follow a new anchor frame. To test this boundary, we
train an adapter on a medical source dataset, freeze it, and score a target without target-side
training. We either retain the source pair or substitute the target pair. Comparing those swaps
for correct and random anchors asks whether transfer depends on linguistic content or consistency
with the adapter's trained coordinates (Section~\ref{sec:frozen_transfer}).

\section{Experiments}
\label{sec:experiments}

\subsection{Setup and reading rule}
\label{sec:experimental_setup}

We report results in the order of the controls: text-free anchors after retraining, changes in
anchor geometry, and adapters that cannot retrain.
We separated datasets by their source of anomaly supervision. The real-anomaly-support regime used
$K$ normal and $K$ observed abnormal samples, whereas the normal-only regime used $K$ normal samples
and generated synthetic anomalies online during training. We used the former for
BraTS2021~\citep{baid2021brats}, LiverCT~\citep{bao2024bmad}, RESC~\citep{hu2019resc}, and
OCT2017~\citep{kermany2018oct}, with $K\in\{2,4,8,16\}$; we used the latter for
BUSI~\citep{aldhabyani2020busi}, CheXpert~\citep{irvin2019chexpert}, and
Kaggle BrainMRI~\citep{zhang2024mediclip,kagglebrainmri}, with $K\in\{4,8,16,32\}$. Because the
supervision mechanisms and support grids differ, we did not pool effect magnitudes, and only
$K\in\{4,8,16\}$ supports a statement across both regimes.
We used image AUROC (AC) as the primary metric and reported pixel AUROC (AS) only where valid masks
exist. Within each table block, conditions shared the dataset, support size, vision backbone, adapter, and
scoring protocol. We compared their three-seed means descriptively without treating closeness as a
formal equivalence test. For each control, we report the mean and largest absolute
correct-minus-control image-AUROC difference within each supervision regime; signed differences
only describe direction (positive favors correct prompts). The main tables show
CLIP ViT-L/14~\citep{radford2021clip}, the one configuration whose visual features and text anchors
share a pretrained joint space and in which semantic anchors are therefore most likely to help;
Appendix~\ref{app:full_results} provides the complete mean$\pm$sd factorials for all three vision
backbones while
Appendix~\ref{app:training_protocol} gives the training, checkpoint, and evaluation protocol.

For each industrial benchmark, we trained one full-shot CLIP ViT-L/14 detector using all normal
training images and synthetic anomalies. The 15 MVTec-AD~\citep{Bergmann2019MVTecAD} or 12
VisA~\citep{Zou2022VisA} category-specific normal--abnormal pairs shared its
category-agnostic linear adapters. We scored each sample only against its own pair and
macro-averaged pixel and image AUROC across categories. Appendices~\ref{app:industrial_prompts}
and~\ref{app:synthetic_anomalies} specify the exact industrial prompt banks and synthetic-anomaly
generator.

\subsection{Few-shot medical anchor controls: effective scale, not prompt content}
\label{sec:anchor_substitution}

If prompt meaning helps after retraining, correct prompts should outperform the two exact text-free
controls despite their matched effective scale. Pooled over three vision backbones, their mean
absolute image-AUROC gaps were only $0.16\%$ and $0.18\%$ with real-anomaly support
(maxima $1.2\%$ and $1.0\%$), and $0.22\%$ and $0.26\%$ with normal-only training (maxima $0.9\%$ and $0.7\%$),
for Gram-matched random and matched orthogonal anchors respectively. These small absolute gaps,
without a consistent advantage for correct prompts, show that finite training closely tracked
the theorem's attainable equivalence in the studied settings.
Table~\ref{tab:semantic_necessity_audit_clip} reports CLIP ViT-L/14;
Appendix Tables~\ref{tab:semantic_necessity_audit_real} and~\ref{tab:semantic_necessity_audit_syn} report all
three vision backbones.

The next comparison asks what happens when we change scale while keeping the anchors text-free.
Without scale matching, the mean
absolute gap increased to $1.46\%$ with real-anomaly support and $0.93\%$ with normal-only training
(maxima $11.8\%$ and $3.4\%$); the $11.8\%$ maximum occurred on DINOv3-L/16~\citep{simeoni2025dinov3} OCT2017 at $K=2$
($93.6\%$ versus $81.8\%$).
Restoring scale with the same orthogonal vectors restored the near-zero gaps above, identifying
scale rather than text removal as the source of this deterioration. The mismatched-dataset bank
changed the complete descriptions yet differed by at most $1.7\%$ and $1.3\%$.

\begin{table}[!tbp]
\centering
\caption{CLIP ViT-L/14 anchor substitution at $K{=}16$ in both supervision regimes.
\textbf{Left:} real-anomaly support ($16$ normal $+$ $16$ observed abnormal samples).
\textbf{Right:} normal-only ($16$ normal samples with online synthetic anomalies). Cells are
image-AUROC (AC) mean$\pm$sd over three seeds (\%). Section~\ref{sec:instruments} defines the five
anchor conditions. Pixel AUROC, the remaining support sizes, and the complete factorials for all
three vision backbones are in Appendix Tables~\ref{tab:semantic_necessity_audit_real}
and~\ref{tab:semantic_necessity_audit_syn}.}
\label{tab:semantic_necessity_audit_clip}
\begingroup
\small
\setlength{\tabcolsep}{4pt}
\renewcommand{\arraystretch}{0.95}
\begin{adjustbox}{max width=\textwidth,center}
\begin{tabular}{lccccccccc}
\toprule
& \multicolumn{5}{c}{\textbf{Real-anomaly support}} & \multicolumn{4}{c}{\textbf{Normal-only}} \\
\cmidrule(lr){2-6}\cmidrule(lr){7-10}
\textbf{Anchor condition} & \textbf{OCT2017} & \textbf{BraTS2021} & \textbf{LiverCT} & \textbf{RESC} &
\textbf{Avg} & \textbf{Kaggle BrainMRI} & \textbf{CheXpert} & \textbf{BUSI} & \textbf{Avg} \\
\midrule
Correct prompts & $99.6{\pm}0.1$ & $96.9{\pm}0.2$ & $91.6{\pm}1.1$ & $97.2{\pm}0.9$ & $96.3$ & $92.9{\pm}0.8$ & $69.3{\pm}1.6$ & $88.2{\pm}0.6$ & $83.4$ \\
Mismatched-dataset bank & $99.7{\pm}0.1$ & $96.9{\pm}0.2$ & $90.8{\pm}1.3$ & $97.2{\pm}0.9$ & $96.1$ & $92.9{\pm}0.7$ & $69.0{\pm}2.3$ & $87.7{\pm}0.9$ & $83.2$ \\
Gram-matched random & $99.6{\pm}0.1$ & $96.9{\pm}0.2$ & $91.5{\pm}1.2$ & $97.2{\pm}0.9$ & $96.3$ & $92.9{\pm}0.2$ & $69.5{\pm}1.3$ & $88.4{\pm}0.6$ & $83.6$ \\
Unmatched orthogonal & $99.2{\pm}0.0$ & $97.0{\pm}0.1$ & $89.0{\pm}0.3$ & $97.1{\pm}1.0$ & $95.6$ & $92.5{\pm}0.4$ & $69.8{\pm}1.3$ & $87.8{\pm}0.6$ & $83.4$ \\
Matched orthogonal & $99.6{\pm}0.1$ & $96.9{\pm}0.2$ & $91.6{\pm}1.4$ & $97.2{\pm}0.9$ & $96.3$ & $92.9{\pm}0.7$ & $69.4{\pm}1.7$ & $87.8{\pm}0.8$ & $83.4$ \\
\bottomrule
\end{tabular}
\end{adjustbox}
\endgroup
\end{table}

\subsection{Extensions across adapter families, residual detectors, and learned prompts}
\label{sec:adapter_family_results}

\ptitle{Rotation-closed adapter families.} We next ask whether the result depends on adapter
architecture. The approximately parameter-matched linear, MLP, and contextual Transformer adapters
differ in nonlinear capacity and patch interaction but remain rotation-closed
(Appendix~\ref{app:closure}). In regime averages across two vision backbones,
correct-prompt versus Gram-matched random gaps stayed within $0.5\%$ image AUROC, with no
consistent sign. Unmatched orthogonal anchors instead reduced contextual-Transformer performance by
$5.4\%$ on CLIP ViT-L/14 and $18.7\%$ on DINOv3-L/16 with real-anomaly support; matching their
scale brought both losses back to within $0.2\%$. Thus the exact controls agree across three
implementations that still satisfy the theorem's architectural premise. Appendix Tables~\ref{tab:adapter_family_dataset_ac_clip} and~\ref{tab:adapter_family_dataset_ac}
give the per-dataset results and parameter counts.

MVFA's prescribed prompts support three text-free controls: Gram-matched random anchors preserve
their geometry, unmatched orthogonal anchors change effective scale, and scale matching restores it. Because
MadCLIP learns its prompt pair, we compare its original implementation (learned prompts) with two text-free
arms on a frozen random orthonormal frame: one learns the cosine (learned scale), the other fixes it
at the prompt-like value $+0.986$ (prompt-scale).

\begin{table}[!tb]
\centering
\caption{External-detector anchor controls at $K{=}16$ (AUROC \%, mean$\pm$sd over three seeds).
AC is image and AS pixel AUROC; OCT2017 pixel AUROC is not reported, so it has no AS column and Avg AS
averages BraTS2021, LiverCT, and RESC. Checkpoints follow each released method's test-selection rule;
Appendix~\ref{app:external_models} gives the complete grids and provenance.}
\label{tab:external_k16}
\begingroup
\scriptsize
\setlength{\tabcolsep}{3.2pt}
\renewcommand{\arraystretch}{0.94}
\begin{adjustbox}{max width=\textwidth,center}
\begin{tabular}{llc cc cc cc cc}
\toprule
\multirow{2}{*}{\textbf{Method}} & \multirow{2}{*}{\textbf{Anchor condition}} & \textbf{OCT2017} &
\multicolumn{2}{c}{\textbf{BraTS2021}} & \multicolumn{2}{c}{\textbf{LiverCT}} &
\multicolumn{2}{c}{\textbf{RESC}} & \multicolumn{2}{c}{\textbf{Avg}} \\
\cmidrule(lr){3-3} \cmidrule(lr){4-5} \cmidrule(lr){6-7} \cmidrule(lr){8-9} \cmidrule(lr){10-11}
& & AC & AC & AS & AC & AS & AC & AS & AC & AS \\
\midrule
\multirow{4}{*}{MVFA} & Correct prompts & $99.9{\pm}0.0$ & $95.0{\pm}0.6$ & $97.5{\pm}0.9$ & $86.1{\pm}1.5$ & $98.8{\pm}1.4$ & $98.2{\pm}0.4$ & $99.4{\pm}0.0$ & $94.8$ & $98.6$ \\
 & Gram-matched random & $99.9{\pm}0.0$ & $94.1{\pm}1.5$ & $97.6{\pm}0.1$ & $83.7{\pm}2.5$ & $99.1{\pm}0.9$ & $98.0{\pm}0.2$ & $99.4{\pm}0.1$ & $93.9$ & $98.7$ \\
 & Unmatched orthogonal & $92.2{\pm}2.1$ & $66.8{\pm}4.0$ & $93.1{\pm}1.1$ & $63.0{\pm}5.2$ & $96.5{\pm}0.3$ & $72.9{\pm}9.1$ & $89.9{\pm}2.4$ & $73.7$ & $93.2$ \\
 & Matched orthogonal & $99.9{\pm}0.1$ & $94.5{\pm}1.1$ & $97.5{\pm}0.6$ & $85.5{\pm}1.8$ & $99.0{\pm}1.0$ & $98.0{\pm}0.3$ & $99.3{\pm}0.1$ & $94.5$ & $98.6$ \\
\midrule
\multirow{3}{*}{MadCLIP} & Learned prompts & $99.6{\pm}0.2$ & $96.0{\pm}0.5$ & $97.6{\pm}0.4$ & $90.5{\pm}0.9$ & $99.5{\pm}0.5$ & $98.6{\pm}0.2$ & $99.4{\pm}0.1$ & $96.2$ & $98.8$ \\
 & Learned scale (no text) & $99.7{\pm}0.1$ & $95.8{\pm}0.6$ & $97.7{\pm}0.6$ & $91.1{\pm}0.8$ & $99.1{\pm}0.7$ & $98.6{\pm}0.2$ & $99.4{\pm}0.1$ & $96.3$ & $98.7$ \\
 & Prompt-scale (no text) & $99.6{\pm}0.1$ & $96.1{\pm}0.5$ & $98.1{\pm}0.1$ & $87.7{\pm}3.4$ & $99.1{\pm}1.0$ & $97.7{\pm}0.2$ & $99.2{\pm}0.1$ & $95.3$ & $98.8$ \\
\bottomrule
\end{tabular}
\end{adjustbox}
\endgroup
\end{table}

\ptitle{MVFA: residual, multi-level visual adaptation.} MVFA~\citep{huang2024mvfa} inserts
multiple residual adapter branches into one CLIP image encoder and uses separate readouts for
detection and segmentation. Its prescribed anchors admit the same controls, although the residual
architecture itself is outside the exact closure result. At $K{=}16$, mean image AUROC was $94.8\%$
with prompts and $93.9\%$ with Gram-matched random anchors. Unmatched orthogonal anchors reduced it to $73.7\%$;
restoring only their scale recovered $94.5\%$ (Table~\ref{tab:external_k16}). This provides the clearest
external drop-and-recovery example, as empirical evidence beyond the theorem's guaranteed scope.

\ptitle{MadCLIP: learned prompts.} MadCLIP~\citep{shiri2025madclip} uses separate normal and
abnormal adaptation branches in one CLIP image encoder and learns the prompt pair, making its cosine
an outcome of training. A fixed-anchor replacement therefore cannot reproduce its internal protocol.
At $K{=}16$, learned prompts and the text-free learned scale obtained $96.2\%$ and $96.3\%$ mean image
AUROC. Across the full grid, their mean learned cosines were $-0.984$ and $-0.994$, respectively.
Both paths approach antipodal anchors, the maximum effective scale: given freedom to learn its
prompts, MadCLIP finds what a single scalar finds.
The prompt-scale arm reached $95.3\%$ at $K{=}16$ but was less stable at small $K$; Appendix
Tables~\ref{tab:mvfa_full} and~\ref{tab:madclip_full} report all image and pixel results.

\subsection{Unified multi-category detection: joint anchor geometry}
\label{sec:industrial_extension}

The industrial comparison asks whether a shared adapter needs more than each category's scale.
The corollary points to the relative directions of the category scoring axes, but a table of
pairwise angles would be hard to read. We summarize their spread with the participation-ratio
effective rank. It is near $1$ when the axes mostly lie along one direction and rises as they
occupy more directions. For the nonnegative eigenvalues
$\mu_1,\ldots,\mu_T$ of $\mM$, it is
\begin{equation}
    r_{\mathrm{eff}}(\mM)
    = \frac{\left(\sum_{i=1}^{T}\mu_i\right)^2}
           {\sum_{i=1}^{T}\mu_i^2},
    \qquad 1 \le r_{\mathrm{eff}}(\mM) \le \operatorname{rank}(\mM).
\end{equation}
The prompt axes were concentrated: $r_{\mathrm{eff}}=1.79$ on MVTec-AD and $1.78$ on VisA (Appendix Table~\ref{tab:industrial_anchor_scales}). Independent
rotations raised these values to $14.28$--$14.29$ of 15 and about $11.10$ of 12, while preserving
every category-specific scale. The shared rotation stayed within $0.1\%$ of correct prompts
on all four AUROC endpoints, consistent with its exact equivalence. Independent rotations instead
gained $5.6\%$ image and $4.4\%$ pixel AUROC on VisA and $1.2\%$ image and $3.0\%$ pixel AUROC on MVTec-AD
(Table~\ref{tab:industrial_anchor_conditions}). Thus, individual scales did not settle performance:
in these benchmarks the spread-out bank worked better for the shared adapter.
DINOv2-L/14~\citep{oquab2024dinov2} and
DINOv3-L/16 reproduced the result on both benchmarks, with gains up to $8.8\%$ image and $8.3\%$ pixel
AUROC on VisA; Appendix
Table~\ref{tab:industrial_anchor_conditions_backbones} reports all three vision backbones.

One plausible mechanism is reduced cross-category interference. Because the detector scores category
$t$ through $\inner{\hat\vu}{\vdelta_t}$, concentrating difference vectors in shared directions may make
category updates compete, whereas spreading them provides distinct readouts. We did not measure
gradient alignment, so this mechanism remains unverified.

\begin{table}[!tb]
\centering
\caption{Full-shot industrial results (\%), mean$\pm$sd over three training seeds
(and geometry seeds where applicable). AC and AS are image and pixel AUROC, category-macro means
over the $T$ categories; Appendix Table~\ref{tab:industrial_anchor_conditions_backbones} adds PRO,
AUPR, and F1max.}
\label{tab:industrial_anchor_conditions}
\begingroup
\scriptsize
\setlength{\tabcolsep}{6pt}
\renewcommand{\arraystretch}{1.0}
\centering
\begin{tabular}{lcccc}
\toprule
\multirow{2}{*}{\textbf{Anchor condition}} & \multicolumn{2}{c}{\textbf{VisA}} & \multicolumn{2}{c}{\textbf{MVTec-AD}} \\
\cmidrule(lr){2-3}\cmidrule(lr){4-5}
& AC & AS & AC & AS \\
\midrule
Correct prompts & $86.2{\pm}0.1$ & $91.7{\pm}0.1$ & $96.2{\pm}0.1$ & $93.6{\pm}0.1$ \\
Shared rotation & $86.3{\pm}0.1$ & $91.6{\pm}0.0$ & $96.2{\pm}0.1$ & $93.6{\pm}0.1$ \\
Independent rotations & $91.8{\pm}0.1$ & $96.1{\pm}0.1$ & $97.4{\pm}0.1$ & $96.6{\pm}0.0$ \\
\bottomrule
\end{tabular}
\endgroup
\end{table}
\subsection{Frozen zero-shot transfer: the boundary of anchor equivalence}
\label{sec:frozen_transfer}

The final test asks what happens when an adapter cannot follow changed anchors. At $K=16$, we
trained each CLIP ViT-L/14 adapter on a medical source dataset, froze it, and evaluated a target
without target-side training. We scored the same target images with either the source pair that
trained the adapter or the target dataset's pair. Their AUROC difference (source minus target) is
the anchor-swap cost, separate from the domain shift common to both scores
(Appendix Table~\ref{tab:transfer_anchor_swap}).

Retaining a text-free random source pair preserved transfer, whereas redrawing a random target
pair without changing the frozen adapter was costly. Relative to correct
prompts, retained random pairs differed by only $0.1\%$ and $0.3\%$ image AUROC on CLIP ViT-L/14
and $0.2\%$ and $0.5\%$ on DINOv3-L/16 in the real-anomaly-support and normal-only regimes.
Random-pair swaps cost $9.5\%$ and $19.1\%$ on CLIP and $19.6\%$ and $13.2\%$ on DINOv3; the
corresponding correct-prompt swaps cost $0.6\%$ and $0.8\%$ on CLIP, and $-1.3\%$ and $-0.3\%$ on
DINOv3. Correct target
prompts shared a text-encoder frame, whereas independently drawn random target pairs did not
(Appendix Table~\ref{tab:transfer_anchor_swap_dinov3}).

Reusing the random source pair preserves the trained readout without linguistic coordinates. A
frozen adapter cannot make the compensating change used in Theorem~\ref{thm:gram}; swapping pairs
can change direction and scale together, so these results establish the importance of keeping the
adapter and its anchor pair coupled, not which geometric component causes the swap cost.

\section{Conclusion}
\label{sec:conclusion}

\ptitle{Summary.}
We asked which information from the text anchors of a trained fixed-anchor vision--language
anomaly detector affects its representational capacity and observed performance: semantic content,
effective scale, or joint anchor geometry. The answer is geometry, not semantic content. For a
single anchor pair, a rotation-closed adapter reparameterizes the difference-vector direction away,
so scale-matched random or orthogonal anchors permit exactly the maps of correct prompts and
matched them after training, while changing only the scale opened gaps that restoring it closed.
The same pattern held in MVFA and in MadCLIP with a text-free learned scale. When one adapter
serves many categories, the joint geometry of their difference vectors matters: spreading the
near-collinear prompt axes apart, without adding meaning, improved both industrial benchmarks.
Under frozen transfer, a retained random frame transferred as well as prompts, because an adapter
and its anchors form a coupled coordinate system.

\ptitle{Practical implications.}
Within this family, curating expert prompt banks or searching for better wording and text encoders
cannot improve accuracy through meaning: a text-free anchor pair with matched or learned effective
scale suffices, and a unified detector can improve by spreading its category axes apart. A frozen
adapter should be deployed with the anchors it was trained with, and claims that prompt semantics
drive performance should control for effective scale.

\ptitle{Limitations.}
Our guarantees require prescribed fixed anchors, rotation-closed adapters, and cosine-based binary
maps (Proposition~\ref{prop:breaking}). MVFA and MadCLIP lie outside this premise, yet text-free
anchors matched their prompts there too, suggesting the absence of a semantic benefit extends
beyond the theorem; this evidence is empirical, not guaranteed. All anchors come from one frozen
text encoder, so the near-collinearity of category axes may be specific to it. We compared only
the reference effective scale with the larger one of orthogonal anchors, without sweeping it, so
the best scale remains unknown; we also did not measure the proposed interference mechanism.
Closeness between conditions is descriptive over three seeds, not a formal equivalence test.

\clearpage
\subsection*{Reproducibility statement}
All datasets are publicly available (Section~\ref{sec:experimental_setup}).
Appendix~\ref{app:training_protocol} and Table~\ref{tab:training_protocol} give the complete
training, checkpoint, and evaluation protocol, including seeds, and
Appendix~\ref{app:synthetic_anomalies} specifies both synthetic-anomaly generators.
Appendices~\ref{app:mismatched_dataset_prompts} and~\ref{app:industrial_prompts} list every prompt
sentence, Appendix~\ref{app:anchor_geometry_values} reports the resulting anchor cosines and
effective scales, and Section~\ref{sec:instruments} and Appendix~\ref{app:external_models} define
the random and orthogonal anchor constructions. Appendix~\ref{app:external_models} also documents
every change to MVFA and MadCLIP. Complete proofs are in Appendix~\ref{app:proofs}, and complete
result tables in Appendix~\ref{app:full_results}.

\subsection*{AI use statement}
Generative AI tools were used to assist with mathematical derivations and proof
checking, experimental methodology, research-code implementation and
verification, interpretation of results, and the drafting and polishing of the
manuscript text. The authors independently reviewed and verified all AI-assisted
content, executed the reported experiments, and take full responsibility for the
final manuscript and its claims.

\clearpage
\bibliographystyle{iclr2027_conference}
\bibliography{iclr2027_conference}

\clearpage
\appendix
\renewcommand{\floatpagefraction}{0.8}

\section{Theoretical Details and Proofs}
\label{app:proofs}

This appendix first verifies that the implemented loss and metrics factor through abnormal
probability maps. It then gives the Gram-to-rotation lemma, the binary proof, and the shared
multi-category proof. Prompt banks, anchor geometry, rotation closure, finite-training residuals,
and the proof of Proposition~\ref{prop:breaking} follow.

\subsection{From the implemented detector to abnormal probability maps}
\label{app:objective}

Theorem~\ref{thm:gram} concerns the binary probability maps actually consumed by training and
evaluation. We therefore first verify that the complete implementation factors through the
normal--abnormal logit difference.

\ptitle{Layer-wise scorer.} The vision backbone retains \(L=4\) intermediate layers. Layer \(\ell\) has its
own adapter \(g^{(\ell)}\), but every layer uses the same fixed unit anchor pair. With
\(\hat\vu^{(\ell)}_p=\normalize(g^{(\ell)}(\vz^{(\ell)}_p))\),
\begin{equation}
s^{(\ell)}_{p,c}
=
\tau\inner{\hat\vu^{(\ell)}_p}{\va_c},
\qquad c\in\{N,A\},
\label{eq:layer_scores}
\end{equation}
where \(\tau=100\) in training and \(\tau=20\) at evaluation, each multiplied by the configured
similarity-scale multiplier. Define
\begin{equation}
\Delta s^{(\ell)}_p=s^{(\ell)}_{p,A}-s^{(\ell)}_{p,N}.
\label{eq:patch_logit_difference}
\end{equation}

\ptitle{Upsampling.} The two logit maps are bilinearly resampled from the patch grid to the image
grid before softmax. If \(w_{qp}\) denotes the class-independent interpolation weight from patch
\(p\) to pixel \(q\), linearity gives
\begin{equation}
\widetilde{\Delta s}^{(\ell)}_q
=
\tilde s^{(\ell)}_{q,A}-\tilde s^{(\ell)}_{q,N}
=
\sum_{p=1}^{P}w_{qp}\Delta s^{(\ell)}_p .
\label{eq:upsample_commutes}
\end{equation}

\ptitle{Binary probability.} A two-class softmax is a logistic function of this difference:
\begin{equation}
p^{(\ell)}_q
=
\frac{e^{\tilde s^{(\ell)}_{q,A}}}
{e^{\tilde s^{(\ell)}_{q,N}}+e^{\tilde s^{(\ell)}_{q,A}}}
=
\sigma\!\left(\widetilde{\Delta s}^{(\ell)}_q\right),
\qquad
p^{(\ell)}_{q,N}=1-p^{(\ell)}_q .
\label{eq:softmax_is_logistic}
\end{equation}
Thus the normal channel carries no information beyond the abnormal map.

\ptitle{Training loss.} For target mask \(y\), the focal term uses the probability assigned to the
target,
\begin{equation}
\pi^{(\ell)}_q
=
p^{(\ell)}_q y_q+\bigl(1-p^{(\ell)}_q\bigr)(1-y_q),
\qquad
\cL_{\mathrm{foc}}
=
\frac{1}{BR^2}\sum_q
-\bigl(1-\pi^{(\ell)}_q\bigr)^\gamma\log\pi^{(\ell)}_q ,
\label{eq:focal_term}
\end{equation}
with \(\gamma=2\) and the implemented numerical clamp. The Dice term reads the abnormal map
directly:
\begin{equation}
\cL_{\mathrm{dice}}
=
1-\frac{1}{B}\sum_{b=1}^{B}
\frac{2\sum_q p^{(\ell)}_{b,q}y_{b,q}+1}
{\sum_q p^{(\ell)}_{b,q}+\sum_q y_{b,q}+1}.
\label{eq:dice_term}
\end{equation}
Consequently,
\begin{equation}
\cL
=
\sum_{\ell=1}^{L}
\left(\cL_{\mathrm{foc}}+\cL_{\mathrm{dice}}\right)
=
\Phi\!\left(\Delta s^{(1)},\ldots,\Delta s^{(L)};y\right)
\label{eq:loss_factors}
\end{equation}
for a fixed functional \(\Phi\). Neither loss term observes the two logits separately.

\ptitle{Evaluation.} The layer maps are averaged and the image score is the top-fraction mean,
\begin{equation}
\mathrm{score}(x)
=
\frac{1}{k}\sum\TopK_q
\left(\frac{1}{L}\sum_{\ell=1}^{L}p^{(\ell)}_q\right),
\qquad
k=\max\!\left(1,\left\lfloor\alpha R^2\right\rfloor\right).
\label{eq:image_score}
\end{equation}
Pixel AUROC depends on the averaged maps and masks; image AUROC depends on these scores and image
labels. Both therefore factor through the same abnormal probability maps as the training objective.

\subsection{Gram-to-rotation lemma}
\label{app:gram_lemma}

\begin{lemma}[Equal Gram matrices give one orthogonal map]
\label{lem:gram_rotation}
Let \(\mX,\mX'\in\R^{m\times d}\) have corresponding rows and satisfy
\(\mX\mX^\top=\mX'{\mX'}^\top\). Then some \(\mQ\in O(d)\) maps every row vector of \(\mX\) to the
corresponding row vector of \(\mX'\).
\end{lemma}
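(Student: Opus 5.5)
The plan is to define the orthogonal map first on the row space of \(\mX\), extend it by an arbitrary isometry on the orthogonal complement, and check that the result sends rows to rows. Write \(\vx_i,\vx'_i\in\R^d\) for the \(i\)-th rows and let \(V=\operatorname{span}\{\vx_i\}\) and \(V'=\operatorname{span}\{\vx'_i\}\). On \(V\) I will define a linear map \(\mathcal{Q}_0:V\to V'\) by
\[
\mathcal{Q}_0\Bigl(\sum_i c_i\vx_i\Bigr)=\sum_i c_i\vx'_i .
\]

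The crux, and the step needing care, is that \(\mathcal{Q}_0\) is well defined. Different coefficient vectors \(\vm\in\R^m\) may represent the same element of \(V\), so it suffices to show that \(\mX^\top\vm=0\) forces \({\mX'}^\top\vm=0\). This follows directly from the Gram hypothesis:
\[
\norm{{\mX'}^\top\vm}^2=\vm^\top\mX'{\mX'}^\top\vm=\vm^\top\mX\mX^\top\vm=\norm{\mX^\top\vm}^2 .
\]
The same identity shows that \(\mathcal{Q}_0\) preserves norms, and hence, by polarization, inner products. So \(\mathcal{Q}_0\) is a linear isometry from \(V\) onto \(V'\), which in particular gives \(\dim V=\dim V'\).

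Next I will extend \(\mathcal{Q}_0\) to all of \(\R^d\). Since \(\dim V^\perp=\dim {V'}^\perp=d-\dim V\), I choose orthonormal bases of \(V^\perp\) and \({V'}^\perp\) and map one onto the other. The direct sum of the two isometries is an orthogonal map of \(\R^d\), represented by some \(\mQ\in O(d)\). By construction \(\mQ\vx_i=\vx'_i\) for every \(i\).

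If the paper's convention wants \(\mQ\) to act on row vectors, the relation is \(\mX\mQ^\top=\mX'\), which is equivalent. An alternative check is via the SVD or polar decomposition, but the direct construction above avoids pseudoinverse bookkeeping and already covers the rank-deficient case.
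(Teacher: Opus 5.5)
Your proposal is correct and follows essentially the same route as the paper: define the linear map on the row span by the row correspondence, use the Gram equality to get well-definedness and isometry, then complete orthonormal bases to extend it to an orthogonal \(\mQ\) on \(\R^d\). Your kernel argument, \(\norm{{\mX'}^\top\vm}=\norm{\mX^\top\vm}\), simply writes out the well-definedness step that the paper only asserts.
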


\begin{proof}
The row correspondence defines a linear map from \(\operatorname{span}\{\vx_i\}\) to
\(\operatorname{span}\{\vx'_i\}\). Equality of the Gram matrices makes it well-defined and
inner-product preserving: every two linear combinations have the same inner product before and
after the map. It is therefore an isometry between the two row spans. Extend orthonormal bases of
those spans to bases of \(\R^d\); mapping the completed bases gives an orthogonal \(\mQ\).
\end{proof}

\subsection{Complete proof of Theorem~\ref{thm:gram}}
\label{app:gram}

Let \(\vdelta=\va_A-\va_N\), \(\vdelta'=\va'_A-\va'_N\), and assume first that both are nonzero. The score
definition in~\eqref{eq:layer_scores} gives
\begin{equation}
\Delta s^{(\ell)}_p
=
\tau\inner{\hat\vu^{(\ell)}_p}{\vdelta}
=
\tau\norm{\vdelta}\inner{\hat\vu^{(\ell)}_p}{\hat\vdelta}.
\label{eq:readout}
\end{equation}
Apply Lemma~\ref{lem:gram_rotation} to the one-row matrices \(\hat\vdelta^\top\) and
\(\widehat{\vdelta'}^{\top}\). It supplies \(\mQ\in O(d)\) with
\begin{equation}
\mQ\hat\vdelta=\hat\vdelta'.
\label{eq:anchor_rotation}
\end{equation}
For every layer set \(g'^{(\ell)}=\mQ\circ g^{(\ell)}\). Rotation closure makes all transformed
adapters admissible, and
\begin{equation}
\tau'\norm{\vdelta'}\inner{\mQ\hat\vu^{(\ell)}_p}{\hat\vdelta'}
=
\tau\norm{\vdelta}\inner{\hat\vu^{(\ell)}_p}{\hat\vdelta}
\end{equation}
by~\eqref{eq:matched_effective_logit_scale}. Thus corresponding adapters produce identical
logit-difference fields. Equations~\plaineqref{eq:upsample_commutes}--\plaineqref{eq:image_score} then give
identical probability maps, losses, and evaluation scores. Applying \(\mQ^\top\) proves the reverse
inclusion of attainable maps. If both difference vectors vanish, both binary probabilities are
identically \(1/2\); equality in~\eqref{eq:matched_effective_logit_scale} excludes the case in which
only one vanishes.

For unit anchors,
\begin{equation}
\norm{\vdelta}^2
=
\norm{\va_A}^2+\norm{\va_N}^2-2\inner{\va_A}{\va_N}
=
2-2\rho,
\end{equation}
which yields \(\lambda_{\mathrm{eff}}=\tau\sqrt{2-2\rho}\). An orthogonal pair scaled by a multiplier
\(c\) matches the correct pair when
\begin{equation}
c\tau\sqrt{2}
=
\tau\sqrt{2-2\rho_{\mathrm{correct}}}
\qquad\Longleftrightarrow\qquad
c=\sqrt{1-\rho_{\mathrm{correct}}}.
\label{eq:matched_multiplier}
\end{equation}
The implementation applies \(c\) in training and evaluation alike. Normalizing the adapter output
is essential: it prevents the adapter from absorbing \(\norm{\vdelta}\) by rescaling itself.

\subsection{Unified multi-category specialization}
\label{app:multicategory}

Category \(t\) has unit anchors \((\va_t^N,\va_t^A)\), difference vector
\(\vdelta_t=\va_t^A-\va_t^N\), and known category label. The shared detector selects only that pair:
\begin{equation}
p_{t,A}(\vz;g)
=
\sigma\!\left(
\tau\inner{\normalize(g(\vz))}{\vdelta_t}
\right).
\label{eq:category_conditioned_score}
\end{equation}
The category selects a readout pair, not a different adapter. Let \(\mD\) stack the
\(\vdelta_t^\top\) rows and let \(\mM=\mD\mD^\top\).

\ptitle{Proof of Corollary~\ref{cor:multicategory}.} If
\(\mD\mD^\top=\mD'{\mD'}^\top\), Lemma~\ref{lem:gram_rotation} supplies one \(\mQ\in O(d)\) with
\(\vdelta'_t=\mQ\vdelta_t\) for every category. Compose the same \(\mQ\) into every layer-wise adapter.
For a category-\(t\) sample,
\begin{equation}
\inner{\mQ\normalize(g(\vz))}{\vdelta'_t}
=
\inner{\mQ\normalize(g(\vz))}{\mQ\vdelta_t}
=
\inner{\normalize(g(\vz))}{\vdelta_t}.
\end{equation}
Thus every selected binary probability is preserved. The loss and evaluation factorization in
Appendix~\ref{app:objective} completes the proof. Equality of \(\mM\) is also necessary and
sufficient for the two finite difference-vector configurations to lie in the same global
orthogonal orbit. We make no converse claim about equality of hypothesis classes for an arbitrary
adapter family and input domain.

\ptitle{Relation to the full anchor Gram.} Stack the anchors as rows
\[
\mA=[\va_1^N;\va_1^A;\ldots;\va_T^N;\va_T^A],
\qquad
\mG=\mA\mA^\top,
\]
and let \(\mR=\mI_T\otimes[-1\;\;1]\). Then
\begin{equation}
\mD=\mR\mA,
\qquad
\mM=\mR\mG\mR^\top.
\label{eq:difference_from_full_gram}
\end{equation}
Hence an exact full-Gram match implies an exact difference-Gram match, but the converse fails.
Writing
\[
\va_t^N=\vm_t-\vdelta_t/2,
\qquad
\va_t^A=\vm_t+\vdelta_t/2,
\qquad
\inner{\vm_t}{\vdelta_t}=0,
\]
shows why: rotating a midpoint \(\vm_t\), while preserving its norm, inside the orthogonal
complement of \(\vdelta_t\) changes
entries of \(\mG\) while leaving every \(\vdelta_t\), probability, loss, and metric unchanged. The full
\(2T\times2T\) anchor Gram is therefore sufficient but not necessary. The exact statement used in
this paper is that \(\mM\) completely characterizes the $T$ difference vectors up to one global
orthogonal rotation.

\ptitle{Shared-rotation control.} The shared-rotation condition applies one seeded \(\mQ\) to every correct anchor. It consequently
rotates every \(\vdelta_t\) by the same \(\mQ\) and preserves all of \(\mM\). Correct prompts versus this
shared rotation measures only the finite-training residual allowed by Appendix~\ref{app:residual}.

\ptitle{Independent-rotation control.} For each category, the independent-rotation condition
constructs a seeded random pair with the same
\(\rho_t=\inner{\va_t^N}{\va_t^A}\). Equivalently, some category-specific \(\mQ_t\) maps the correct
pair to the replacement pair. It preserves
\[
M_{tt}=\norm{\vdelta_t}^2=2-2\rho_t
\]
but generally changes \(M_{ts}\) for \(t\ne s\). Shared rotation versus independent rotations
therefore changes the joint geometry while retaining every category's effective
scale. Neither randomized condition is independent of text-derived geometry: the shared condition
inherits the full reference joint geometry, and the independent condition inherits each
\(\rho_t\).

\ptitle{Seed scope of the current result.} Every industrial condition uses three training
seeds. In each randomized condition, the anchor seed equals the training seed, so the runs sample
three shared rotations or three independently rotated sets of difference vectors. Correct-prompt anchors are
deterministic and vary only through training.

\subsection{Clinical prompt banks and mismatched-dataset intervention}
\label{app:mismatched_dataset_prompts}

All seven medical datasets use the same fixed prompt protocol: 14 normal and 14
abnormal complete sentences, with the same ordered grammatical scaffolds for every dataset. Only
the anatomy, modality, and dataset-verified binary finding change. Each bank encodes the slice- or
image-level binary task of its own dataset, following the
benchmark constructions and their underlying datasets
\citep{bao2024bmad,zhang2024mediclip,baid2021brats,landman2015btcv,bilic2023lits,hu2019resc,
kermany2018oct,aldhabyani2020busi,irvin2019chexpert,kagglebrainmri}.
LiverCT follows the BMAD construction, which combines BTCV and LiTS slices
\citep{bao2024bmad,landman2015btcv,bilic2023lits}.
In particular, a normal slice is defined by the absence of the evaluated annotation on that image;
we do not infer that it came from a healthy patient. Abnormal subtypes receive equal prompt weight,
not dataset-frequency weight.

The medical semantic stress test replaces the complete prompt specification of target dataset $t$
by the prompt specification of a fixed donor dataset $w(t)$. Let
$\mathcal{P}_t^N$ and $\mathcal{P}_t^A$ denote all normal and abnormal descriptions used for the correct target
condition. The mismatched-dataset condition uses $\mathcal{P}_{w(t)}^N$ and $\mathcal{P}_{w(t)}^A$ without editing
them to mention the target. It therefore changes the organ, imaging modality, normality definition,
abnormality definition, state descriptions, and any dataset-specific templates together. It is not
implemented by replacing one anatomy noun inside the target templates.

Both donor banks are encoded by the same frozen text encoder used for the correct condition. The
normal embeddings are averaged and normalized to form $\va_N^{\mathrm{mis}}$, and the abnormal
embeddings are averaged and normalized to form $\va_A^{\mathrm{mis}}$. The anchors remain fixed
throughout the run, and the visual adapter is trained on the target dataset exactly as in the other
anchor conditions. Thus only the fixed prompt-derived anchor pair changes; images, supervision,
support owners, optimization settings, and evaluation remain target-specific.

Table~\ref{tab:mismatched_dataset_mapping} specifies the donor mapping a priori. It is a
derangement, uses each dataset once as a donor, and pairs every target with a different anatomy and
imaging modality. For example, Kaggle BrainMRI uses the complete LiverCT prompt banks: its anchors
describe normal and abnormal liver CT appearances rather than normal and abnormal brain MR
appearances.

\begin{table}[htbp]
\centering
\caption{Fixed target-to-donor mapping for the mismatched-dataset prompt-bank intervention. The
donor contributes its complete normal and abnormal description banks; no donor prompt is rewritten
to mention the target dataset.}
\label{tab:mismatched_dataset_mapping}
\begingroup
\footnotesize
\setlength{\tabcolsep}{5pt}
\renewcommand{\arraystretch}{1.15}
\begin{tabular}{lll}
\toprule
Target dataset & Target domain & Donor prompt bank \\
\midrule
BraTS2021 & brain MRI & BUSI (breast ultrasound) \\
LiverCT & liver CT & OCT2017 (retinal OCT) \\
RESC & retinal OCT & CheXpert (chest radiograph) \\
OCT2017 & retinal OCT & Kaggle BrainMRI (brain MRI) \\
BUSI & breast ultrasound & RESC (retinal OCT) \\
CheXpert & chest radiograph & BraTS2021 (brain MRI) \\
Kaggle BrainMRI & brain MRI & LiverCT (liver CT) \\
\bottomrule
\end{tabular}
\endgroup
\end{table}

This condition deliberately does not match
$\rho=\inner{\va_N^{\mathrm{mis}}}{\va_A^{\mathrm{mis}}}$ to the correct pair. It is therefore
not an exact probability-map control and cannot by itself separate semantics from effective
scale. Its role is narrower: it tests whether training with a grossly incorrect, internally coherent
textual specification produces a repeatable performance change. The exact semantic-necessity test
continues to be correct prompts versus the text-free controls matched in effective scale.

The following tables list the exact expanded banks used by both training and evaluation.

\begingroup
\footnotesize
\setlength{\tabcolsep}{5pt}
\renewcommand{\arraystretch}{1.15}
\begin{longtable}{@{}>{\raggedright\arraybackslash}p{0.485\linewidth}>{\raggedright\arraybackslash}p{0.485\linewidth}@{}}
\caption{Exact prompts for BraTS2021.}\label{tab:clinical-prompts-brain-ad}\\
\toprule
\textbf{Normal} & \textbf{Abnormal} \\
\midrule
\endfirsthead
\multicolumn{2}{c}{\tablename\ \thetable{} -- continued} \\
\toprule
\textbf{Normal} & \textbf{Abnormal} \\
\midrule
\endhead
\midrule
\multicolumn{2}{r}{Continued on next page} \\
\endfoot
\bottomrule
\endlastfoot
An axial T2-FLAIR brain MRI slice of the brain showing absence of any annotated glioma region, including edema, necrotic or non-enhancing tumor core, and enhancing tumor. & An axial T2-FLAIR brain MRI slice of the brain showing annotated brain tumor tissue. \\
A clinical axial T2-FLAIR brain MRI slice of the brain showing absence of any annotated glioma region, including edema, necrotic or non-enhancing tumor core, and enhancing tumor. & A clinical axial T2-FLAIR brain MRI slice of the brain showing a tumor-bearing axial brain slice. \\
A diagnostic axial T2-FLAIR brain MRI slice of the brain demonstrating absence of any annotated glioma region, including edema, necrotic or non-enhancing tumor core, and enhancing tumor. & A diagnostic axial T2-FLAIR brain MRI slice of the brain demonstrating glioma tissue. \\
This axial T2-FLAIR brain MRI slice of the brain shows absence of any annotated glioma region, including edema, necrotic or non-enhancing tumor core, and enhancing tumor. & This axial T2-FLAIR brain MRI slice of the brain shows a whole-tumor region. \\
This axial T2-FLAIR brain MRI slice of the brain demonstrates absence of any annotated glioma region, including edema, necrotic or non-enhancing tumor core, and enhancing tumor. & This axial T2-FLAIR brain MRI slice of the brain demonstrates peritumoral edema. \\
The brain on this axial T2-FLAIR brain MRI slice shows absence of any annotated glioma region, including edema, necrotic or non-enhancing tumor core, and enhancing tumor. & The brain on this axial T2-FLAIR brain MRI slice shows edema-associated T2-FLAIR hyperintensity. \\
The appearance of the brain on this axial T2-FLAIR brain MRI slice is consistent with absence of any annotated glioma region, including edema, necrotic or non-enhancing tumor core, and enhancing tumor. & The appearance of the brain on this axial T2-FLAIR brain MRI slice is consistent with a necrotic tumor core. \\
The primary finding on this axial T2-FLAIR brain MRI slice is absence of any annotated glioma region, including edema, necrotic or non-enhancing tumor core, and enhancing tumor. & The primary finding on this axial T2-FLAIR brain MRI slice is a non-enhancing tumor core. \\
The evaluated region of the brain on this axial T2-FLAIR brain MRI slice shows absence of any annotated glioma region, including edema, necrotic or non-enhancing tumor core, and enhancing tumor. & The evaluated region of the brain on this axial T2-FLAIR brain MRI slice shows an enhancing tumor region. \\
Image-level assessment of this axial T2-FLAIR brain MRI slice indicates absence of any annotated glioma region, including edema, necrotic or non-enhancing tumor core, and enhancing tumor. & Image-level assessment of this axial T2-FLAIR brain MRI slice indicates tumor core with surrounding edema. \\
The observable pattern on this axial T2-FLAIR brain MRI slice is consistent with absence of any annotated glioma region, including edema, necrotic or non-enhancing tumor core, and enhancing tumor. & The observable pattern on this axial T2-FLAIR brain MRI slice is consistent with enhancing and non-enhancing tumor tissue. \\
The scan appearance demonstrates absence of any annotated glioma region, including edema, necrotic or non-enhancing tumor core, and enhancing tumor. & The scan appearance demonstrates one or more BraTS tumor subregions. \\
An example of axial T2-FLAIR brain MRI slice showing absence of any annotated glioma region, including edema, necrotic or non-enhancing tumor core, and enhancing tumor. & An example of axial T2-FLAIR brain MRI slice showing a focal glioma-related abnormality. \\
This image is interpreted as showing absence of any annotated glioma region, including edema, necrotic or non-enhancing tumor core, and enhancing tumor on axial T2-FLAIR brain MRI slice. & This image is interpreted as showing a glioma lesion comprising edema, core, or enhancing tumor on axial T2-FLAIR brain MRI slice. \\
\end{longtable}
\endgroup

\begingroup
\footnotesize
\setlength{\tabcolsep}{5pt}
\renewcommand{\arraystretch}{1.15}
\begin{longtable}{@{}>{\raggedright\arraybackslash}p{0.485\linewidth}>{\raggedright\arraybackslash}p{0.485\linewidth}@{}}
\caption{Exact prompts for LiverCT.}\label{tab:clinical-prompts-liver-ad}\\
\toprule
\textbf{Normal} & \textbf{Abnormal} \\
\midrule
\endfirsthead
\multicolumn{2}{c}{\tablename\ \thetable{} -- continued} \\
\toprule
\textbf{Normal} & \textbf{Abnormal} \\
\midrule
\endhead
\midrule
\multicolumn{2}{r}{Continued on next page} \\
\endfoot
\bottomrule
\endlastfoot
An axial liver CT slice of the liver showing absence of any annotated primary or metastatic liver tumor on the slice. & An axial liver CT slice of the liver showing an annotated liver tumor. \\
A clinical axial liver CT slice of the liver showing absence of any annotated primary or metastatic liver tumor on the slice. & A clinical axial liver CT slice of the liver showing a neoplastic liver lesion. \\
A diagnostic axial liver CT slice of the liver demonstrating absence of any annotated primary or metastatic liver tumor on the slice. & A diagnostic axial liver CT slice of the liver demonstrating a primary liver tumor. \\
This axial liver CT slice of the liver shows absence of any annotated primary or metastatic liver tumor on the slice. & This axial liver CT slice of the liver shows hepatocellular carcinoma. \\
This axial liver CT slice of the liver demonstrates absence of any annotated primary or metastatic liver tumor on the slice. & This axial liver CT slice of the liver demonstrates cholangiocarcinoma. \\
The liver on this axial liver CT slice shows absence of any annotated primary or metastatic liver tumor on the slice. & The liver on this axial liver CT slice shows a metastatic liver lesion. \\
The appearance of the liver on this axial liver CT slice is consistent with absence of any annotated primary or metastatic liver tumor on the slice. & The appearance of the liver on this axial liver CT slice is consistent with a colorectal cancer metastasis in the liver. \\
The primary finding on this axial liver CT slice is absence of any annotated primary or metastatic liver tumor on the slice. & The primary finding on this axial liver CT slice is a breast cancer metastasis in the liver. \\
The evaluated region of the liver on this axial liver CT slice shows absence of any annotated primary or metastatic liver tumor on the slice. & The evaluated region of the liver on this axial liver CT slice shows a lung cancer metastasis in the liver. \\
Image-level assessment of this axial liver CT slice indicates absence of any annotated primary or metastatic liver tumor on the slice. & Image-level assessment of this axial liver CT slice indicates a hypodense liver tumor. \\
The observable pattern on this axial liver CT slice is consistent with absence of any annotated primary or metastatic liver tumor on the slice. & The observable pattern on this axial liver CT slice is consistent with a hyperdense liver tumor. \\
The scan appearance demonstrates absence of any annotated primary or metastatic liver tumor on the slice. & The scan appearance demonstrates a small focal liver tumor. \\
An example of axial liver CT slice showing absence of any annotated primary or metastatic liver tumor on the slice. & An example of axial liver CT slice showing multiple liver tumors. \\
This image is interpreted as showing absence of any annotated primary or metastatic liver tumor on the slice on axial liver CT slice. & This image is interpreted as showing one or more liver tumors with variable size and appearance on axial liver CT slice. \\
\end{longtable}
\endgroup

\begingroup
\footnotesize
\setlength{\tabcolsep}{5pt}
\renewcommand{\arraystretch}{1.15}
\begin{longtable}{@{}>{\raggedright\arraybackslash}p{0.485\linewidth}>{\raggedright\arraybackslash}p{0.485\linewidth}@{}}
\caption{Exact prompts for RESC.}\label{tab:clinical-prompts-retina-resc-ad}\\
\toprule
\textbf{Normal} & \textbf{Abnormal} \\
\midrule
\endfirsthead
\multicolumn{2}{c}{\tablename\ \thetable{} -- continued} \\
\toprule
\textbf{Normal} & \textbf{Abnormal} \\
\midrule
\endhead
\midrule
\multicolumn{2}{r}{Continued on next page} \\
\endfoot
\bottomrule
\endlastfoot
A retinal OCT B-scan of the retina showing absence of any annotated retinal-edema lesion, including subretinal fluid and pigment epithelial detachment. & A retinal OCT B-scan of the retina showing an annotated retinal-edema lesion. \\
A clinical retinal OCT B-scan of the retina showing absence of any annotated retinal-edema lesion, including subretinal fluid and pigment epithelial detachment. & A clinical retinal OCT B-scan of the retina showing retinal edema. \\
A diagnostic retinal OCT B-scan of the retina demonstrating absence of any annotated retinal-edema lesion, including subretinal fluid and pigment epithelial detachment. & A diagnostic retinal OCT B-scan of the retina demonstrating an edematous retinal region. \\
This retinal OCT B-scan of the retina shows absence of any annotated retinal-edema lesion, including subretinal fluid and pigment epithelial detachment. & This retinal OCT B-scan of the retina shows edema-associated retinal thickening. \\
This retinal OCT B-scan of the retina demonstrates absence of any annotated retinal-edema lesion, including subretinal fluid and pigment epithelial detachment. & This retinal OCT B-scan of the retina demonstrates subretinal fluid. \\
The retina on this retinal OCT B-scan shows absence of any annotated retinal-edema lesion, including subretinal fluid and pigment epithelial detachment. & The retina on this retinal OCT B-scan shows a localized pocket of subretinal fluid. \\
The appearance of the retina on this retinal OCT B-scan is consistent with absence of any annotated retinal-edema lesion, including subretinal fluid and pigment epithelial detachment. & The appearance of the retina on this retinal OCT B-scan is consistent with fluid accumulation beneath the retina. \\
The primary finding on this retinal OCT B-scan is absence of any annotated retinal-edema lesion, including subretinal fluid and pigment epithelial detachment. & The primary finding on this retinal OCT B-scan is pigment epithelial detachment. \\
The evaluated region of the retina on this retinal OCT B-scan shows absence of any annotated retinal-edema lesion, including subretinal fluid and pigment epithelial detachment. & The evaluated region of the retina on this retinal OCT B-scan shows a localized pigment epithelial detachment. \\
Image-level assessment of this retinal OCT B-scan indicates absence of any annotated retinal-edema lesion, including subretinal fluid and pigment epithelial detachment. & Image-level assessment of this retinal OCT B-scan indicates elevation of the retinal pigment epithelium. \\
The observable pattern on this retinal OCT B-scan is consistent with absence of any annotated retinal-edema lesion, including subretinal fluid and pigment epithelial detachment. & The observable pattern on this retinal OCT B-scan is consistent with subretinal fluid with pigment epithelial detachment. \\
The scan appearance demonstrates absence of any annotated retinal-edema lesion, including subretinal fluid and pigment epithelial detachment. & The scan appearance demonstrates one or more edema-related lesions. \\
An example of retinal OCT B-scan showing absence of any annotated retinal-edema lesion, including subretinal fluid and pigment epithelial detachment. & An example of retinal OCT B-scan showing retinal-layer distortion caused by edema. \\
This image is interpreted as showing absence of any annotated retinal-edema lesion, including subretinal fluid and pigment epithelial detachment on retinal OCT B-scan. & This image is interpreted as showing a fluid-related abnormality disrupting normal retinal structure on retinal OCT B-scan. \\
\end{longtable}
\endgroup

\begingroup
\footnotesize
\setlength{\tabcolsep}{5pt}
\renewcommand{\arraystretch}{1.15}
\begin{longtable}{@{}>{\raggedright\arraybackslash}p{0.485\linewidth}>{\raggedright\arraybackslash}p{0.485\linewidth}@{}}
\caption{Exact prompts for OCT2017.}\label{tab:clinical-prompts-retina-oct2017-ad}\\
\toprule
\textbf{Normal} & \textbf{Abnormal} \\
\midrule
\endfirsthead
\multicolumn{2}{c}{\tablename\ \thetable{} -- continued} \\
\toprule
\textbf{Normal} & \textbf{Abnormal} \\
\midrule
\endhead
\midrule
\multicolumn{2}{r}{Continued on next page} \\
\endfoot
\bottomrule
\endlastfoot
A retinal OCT B-scan of the retina showing a disease-free retinal appearance with no choroidal neovascularization, diabetic macular edema, or drusen. & A retinal OCT B-scan of the retina showing choroidal neovascularization. \\
A clinical retinal OCT B-scan of the retina showing a disease-free retinal appearance with no choroidal neovascularization, diabetic macular edema, or drusen. & A clinical retinal OCT B-scan of the retina showing CNV-related neovascular change. \\
A diagnostic retinal OCT B-scan of the retina demonstrating a disease-free retinal appearance with no choroidal neovascularization, diabetic macular edema, or drusen. & A diagnostic retinal OCT B-scan of the retina demonstrating a choroidal neovascular membrane. \\
This retinal OCT B-scan of the retina shows a disease-free retinal appearance with no choroidal neovascularization, diabetic macular edema, or drusen. & This retinal OCT B-scan of the retina shows diabetic macular edema. \\
This retinal OCT B-scan of the retina demonstrates a disease-free retinal appearance with no choroidal neovascularization, diabetic macular edema, or drusen. & This retinal OCT B-scan of the retina demonstrates DME-related retinal thickening. \\
The retina on this retinal OCT B-scan shows a disease-free retinal appearance with no choroidal neovascularization, diabetic macular edema, or drusen. & The retina on this retinal OCT B-scan shows diabetic macular edema with intraretinal fluid. \\
The appearance of the retina on this retinal OCT B-scan is consistent with a disease-free retinal appearance with no choroidal neovascularization, diabetic macular edema, or drusen. & The appearance of the retina on this retinal OCT B-scan is consistent with drusen deposits. \\
The primary finding on this retinal OCT B-scan is a disease-free retinal appearance with no choroidal neovascularization, diabetic macular edema, or drusen. & The primary finding on this retinal OCT B-scan is multiple drusen beneath the retinal pigment epithelium. \\
The evaluated region of the retina on this retinal OCT B-scan shows a disease-free retinal appearance with no choroidal neovascularization, diabetic macular edema, or drusen. & The evaluated region of the retina on this retinal OCT B-scan shows an early age-related macular degeneration appearance with drusen. \\
Image-level assessment of this retinal OCT B-scan indicates a disease-free retinal appearance with no choroidal neovascularization, diabetic macular edema, or drusen. & Image-level assessment of this retinal OCT B-scan indicates a retinal disease pattern from CNV, DME, or drusen. \\
The observable pattern on this retinal OCT B-scan is consistent with a disease-free retinal appearance with no choroidal neovascularization, diabetic macular edema, or drusen. & The observable pattern on this retinal OCT B-scan is consistent with a diseased retinal OCT appearance. \\
The scan appearance demonstrates a disease-free retinal appearance with no choroidal neovascularization, diabetic macular edema, or drusen. & The scan appearance demonstrates retinal fluid or edema associated with DME. \\
An example of retinal OCT B-scan showing a disease-free retinal appearance with no choroidal neovascularization, diabetic macular edema, or drusen. & An example of retinal OCT B-scan showing abnormal retinal structure associated with CNV, DME, or drusen. \\
This image is interpreted as showing a disease-free retinal appearance with no choroidal neovascularization, diabetic macular edema, or drusen on retinal OCT B-scan. & This image is interpreted as showing one of the OCT2017 CNV, DME, or drusen disease classes on retinal OCT B-scan. \\
\end{longtable}
\endgroup

\begingroup
\footnotesize
\setlength{\tabcolsep}{5pt}
\renewcommand{\arraystretch}{1.15}
\begin{longtable}{@{}>{\raggedright\arraybackslash}p{0.485\linewidth}>{\raggedright\arraybackslash}p{0.485\linewidth}@{}}
\caption{Exact prompts for BUSI.}\label{tab:clinical-prompts-busi}\\
\toprule
\textbf{Normal} & \textbf{Abnormal} \\
\midrule
\endfirsthead
\multicolumn{2}{c}{\tablename\ \thetable{} -- continued} \\
\toprule
\textbf{Normal} & \textbf{Abnormal} \\
\midrule
\endhead
\midrule
\multicolumn{2}{r}{Continued on next page} \\
\endfoot
\bottomrule
\endlastfoot
A grayscale breast ultrasound image of breast tissue showing absence of any focal breast mass labeled benign or malignant. & A grayscale breast ultrasound image of breast tissue showing a focal breast mass. \\
A clinical grayscale breast ultrasound image of breast tissue showing absence of any focal breast mass labeled benign or malignant. & A clinical grayscale breast ultrasound image of breast tissue showing an annotated breast lesion. \\
A diagnostic grayscale breast ultrasound image of breast tissue demonstrating absence of any focal breast mass labeled benign or malignant. & A diagnostic grayscale breast ultrasound image of breast tissue demonstrating a localized breast nodule. \\
This grayscale breast ultrasound image of breast tissue shows absence of any focal breast mass labeled benign or malignant. & This grayscale breast ultrasound image of breast tissue shows a benign breast mass. \\
This grayscale breast ultrasound image of breast tissue demonstrates absence of any focal breast mass labeled benign or malignant. & This grayscale breast ultrasound image of breast tissue demonstrates a benign breast lesion. \\
Breast tissue on this grayscale breast ultrasound image shows absence of any focal breast mass labeled benign or malignant. & Breast tissue on this grayscale breast ultrasound image shows a benign breast nodule. \\
The appearance of breast tissue on this grayscale breast ultrasound image is consistent with absence of any focal breast mass labeled benign or malignant. & The appearance of breast tissue on this grayscale breast ultrasound image is consistent with a localized benign tumor region. \\
The primary finding on this grayscale breast ultrasound image is absence of any focal breast mass labeled benign or malignant. & The primary finding on this grayscale breast ultrasound image is a malignant breast mass. \\
The evaluated region of breast tissue on this grayscale breast ultrasound image shows absence of any focal breast mass labeled benign or malignant. & The evaluated region of breast tissue on this grayscale breast ultrasound image shows a malignant breast lesion. \\
Image-level assessment of this grayscale breast ultrasound image indicates absence of any focal breast mass labeled benign or malignant. & Image-level assessment of this grayscale breast ultrasound image indicates a malignant breast tumor. \\
The observable pattern on this grayscale breast ultrasound image is consistent with absence of any focal breast mass labeled benign or malignant. & The observable pattern on this grayscale breast ultrasound image is consistent with a localized malignant tumor region. \\
The scan appearance demonstrates absence of any focal breast mass labeled benign or malignant. & The scan appearance demonstrates a breast mass labeled benign or malignant. \\
An example of grayscale breast ultrasound image showing absence of any focal breast mass labeled benign or malignant. & An example of grayscale breast ultrasound image showing a tumor-bearing breast ultrasound appearance. \\
This image is interpreted as showing absence of any focal breast mass labeled benign or malignant on grayscale breast ultrasound image. & This image is interpreted as showing one or more focal breast lesions on grayscale breast ultrasound image. \\
\end{longtable}
\endgroup

\begingroup
\footnotesize
\setlength{\tabcolsep}{5pt}
\renewcommand{\arraystretch}{1.15}
\begin{longtable}{@{}>{\raggedright\arraybackslash}p{0.485\linewidth}>{\raggedright\arraybackslash}p{0.485\linewidth}@{}}
\caption{Exact prompts for CheXpert.}\label{tab:clinical-prompts-chexpert}\\
\toprule
\textbf{Normal} & \textbf{Abnormal} \\
\midrule
\endfirsthead
\multicolumn{2}{c}{\tablename\ \thetable{} -- continued} \\
\toprule
\textbf{Normal} & \textbf{Abnormal} \\
\midrule
\endhead
\midrule
\multicolumn{2}{r}{Continued on next page} \\
\endfoot
\bottomrule
\endlastfoot
A chest radiograph of the chest showing the No Finding state with none of the twelve evaluated disease observations. & A chest radiograph of the chest showing enlarged cardiomediastinum. \\
A clinical chest radiograph of the chest showing the No Finding state with none of the twelve evaluated disease observations. & A clinical chest radiograph of the chest showing cardiomegaly. \\
A diagnostic chest radiograph of the chest demonstrating the No Finding state with none of the twelve evaluated disease observations. & A diagnostic chest radiograph of the chest demonstrating lung opacity. \\
This chest radiograph of the chest shows the No Finding state with none of the twelve evaluated disease observations. & This chest radiograph of the chest shows a lung lesion. \\
This chest radiograph of the chest demonstrates the No Finding state with none of the twelve evaluated disease observations. & This chest radiograph of the chest demonstrates pulmonary edema. \\
The chest on this chest radiograph shows the No Finding state with none of the twelve evaluated disease observations. & The chest on this chest radiograph shows pulmonary consolidation. \\
The appearance of the chest on this chest radiograph is consistent with the No Finding state with none of the twelve evaluated disease observations. & The appearance of the chest on this chest radiograph is consistent with pneumonia. \\
The primary finding on this chest radiograph is the No Finding state with none of the twelve evaluated disease observations. & The primary finding on this chest radiograph is atelectasis. \\
The evaluated region of the chest on this chest radiograph shows the No Finding state with none of the twelve evaluated disease observations. & The evaluated region of the chest on this chest radiograph shows pneumothorax. \\
Image-level assessment of this chest radiograph indicates the No Finding state with none of the twelve evaluated disease observations. & Image-level assessment of this chest radiograph indicates pleural effusion. \\
The observable pattern on this chest radiograph is consistent with the No Finding state with none of the twelve evaluated disease observations. & The observable pattern on this chest radiograph is consistent with another pleural abnormality. \\
The scan appearance demonstrates the No Finding state with none of the twelve evaluated disease observations. & The scan appearance demonstrates a fracture. \\
An example of chest radiograph showing the No Finding state with none of the twelve evaluated disease observations. & An example of chest radiograph showing one or more of the twelve evaluated disease observations. \\
This image is interpreted as showing the No Finding state with none of the twelve evaluated disease observations on chest radiograph. & This image is interpreted as showing an abnormal chest radiographic finding from the evaluated disease set on chest radiograph. \\
\end{longtable}
\endgroup

\begingroup
\footnotesize
\setlength{\tabcolsep}{5pt}
\renewcommand{\arraystretch}{1.15}
\begin{longtable}{@{}>{\raggedright\arraybackslash}p{0.485\linewidth}>{\raggedright\arraybackslash}p{0.485\linewidth}@{}}
\caption{Exact prompts for Kaggle BrainMRI.}\label{tab:clinical-prompts-brainmri}\\
\toprule
\textbf{Normal} & \textbf{Abnormal} \\
\midrule
\endfirsthead
\multicolumn{2}{c}{\tablename\ \thetable{} -- continued} \\
\toprule
\textbf{Normal} & \textbf{Abnormal} \\
\midrule
\endhead
\midrule
\multicolumn{2}{r}{Continued on next page} \\
\endfoot
\bottomrule
\endlastfoot
A 2D brain MRI image of the brain showing absence of any visible brain tumor or intracranial mass. & A 2D brain MRI image of the brain showing a visible brain tumor. \\
A clinical 2D brain MRI image of the brain showing absence of any visible brain tumor or intracranial mass. & A clinical 2D brain MRI image of the brain showing an intracranial tumor. \\
A diagnostic 2D brain MRI image of the brain demonstrating absence of any visible brain tumor or intracranial mass. & A diagnostic 2D brain MRI image of the brain demonstrating a tumor-bearing brain MRI appearance. \\
This 2D brain MRI image of the brain shows absence of any visible brain tumor or intracranial mass. & This 2D brain MRI image of the brain shows a focal intracranial mass. \\
This 2D brain MRI image of the brain demonstrates absence of any visible brain tumor or intracranial mass. & This 2D brain MRI image of the brain demonstrates a neoplastic brain lesion. \\
The brain on this 2D brain MRI image shows absence of any visible brain tumor or intracranial mass. & The brain on this 2D brain MRI image shows a space-occupying brain lesion. \\
The appearance of the brain on this 2D brain MRI image is consistent with absence of any visible brain tumor or intracranial mass. & The appearance of the brain on this 2D brain MRI image is consistent with a localized brain tumor region. \\
The primary finding on this 2D brain MRI image is absence of any visible brain tumor or intracranial mass. & The primary finding on this 2D brain MRI image is abnormal brain tissue consistent with a tumor. \\
The evaluated region of the brain on this 2D brain MRI image shows absence of any visible brain tumor or intracranial mass. & The evaluated region of the brain on this 2D brain MRI image shows a tumor-positive brain MRI. \\
Image-level assessment of this 2D brain MRI image indicates absence of any visible brain tumor or intracranial mass. & Image-level assessment of this 2D brain MRI image indicates an intracranial mass lesion. \\
The observable pattern on this 2D brain MRI image is consistent with absence of any visible brain tumor or intracranial mass. & The observable pattern on this 2D brain MRI image is consistent with one or more visible tumor regions. \\
The scan appearance demonstrates absence of any visible brain tumor or intracranial mass. & The scan appearance demonstrates MRI evidence of a brain tumor. \\
An example of 2D brain MRI image showing absence of any visible brain tumor or intracranial mass. & An example of 2D brain MRI image showing a focal tumor within the brain. \\
This image is interpreted as showing absence of any visible brain tumor or intracranial mass on 2D brain MRI image. & This image is interpreted as showing a brain MRI appearance containing a tumor on 2D brain MRI image. \\
\end{longtable}
\endgroup

\noindent The donor condition uses the donor's complete normal and abnormal banks unchanged; no
sentence is rewritten for the target.

\subsection{Industrial prompt banks}
\label{app:industrial_prompts}

MVTec-AD and VisA use the fixed two-level WinCLIP-style prompt ensemble
\citep{jeong2023winclip}. For category noun $o_t$, each of seven normal and seven abnormal state
phrases is inserted into each of 22 photo templates, giving $7\times22=154$ sentences per state and
category. The frozen OpenAI CLIP ViT-L/14 text encoder \citep{radford2021clip} encodes the sentences
individually. We average the 154 normal embeddings and the 154 abnormal embeddings separately, then
unit-normalize the two means to obtain $(\va_t^N,\va_t^A)$. No prompt token or anchor is trained.
Tables~\ref{tab:industrial-state-phrases}--\ref{tab:industrial-category-nouns} give the complete
construction, including literal punctuation and object nouns; together they
determine every sentence in every industrial bank exactly. Expansion is state-major and
template-minor in the displayed orders.

\begin{table}[!htbp]
\centering
\caption{Exact state phrases used to construct every industrial prompt bank. The object placeholder
\texttt{\{\}} is replaced by the noun in Table~\ref{tab:industrial-category-nouns}. The leading space
in the sixth abnormal phrase is retained from the reference implementation.}
\label{tab:industrial-state-phrases}
\begingroup
\footnotesize
\setlength{\tabcolsep}{5pt}
\renewcommand{\arraystretch}{1.15}
\begin{tabular}{@{}ll@{}}
\toprule
Normal state phrase & Abnormal state phrase \\
\midrule
\texttt{\{\}} & \texttt{damaged \{\}} \\
\texttt{flawless \{\}} & \texttt{broken \{\}} \\
\texttt{perfect \{\}} & \texttt{\{\} with flaw} \\
\texttt{unblemished \{\}} & \texttt{\{\} with defect} \\
\texttt{\{\} without flaw} & \texttt{\{\} with damage} \\
\texttt{\{\} without defect} & \texttt{\char32 disease \{\}} \\
\texttt{\{\} without damage} & \texttt{abnormal \{\}} \\
\bottomrule
\end{tabular}
\endgroup
\end{table}

\begin{table}[!htbp]
\centering
\caption{Exact photo templates used for the industrial prompt ensemble. The state placeholder
\texttt{\{\}} is replaced by one of the state phrases in
Table~\ref{tab:industrial-state-phrases}.}
\label{tab:industrial-photo-templates}
\begingroup
\footnotesize
\setlength{\tabcolsep}{5pt}
\renewcommand{\arraystretch}{1.15}
\begin{tabular}{@{}rl@{}}
\toprule
& Template \\
\midrule
1 & \texttt{a cropped photo of the \{\}.} \\
2 & \texttt{a cropped photo of a \{\}.} \\
3 & \texttt{a close-up photo of a \{\}.} \\
4 & \texttt{a close-up photo of the \{\}.} \\
5 & \texttt{a bright photo of a \{\}.} \\
6 & \texttt{a bright photo of the \{\}.} \\
7 & \texttt{a dark photo of the \{\}.} \\
8 & \texttt{a dark photo of a \{\}.} \\
9 & \texttt{a jpeg corrupted photo of a \{\}.} \\
10 & \texttt{a jpeg corrupted photo of the \{\}.} \\
11 & \texttt{a blurry photo of the \{\}.} \\
12 & \texttt{a blurry photo of a \{\}.} \\
13 & \texttt{a photo of the \{\}} \\
14 & \texttt{a photo of a \{\}} \\
15 & \texttt{a photo of a small \{\}} \\
16 & \texttt{a photo of the small \{\}} \\
17 & \texttt{a photo of a large \{\}} \\
18 & \texttt{a photo of the large \{\}} \\
19 & \texttt{a photo of the \{\} for visual inspection.} \\
20 & \texttt{a photo of a \{\} for visual inspection.} \\
21 & \texttt{a photo of the \{\} for anomaly detection.} \\
22 & \texttt{a photo of a \{\} for anomaly detection.} \\
\bottomrule
\end{tabular}
\endgroup
\end{table}

\begin{table}[!htbp]
\centering
\caption{Exact object nouns substituted into the industrial prompt templates. Numbered VisA
categories deliberately retain their index.}
\label{tab:industrial-category-nouns}
\begingroup
\footnotesize
\setlength{\tabcolsep}{5pt}
\renewcommand{\arraystretch}{1.15}
\begin{tabular}{@{}>{\bfseries}l>{\raggedright\arraybackslash}p{0.80\textwidth}@{}}
\toprule
Dataset & Object nouns \\
\midrule
MVTec-AD & bottle; cable; capsule; carpet; grid; hazelnut; leather; metal nut; pill; screw; tile;
toothbrush; transistor; wood; zipper \\
\midrule
VisA & candle; capsules; cashew; chewing gum; fryum; macaroni 1; macaroni 2; printed circuit board 1;
printed circuit board 2; printed circuit board 3; printed circuit board 4; pipe fryum \\
\bottomrule
\end{tabular}
\endgroup
\end{table}

\subsection{Anchor cosines and effective scales}
\label{app:anchor_geometry_values}

For every correct anchor pair, Tables~\ref{tab:medical_anchor_scales}
and~\ref{tab:industrial_anchor_scales} report
\begin{equation}
    \rho=\inner{\va_N}{\va_A},
    \qquad
    c=\sqrt{1-\rho},
    \qquad
    \lambda_{\mathrm{eff}}^{\mathrm{train}}=100\sqrt{2-2\rho},
    \qquad
    \lambda_{\mathrm{eff}}^{\mathrm{eval}}=20\sqrt{2-2\rho}.
    \label{eq:reported_anchor_scales}
\end{equation}
Matched orthogonal anchors multiply the training and evaluation similarity scales by $c$;
because their anchor difference has norm $\sqrt{2}$, this gives exactly the two reported
$\lambda_{\mathrm{eff}}$ values. Unmatched orthogonal anchors instead have
$\lambda_{\mathrm{eff}}^{\mathrm{train}}=100\sqrt{2}=141.42$ and
$\lambda_{\mathrm{eff}}^{\mathrm{eval}}=20\sqrt{2}=28.28$ for every dataset or category.
They therefore inflate the correct-prompt effective scale by the directly readable factor $1/c$.
All entries are computed from the unrounded frozen OpenAI CLIP ViT-L/14 anchors; only their display
is rounded.

\begin{table}[!htbp]
\centering
\caption{Correct-prompt geometry for the seven medical datasets. The reported values determine both
the Gram-matched random construction and the rescaling of matched orthogonal anchors.}
\label{tab:medical_anchor_scales}
\begingroup
\footnotesize
\setlength{\tabcolsep}{5pt}
\renewcommand{\arraystretch}{1.15}
\begin{tabular}{@{}l*{7}{r}@{}}
\toprule
Quantity & \makecell{BraTS\\2021} & \makecell{Liver\\CT} & RESC & \makecell{OCT\\2017} & BUSI & CheXpert & \makecell{Kaggle\\BrainMRI} \\
\midrule
$\rho_{\mathrm{correct}}$ & $0.9255$ & $0.9711$ & $0.9524$ & $0.9408$ & $0.9771$ & $0.9241$ & $0.9742$ \\
$c$ & $0.2730$ & $0.1700$ & $0.2183$ & $0.2432$ & $0.1513$ & $0.2755$ & $0.1608$ \\
$\lambda_{\mathrm{eff}}^{\mathrm{train}}$ & $38.61$ & $24.04$ & $30.87$ & $34.40$ & $21.40$ & $38.97$ & $22.74$ \\
$\lambda_{\mathrm{eff}}^{\mathrm{eval}}$ & $7.72$ & $4.81$ & $6.17$ & $6.88$ & $4.28$ & $7.79$ & $4.55$ \\
\bottomrule
\end{tabular}
\endgroup
\end{table}

\begin{table}[!htbp]
\centering
\caption{Per-category correct-prompt geometry for the unified industrial detector. Each category's
shared- and independent-rotation controls preserve its displayed $\rho_t$, $c_t$, and
$\lambda_{\mathrm{eff},t}$; independent rotations change only cross-category off-diagonal geometry.
The participation-ratio effective rank of the correct-prompt difference Gram matrix $\mM$
(Section~\ref{sec:industrial_extension}) is $r_{\mathrm{eff}}=1.79$ for MVTec-AD and $1.78$ for VisA. Shared
rotations preserve $\mM$ and hence $r_{\mathrm{eff}}$; independent rotations raise it to $14.28$--$14.29$ of 15
(MVTec-AD) and about $11.10$ of 12 (VisA) across the three geometry seeds.}
\label{tab:industrial_anchor_scales}
\begingroup
\footnotesize
\setlength{\tabcolsep}{5pt}
\renewcommand{\arraystretch}{1.15}
\begin{tabular}{@{}lrrrr@{\hspace{1.6em}}lrrrr@{}}
\toprule
\multicolumn{5}{c}{\textbf{MVTec-AD}} & \multicolumn{5}{c}{\textbf{VisA}} \\
\cmidrule(lr){1-5}\cmidrule(lr){6-10}
Category & $\rho_t$ & $c_t$ & $\lambda_{\mathrm{eff},t}^{\mathrm{train}}$ & $\lambda_{\mathrm{eff},t}^{\mathrm{eval}}$ & Category & $\rho_t$ & $c_t$ & $\lambda_{\mathrm{eff},t}^{\mathrm{train}}$ & $\lambda_{\mathrm{eff},t}^{\mathrm{eval}}$ \\
\midrule
bottle & $0.9754$ & $0.1567$ & $22.17$ & $4.43$ & candle & $0.9743$ & $0.1604$ & $22.68$ & $4.54$ \\
cable & $0.9771$ & $0.1514$ & $21.41$ & $4.28$ & capsules & $0.9742$ & $0.1605$ & $22.70$ & $4.54$ \\
capsule & $0.9715$ & $0.1688$ & $23.87$ & $4.77$ & cashew & $0.9771$ & $0.1514$ & $21.41$ & $4.28$ \\
carpet & $0.9767$ & $0.1526$ & $21.58$ & $4.32$ & chewing gum & $0.9801$ & $0.1410$ & $19.95$ & $3.99$ \\
grid & $0.9735$ & $0.1629$ & $23.03$ & $4.61$ & fryum & $0.9701$ & $0.1730$ & $24.47$ & $4.89$ \\
hazelnut & $0.9696$ & $0.1744$ & $24.67$ & $4.93$ & macaroni 1 & $0.9738$ & $0.1617$ & $22.87$ & $4.57$ \\
leather & $0.9711$ & $0.1700$ & $24.05$ & $4.81$ & macaroni 2 & $0.9732$ & $0.1638$ & $23.16$ & $4.63$ \\
metal nut & $0.9830$ & $0.1305$ & $18.45$ & $3.69$ & printed circuit board 1 & $0.9854$ & $0.1209$ & $17.09$ & $3.42$ \\
pill & $0.9779$ & $0.1485$ & $21.00$ & $4.20$ & printed circuit board 2 & $0.9857$ & $0.1194$ & $16.89$ & $3.38$ \\
screw & $0.9834$ & $0.1289$ & $18.22$ & $3.64$ & printed circuit board 3 & $0.9850$ & $0.1224$ & $17.31$ & $3.46$ \\
tile & $0.9699$ & $0.1735$ & $24.54$ & $4.91$ & printed circuit board 4 & $0.9868$ & $0.1150$ & $16.27$ & $3.25$ \\
toothbrush & $0.9817$ & $0.1354$ & $19.15$ & $3.83$ & pipe fryum & $0.9744$ & $0.1600$ & $22.63$ & $4.53$ \\
transistor & $0.9759$ & $0.1551$ & $21.94$ & $4.39$ & & & & & \\
wood & $0.9730$ & $0.1645$ & $23.26$ & $4.65$ & & & & & \\
zipper & $0.9802$ & $0.1407$ & $19.90$ & $3.98$ & & & & & \\
\bottomrule
\end{tabular}
\endgroup
\end{table}

\subsection{Adapter coverage and rotation closure}
\label{app:closure}

Table~\ref{tab:family} checks the exact scope of Theorem~\ref{thm:gram} against representative
systems. A ``no'' does not say that the method uses no related symmetry; it says that its published
parameterization does not satisfy all assumptions of our exact result. Coverage can fail on either
of two sides, and the table names which one applies in each case. On the \emph{anchor} side the pair
must be prescribed and held fixed within the run: if the anchors are produced from learned tokens by
a frozen text encoder, the reachable set of pairs is the image of the token space under that
encoder, so an arbitrarily rotated pair need not be reachable, and $\rho$---hence
$\lambda_{\mathrm{eff}}$---is fitted rather than prescribed. On the \emph{visual} side the admissible
set $\cS$ of maps into text-embedding space must be closed under left multiplication by $O(d)$
(Lemma~\ref{lem:closure}); failure is caused by a specific constraint that prevents
$\mQ\mL$ from remaining admissible, not by a family label alone.

\begin{table}[htbp]
\centering
\caption{Scope check for representative VLM anomaly detectors. Exact coverage requires prescribed
text embeddings held fixed within a run and a visual scoring path whose admissible set $\cS$ is
closed under left multiplication by $O(d)$. The last column names the specific constraint that
blocks---or admits---$\mQ\mL\in\cS$. A ``no'' is not a judgment about the method, and where a
published parameterization admits more than one reading we state the condition under which the
verdict holds.}
\label{tab:family}
\begingroup
\footnotesize
\setlength{\tabcolsep}{5pt}
\renewcommand{\arraystretch}{1.15}
\begin{adjustbox}{max width=\textwidth,center}
\begin{tabular}{@{}>{\raggedright\arraybackslash}p{0.12\textwidth}
                  >{\raggedright\arraybackslash}p{0.13\textwidth}
                  >{\raggedright\arraybackslash}p{0.15\textwidth}
                  c
                  >{\raggedright\arraybackslash}p{0.34\textwidth}@{}}
\toprule
Method & Text embeddings & Visual path used for scoring & Cov. &
Constraint that blocks or admits $\mQ\mL\in\cS$ \\
\midrule
APRIL-GAN \citep{chen2023aprilgan} & prescribed prompts & free linear projections & yes$^{\dagger}$ &
None on either side: the per-layer map into text-embedding space is a free linear map, which absorbs
$\mQ$. $^{\dagger}$The verdict covers the trained linear-projection branch; the few-shot mode adds a
text-free reference-memory branch that our statement does not describe. \\
\addlinespace[1pt]
MVFA \citep{huang2024mvfa} & prescribed prompt ensemble & residual multi-level adapters &
no$^{\ddagger}$ &
Visual side: the scored feature retains a fixed, untrained component, so absorbing $\mQ$ requires
the trainable branch to cancel that component's rotation in addition to supplying its own output.
$^{\ddagger}$This holds when the trainable branch is capacity-constrained; a freely parameterized
full-rank residual branch would instead be closed, since $\{\mI+\mA:\mA\ \text{free}\}$ is all of
$\R^{d\times d}$. \\
\addlinespace[1pt]
MediCLIP \citep{zhang2024mediclip} & learned prompts & visual adapters & no &
Anchor side: prompt-token bottleneck. The pair is not prescribed, so $\rho$ and $\lambda_{\mathrm{eff}}$
are fitted, and the rotated pair need not lie in the image of the token space under the frozen text
encoder. \\
\addlinespace[1pt]
MadCLIP \citep{shiri2025madclip} & learned prompts & dual residual adapters & no &
Both sides: the same prompt-token bottleneck as above, together with a fixed untrained component in
the residual visual path. \\
\addlinespace[1pt]
\textbf{Ours (few-shot medical)} & prescribed & free output map & yes &
None: free final linear map, so Lemma~\ref{lem:closure} applies directly. Table~\ref{tab:closure}
records the verdict for each parameterization we train. \\
\addlinespace[1pt]
\textbf{Ours (full-shot unified)} & prescribed & one free shared adapter & yes &
None on either side; the multi-category statement additionally requires the replacement bank to
match $\mM=\mD\mD^\top$ (Corollary~\ref{cor:multicategory}). \\
\midrule
WinCLIP \citep{jeong2023winclip} & prescribed prompts & no trained projection & no &
Neither side is at issue: there is no $\mL$ to rotate, so the premise of an adapter trained against
the anchors fails before closure arises. \\
\bottomrule
\end{tabular}
\end{adjustbox}
\endgroup
\end{table}

\begin{definition}[Rotation-closed adapter family]
\label{def:closed}
A family $\cG$ of maps into $\R^d$ is rotation-closed if $g\in\cG$ implies
$\mQ\circ g\in\cG$ for every $\mQ\in O(d)$.
\end{definition}

\begin{lemma}[A free final map implies rotation closure]
\label{lem:closure}
Suppose every $g\in\cG$ factors as $g=\mL\circ h$, where $\mL$ ranges over a set
$\cS\subseteq\R^{d\times m}$ closed under left multiplication by $O(d)$, independently of the
admissible $h$. Then $\cG$ is rotation-closed.
\end{lemma}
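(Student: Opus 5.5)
The plan is to verify Definition~\ref{def:closed} directly. I fix an arbitrary $g\in\cG$ and $\mQ\in O(d)$, and exhibit $\mQ\circ g$ as an admissible member of $\cG$ by exploiting the factorization hypothesis. By assumption, $g=\mL\circ h$ for some $\mL\in\cS$ and some admissible $h$. Composition of a linear map with a function is associative. Hence
\[
\mQ\circ g=\mQ\circ(\mL\circ h)=(\mQ\mL)\circ h,
\]
where $\mQ\mL\in\R^{d\times m}$ is the ordinary matrix product. It is still a map into $\R^d$, so the output space of the family is unchanged.

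Next I use the two parts of the hypothesis in turn. Closure of $\cS$ under left multiplication by $O(d)$ gives $\mQ\mL\in\cS$. The word \emph{independently} in the hypothesis says that the set of admissible pairs is a product: any $\mL'\in\cS$ may be paired with any admissible $h$. Therefore the pair $(\mQ\mL,h)$ is admissible, and $(\mQ\mL)\circ h\in\cG$. Since $g$ and $\mQ$ were arbitrary, $\cG$ is rotation-closed.

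The only step needing care is this independence clause. If the admissible $h$ were allowed to depend on $\mL$, for example through shared or tied parameters, then $\mQ\mL$ might not be pairable with the original $h$. I would therefore state explicitly that the admissible set is $\cS\times\mathcal{H}$. For the adapters actually used, I would check that the final linear layer's weight is untied from earlier layers. When $\cS=\R^{d\times m}$, closure is immediate.

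A bias term $\mL h+\vb$ can be absorbed by augmenting $h$ with a constant coordinate. Finally, I would remark that $\mQ$ commutes with normalization, since $\normalize(\mQ\vx)=\mQ\normalize(\vx)$ because $\norm{\mQ\vx}=\norm{\vx}$. This is what Theorem~\ref{thm:gram} uses downstream, although it is not needed for the lemma itself.
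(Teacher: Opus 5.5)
Your proof is correct and follows the paper's argument: both rewrite $\mQ\circ g=(\mQ\mL)\circ h$ and use the closure of $\cS$ under left multiplication by $O(d)$. Your reading of the independence clause as a product admissible set $\cS\times\mathcal{H}$ makes explicit what the paper's one-line proof uses implicitly, namely that $\mQ\mL$ can be paired with the same admissible $h$.
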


\begin{proof}
For any $\mQ\in O(d)$, $\mQ\circ g=(\mQ\mL)\circ h$, and $\mQ\mL\in\cS$ by assumption.
\end{proof}

\begin{table}[htbp]
\centering
\caption{Rotation closure of the three adapter families.}
\label{tab:closure}
\begingroup
\footnotesize
\setlength{\tabcolsep}{5pt}
\renewcommand{\arraystretch}{1.15}
\begin{tabular}{@{}llcl@{}}
\toprule
Family & Parameterization & Closed & Reason \\
\midrule
linear & $\vy=\widetilde\mW^{\top}\tilde\vz$, $\widetilde\mW$ free & yes & absorb $\mQ$ into $\widetilde\mW$ \\
MLP & $\vy=\mW_2\,\mathrm{GELU}(\mW_1\vz)$ & yes & absorb $\mQ$ into $\mW_2$ \\
Transformer & $\vy=\mU_{\mathrm{out}}T(\mU_{\mathrm{in}}Z+E_{\mathrm{pos}})$ & yes & absorb $\mQ$ into $\mU_{\mathrm{out}}$ \\
\bottomrule
\end{tabular}
\endgroup
\end{table}

\ptitle{Linear.} Let $\tilde\vz=[\vz;1]$ and
$g(\vz)=\widetilde\mW^\top\tilde\vz$, so $\widetilde\mW$ contains both weights and bias.
Then $\mQ g(\vz)=(\widetilde\mW\mQ^\top)^\top\tilde\vz$, and
$\widetilde\mW\mQ^\top$ is another unconstrained affine map.

\ptitle{MLP.} For
$g(\vz)=\mW_2\mathrm{GELU}(\mW_1\vz)$, replace the free final matrix $\mW_2$ by
$\mQ\mW_2$.

\ptitle{Transformer.} For
$g(Z)=\mU_{\mathrm{out}}T(\mU_{\mathrm{in}}Z+E_{\mathrm{pos}})$, replace
$\mU_{\mathrm{out}}$ by $\mQ\mU_{\mathrm{out}}$. Patch interaction inside $T$ does not affect
closure.

\subsection{Finite-training residual}
\label{app:residual}

Theorem~\ref{thm:gram} and Corollary~\ref{cor:multicategory} equate attainable probability maps
and objective-value sets, not arbitrary optimization paths. Under the reparameterization
$\mW\mapsto\mW\mQ^\top$, an empirical difference between exactly matched conditions can arise
from components of the finite training or evaluation procedure that are not equivariant to this
rotation, or from stochastic elements that are not fully coupled across the compared runs.

\ptitle{Weight decay.} Frobenius weight decay is invariant:
\begin{equation}
\normF{\mW\mQ^\top}^2
=\operatorname{tr}(\mQ\mW^\top\mW\mQ^\top)
=\normF{\mW}^2.
\end{equation}

\ptitle{Initialization.} Equivariance requires the distribution of $\mW^{(0)}$ to be invariant
under right multiplication by $O(d)$. A coordinate-aligned uniform initialization is not exactly
invariant.

\ptitle{Optimizer.} Gradient descent is equivariant because its gradient rotates with $\mQ$.
Adam is not: its diagonal second-moment preconditioner is tied to the coordinate frame. Initialization
and Adam are therefore the principal algorithmic sources of a finite-training discrepancy under
exact arithmetic and fully coupled randomness. The reported experiments also use bfloat16 arithmetic,
and seed identifiers are not common to every reported condition
(Appendix~\ref{app:training_protocol}); finite-precision effects and uncoupled stochastic variation
can therefore contribute as well. Observed residuals are not attributed uniquely to initialization
or Adam.

\subsection{Proof of Proposition~\ref{prop:breaking}}
\label{app:breaking}

Theorem~\ref{thm:gram} constructs its correspondence by rotating every layer-wise adapter. If
the family is not rotation-closed, the transformed adapters need not be admissible. If the
adapter is frozen rather than trained against the replacement anchors, the compensating
parameterization is unavailable. In the category-conditioned case, Corollary~\ref{cor:multicategory} requires one
rotation for all $T$ difference vectors; changing $\mD\mD^\top$ removes that construction.
These are failures of a sufficient equivalence argument, not sufficient conditions for a
performance difference. In particular, none establishes that linguistic meaning is beneficial.

\section{Internal Training, Checkpoint, and Evaluation Protocol}
\label{app:training_protocol}

This section consolidates the settings shared by our internal experiments;
Table~\ref{tab:training_protocol} lists them. The DINOv3-L/16,
DINOv2-L/14, and CLIP ViT-L/14 vision backbones
\citep{simeoni2025dinov3,oquab2024dinov2,radford2021clip}, as well as the OpenAI CLIP ViT-L/14 text
encoder, are frozen. Inputs are resized to \(224\times224\), and visual
features are retained from transformer blocks \(5,11,17,23\). Each retained layer has its own
adapter into the \(768\)-dimensional anchor space. The prompt context length is zero: prompt
ensembles are encoded once into prescribed anchors, and neither prompt tokens nor anchors are
optimized. Unless a table explicitly names another family, the adapter is affine linear. The MLP
and Transformer parameterizations are specified in
Appendix~\ref{app:closure}.

\begin{table}[htbp]
\centering
\caption{Internal training and evaluation settings. Settings not separated by column are shared.}
\label{tab:training_protocol}
\begingroup
\footnotesize
\setlength{\tabcolsep}{5pt}
\renewcommand{\arraystretch}{1.15}
\begin{adjustbox}{max width=\textwidth,center}
\begin{tabular}{@{}llll@{}}
\toprule
\textbf{Setting} & \textbf{Medical} & \textbf{MVTec-AD} & \textbf{VisA} \\
\midrule
Training images & regime-specific \(K\) support & all normal train images & all normal train images \\
Epochs & \(200\) & \(50\) & \(30\) \\
Training seeds & \multicolumn{3}{c}{three per condition} \\
Input size & \multicolumn{3}{c}{\(224\times224\)} \\
Retained blocks & \multicolumn{3}{c}{\(5,11,17,23\)} \\
Optimizer & \multicolumn{3}{c}{Adam, learning rate \(10^{-3}\), weight decay \(10^{-3}\)} \\
Adam parameters & \multicolumn{3}{c}{\(\beta=(0.9,0.95)\), \(\epsilon=10^{-8}\)} \\
Schedule & \multicolumn{3}{c}{linear decay after \(10\%\) linear warm-up} \\
Batch / accumulation & \multicolumn{3}{c}{\(4/2\)} \\
Gradient clipping & \multicolumn{3}{c}{global norm \(0.5\)} \\
Numerical precision & \multicolumn{3}{c}{bfloat16 automatic mixed precision} \\
Support augmentation & \(16\) deterministic views & none & none \\
Training similarity scale & \multicolumn{3}{c}{\(100c\)} \\
Evaluation similarity scale & \multicolumn{3}{c}{\(20c\)} \\
Reported checkpoint & \multicolumn{3}{c}{completed final epoch; no checkpoint selection} \\
Image score & \multicolumn{3}{c}{mean of the highest-scoring \(1\%\) of pixels} \\
AUROC aggregation & pooled within dataset & category-macro & category-macro \\
\bottomrule
\end{tabular}
\end{adjustbox}
\endgroup
\end{table}

Here \(c=1\) except for matched orthogonal anchors, where
\(c=\sqrt{1-\rho_{\mathrm{correct}}}\). Training minimizes the sum, over the four layer maps, of
focal loss with \(\gamma=2\) and Dice loss. Logits are bilinearly upsampled before the binary
softmax. Evaluation averages the four abnormal-probability maps before computing pixel AUROC and
the top-\(1\%\) image score. Industrial AUROCs are first computed separately for each object
category and then macro-averaged; medical AUROC is computed over the applicable dataset test split.

\ptitle{Medical supervision and evaluation masks.} In the real-anomaly-support regime, each support
set contains \(K\) normal and \(K\) real abnormal images. BraTS2021, LiverCT, RESC, and OCT2017 use
annotated dense ground-truth training masks, which enter the same pixel-level focal and Dice losses.
OCT2017 is evaluated at image level only, so the paper reports only its image AUROC. In the normal-only
regime, anomalies and their masks are generated online
from normal support images. BUSI does have evaluation masks, so its pixel AUROC could be
reported. We nevertheless omit BUSI pixel AUROC from the normal-only anchor tables
to retain a common image-only endpoint across BUSI, CheXpert, and Kaggle BrainMRI; the latter two
do not have evaluation masks. Thus this table-level omission must not be read as absence of BUSI
masks.

\subsection{Synthetic-anomaly generation}
\label{app:synthetic_anomalies}

The medical normal-only and industrial full-shot experiments use different synthesis procedures.
Both operate online: every access pairs the unchanged normal image with a newly synthesized image
and its dense target, so the focal and Dice losses see clean and synthetic examples together.

\ptitle{Medical normal-only synthesis.} BUSI, CheXpert, and Kaggle BrainMRI share one task mixture
adapted from MediCLIP \citep{zhang2024mediclip}. Each selected normal support image is first
expanded into a fixed set of 16 deterministic views spanning small rotations, translations, and
zooms, brightness and contrast jitter, and the two flips; synthesis is then resampled on the drawn
view at every access. One of three tasks is then drawn: CutPaste \citep{li2021cutpaste}, which
copies a rotated and rescaled crop of a normal support image into the view; a Gaussian intensity
change, which adds a smooth signed perturbation to it; or the identity, which returns the view
unchanged. Both non-identity tasks confine their edit to one or two Perlin-noise regions
\citep{perlin1985image} of randomly drawn size, orientation, and position, and neither consults a
foreground mask. CutPaste derives a soft target from the local source-to-destination intensity
difference, the Gaussian task uses the binary region mask, and the identity uses an all-zero target.

\ptitle{Industrial full-shot synthesis.} MVTec-AD and VisA use foreground-guided natural synthetic
anomalies \citep{schluter2022natural}. For each normal destination image, a source image is drawn
from the same object category and a rectangular crop of it is Poisson-blended into the destination.
Crop height and width are drawn independently from a heavy-tailed distribution with
category-specific bounds, capped against the destination's foreground bounding box, and then
rescaled; one to a few patches are pasted per image. Placement is restricted to the object: a fixed
per-image foreground mask, thresholded offline from the normal training images, supplies the paste
center, a placement is accepted only when the patch lies mostly on foreground, and the final patch
mask is intersected with that foreground. The dense target is a per-category calibrated sigmoid of
the resulting mean absolute RGB intensity change, median-filtered to suppress isolated pixels, and a
synthesis that leaves the destination effectively unchanged is dropped from the synthetic half of
the batch. The crop bounds, patch counts, and target-curve parameters are fixed per object category
and held identical across every anchor condition.

\ptitle{Internal checkpoint provenance.} Every result for our medical, industrial, adapter-family,
and transfer experiments comes from the checkpoint saved after the completed final training epoch.
None of these internal values selects an epoch using validation or test performance. The external
MVFA and MadCLIP replications follow their released selection rules, documented separately in
Appendix~\ref{app:external_models}.

\ptitle{Randomness and pairing.} Medical anchor-substitution tables average three training runs per
condition. The random-anchor construction couples each anchor draw to its run seed. Because seed
identifiers are not common to every reported condition, the main anchor comparison is made between
condition means and is not presented as a seed-paired test. The architecture ablation pairs its
three seeds across conditions. In the industrial randomized conditions, anchor seed equals training seed,
so the three seeds represent three geometry draws as well as three training runs. Correct-prompt
anchors are deterministic.

\section{External Validation Protocols}
\label{app:external_models}

These experiments retain the released visual, optimization, scoring, and evaluation pipelines of
MVFA~\citep{huang2024mvfa} and MadCLIP~\citep{shiri2025madclip}; only the anchor path is changed.
Both methods use one CLIP image encoder with multiple adapter branches. They are empirical extensions:
MVFA's residual parameterization and MadCLIP's jointly learned prompts are outside the exact
fixed-anchor, rotation-closed premise of Theorem~\ref{thm:gram}.

\ptitle{MVFA architecture and controls.} MVFA inserts residual adapters at multiple levels of one
CLIP image encoder, then uses its published, separate detection and segmentation readouts. Its
released prompt ensemble is encoded once into unit anchors $(\va_N,\va_A)$ with cosine $\rho$.
For every random arm, Gram--Schmidt applied to two seeded Gaussian draws produces an orthonormal
frame $(\ve_1,\ve_2)$; the frame seed equals the training seed.
Correct prompts use the released anchors. Gram-matched random anchors use
$\va_N=\ve_1$ and $\va_A=\rho\ve_1+\sqrt{1-\rho^2}\ve_2$. Unmatched orthogonal anchors use
$(\ve_1,\ve_2)$. Matched orthogonal anchors reuse this pair but multiply the similarity scale by
$\sqrt{1-\rho}$, restoring the correct effective scale. Cosines and matched difference norms are
checked to $10^{-6}$ before training, and the published visual, detection, and segmentation paths are
otherwise unchanged.

\ptitle{MadCLIP architecture and controls.} MadCLIP uses separate normal and abnormal residual
adaptation branches within one CLIP image encoder and jointly learns its prompt tokens. The released
reference learns two seven-template prompt banks with eight class-specific context tokens at the end,
front, and middle token positions; the text encoder is frozen. Because its fitted cosine $\rho$ is an
outcome rather than a prescribed input, MVFA's fixed Gram-matched control is unavailable. In the
text-free learned-scale arm, the text encoder is absent, a seeded orthonormal frame is frozen, and
$\va_N=\ve_1$, $\va_A=\rho\ve_1+\sqrt{1-\rho^2}\ve_2$ with $\rho=\tanh\theta$.
The scalar $\theta$ is the only trainable text-path quantity; it starts at zero and uses Adam with
$\beta=(0.5,0.999)$ and learning rate $0.02$. The prompt-scale arm uses the same text-free frame but
fixes $\rho=+0.986$, leaving no trainable text-path parameter. All visual branches, losses, data,
schedules, and released image and pixel scoring rules remain fixed across the three arms.

\ptitle{Realized geometry.} Across all 48 dataset--support--seed cells per MadCLIP arm, the learned
prompt cosine averaged $-0.984$ (range $-0.994$ to $-0.949$), and the text-free learned cosine
averaged $-0.994$ (range $-1.000$ to $-0.975$). The fixed prompt-scale arm remained at $+0.986$.
Thus both fitted paths approached the maximum attainable effective scale, with or without language;
the prompt-scale arm deliberately supplied a much smaller scale.

\ptitle{Checkpoint provenance and aggregation.} Unlike our internal experiments in
Appendix~\ref{app:training_protocol}, which report the completed final epoch, these external results
reproduce the released repositories' test-selected checkpoint protocol. Segmentation runs select the
checkpoint maximizing test pixel+image AUROC, and detection-only runs maximize test image AUROC.
Each method uses three training seeds. Tables~\ref{tab:mvfa_full} and~\ref{tab:madclip_full} report sample mean$\pm$sd.
AC is image AUROC and AS is pixel AUROC. OCT2017 pixel AUROC is not reported, so Avg AS covers BraTS2021,
LiverCT, and RESC; Avg AC covers all four datasets.

\begin{table}[!tbp]
\centering
\caption{Complete MVFA anchor-control results across support sizes (AUROC \%).}
\label{tab:mvfa_full}
\begingroup
\scriptsize
\setlength{\tabcolsep}{2.5pt}
\renewcommand{\arraystretch}{0.88}
\begin{adjustbox}{max width=\textwidth,center}
\begin{tabular}{l c c cc cc cc cc}
\toprule
& & \textbf{OCT2017} & \multicolumn{2}{c}{\textbf{BraTS2021}} & \multicolumn{2}{c}{\textbf{LiverCT}} & \multicolumn{2}{c}{\textbf{RESC}} & \multicolumn{2}{c}{\textbf{Avg}} \\
\cmidrule(lr){3-3} \cmidrule(lr){4-5} \cmidrule(lr){6-7} \cmidrule(lr){8-9} \cmidrule(lr){10-11}
\textbf{Anchor condition} & \textbf{$K$} & AC & AC & AS & AC & AS & AC & AS & AC & AS \\
\midrule
Correct prompts  & \multirow{4}{*}{2}  & $92.0{\pm}8.2$  & $92.3{\pm}1.0$  & $96.8{\pm}0.6$  & $80.5{\pm}7.9$  & $98.9{\pm}0.9$  & $90.9{\pm}1.3$  & $97.5{\pm}0.8$  & $88.9$  & $97.7$ \\
Gram-matched random  &   & $93.0{\pm}6.2$  & $90.8{\pm}1.2$  & $96.6{\pm}0.8$  & $79.0{\pm}7.8$  & $98.6{\pm}1.0$  & $90.4{\pm}2.0$  & $97.5{\pm}0.7$  & $88.3$  & $97.6$ \\
Unmatched orthogonal  &   & $90.8{\pm}6.5$  & $60.4{\pm}4.8$  & $93.3{\pm}1.3$  & $58.6{\pm}4.0$  & $95.3{\pm}0.7$  & $68.9{\pm}16.6$  & $91.4{\pm}2.3$  & $69.7$  & $93.3$ \\
Matched orthogonal  &   & $92.2{\pm}6.7$  & $90.6{\pm}1.2$  & $96.7{\pm}0.6$  & $77.7{\pm}7.9$  & $98.7{\pm}1.0$  & $91.5{\pm}0.2$  & $97.8{\pm}0.6$  & $88.0$  & $97.7$ \\
\midrule
Correct prompts  & \multirow{4}{*}{4}  & $98.7{\pm}0.4$  & $90.6{\pm}1.4$  & $96.6{\pm}0.4$  & $80.1{\pm}1.3$  & $99.1{\pm}0.6$  & $93.2{\pm}4.0$  & $98.2{\pm}0.9$  & $90.7$  & $97.9$ \\
Gram-matched random  &   & $98.4{\pm}0.5$  & $91.4{\pm}1.6$  & $95.2{\pm}1.6$  & $77.8{\pm}1.1$  & $98.9{\pm}0.7$  & $93.4{\pm}3.7$  & $98.2{\pm}1.0$  & $90.2$  & $97.5$ \\
Unmatched orthogonal  &   & $94.2{\pm}2.6$  & $61.2{\pm}3.4$  & $93.3{\pm}0.9$  & $59.8{\pm}8.0$  & $96.0{\pm}0.1$  & $73.8{\pm}16.3$  & $92.4{\pm}4.8$  & $72.2$  & $93.9$ \\
Matched orthogonal  &   & $98.6{\pm}0.2$  & $91.9{\pm}1.2$  & $96.2{\pm}1.1$  & $80.6{\pm}2.2$  & $99.1{\pm}0.5$  & $92.6{\pm}4.5$  & $98.1{\pm}0.8$  & $90.9$  & $97.8$ \\
\midrule
Correct prompts  & \multirow{4}{*}{8}  & $99.2{\pm}0.8$  & $92.3{\pm}2.7$  & $97.2{\pm}0.3$  & $80.8{\pm}3.8$  & $98.7{\pm}1.1$  & $97.6{\pm}0.3$  & $99.1{\pm}0.2$  & $92.5$  & $98.3$ \\
Gram-matched random  &   & $99.5{\pm}0.3$  & $91.7{\pm}3.1$  & $97.2{\pm}0.3$  & $77.9{\pm}9.9$  & $98.3{\pm}2.1$  & $97.7{\pm}0.1$  & $99.0{\pm}0.0$  & $91.7$  & $98.2$ \\
Unmatched orthogonal  &   & $89.8{\pm}5.4$  & $62.9{\pm}9.9$  & $93.7{\pm}0.8$  & $60.4{\pm}3.3$  & $96.1{\pm}0.4$  & $65.8{\pm}1.6$  & $90.4{\pm}0.3$  & $69.7$  & $93.4$ \\
Matched orthogonal  &   & $99.5{\pm}0.2$  & $92.0{\pm}2.7$  & $97.2{\pm}0.2$  & $83.0{\pm}4.7$  & $98.7{\pm}1.0$  & $97.6{\pm}0.2$  & $99.2{\pm}0.1$  & $93.0$  & $98.4$ \\
\midrule
Correct prompts  & \multirow{4}{*}{16}  & $99.9{\pm}0.0$  & $95.0{\pm}0.6$  & $97.5{\pm}0.9$  & $86.1{\pm}1.5$  & $98.8{\pm}1.4$  & $98.2{\pm}0.4$  & $99.4{\pm}0.0$  & $94.8$  & $98.6$ \\
Gram-matched random  &   & $99.9{\pm}0.0$  & $94.1{\pm}1.5$  & $97.6{\pm}0.1$  & $83.7{\pm}2.5$  & $99.1{\pm}0.9$  & $98.0{\pm}0.2$  & $99.4{\pm}0.1$  & $93.9$  & $98.7$ \\
Unmatched orthogonal  &   & $92.2{\pm}2.1$  & $66.8{\pm}4.0$  & $93.1{\pm}1.1$  & $63.0{\pm}5.2$  & $96.5{\pm}0.3$  & $72.9{\pm}9.1$  & $89.9{\pm}2.4$  & $73.7$  & $93.2$ \\
Matched orthogonal  &   & $99.9{\pm}0.1$  & $94.5{\pm}1.1$  & $97.5{\pm}0.6$  & $85.5{\pm}1.8$  & $99.0{\pm}1.0$  & $98.0{\pm}0.3$  & $99.3{\pm}0.1$  & $94.5$  & $98.6$ \\
\bottomrule
\end{tabular}
\end{adjustbox}
\endgroup
\end{table}

\begin{table}[!tbp]
\centering
\caption{Complete MadCLIP learned-prompt and text-free controls by support size (AUROC \%).}
\label{tab:madclip_full}
\begingroup
\scriptsize
\setlength{\tabcolsep}{2.5pt}
\renewcommand{\arraystretch}{0.88}
\begin{adjustbox}{max width=\textwidth,center}
\begin{tabular}{l c c cc cc cc cc}
\toprule
& & \textbf{OCT2017} & \multicolumn{2}{c}{\textbf{BraTS2021}} & \multicolumn{2}{c}{\textbf{LiverCT}} & \multicolumn{2}{c}{\textbf{RESC}} & \multicolumn{2}{c}{\textbf{Avg}} \\
\cmidrule(lr){3-3} \cmidrule(lr){4-5} \cmidrule(lr){6-7} \cmidrule(lr){8-9} \cmidrule(lr){10-11}
\textbf{Anchor condition} & \textbf{$K$} & AC & AC & AS & AC & AS & AC & AS & AC & AS \\
\midrule
Learned prompts  & \multirow{3}{*}{2}  & $96.5{\pm}1.5$  & $92.9{\pm}1.0$  & $97.6{\pm}0.3$  & $79.0{\pm}1.2$  & $97.3{\pm}0.3$  & $91.0{\pm}2.8$  & $96.8{\pm}1.1$  & $89.8$  & $97.2$ \\
Learned scale (no text)  &   & $96.3{\pm}1.6$  & $91.8{\pm}1.4$  & $97.6{\pm}0.2$  & $78.6{\pm}2.2$  & $97.5{\pm}0.4$  & $90.9{\pm}2.7$  & $97.1{\pm}0.8$  & $89.4$  & $97.4$ \\
Prompt-scale (no text)  &   & $97.1{\pm}1.4$  & $91.1{\pm}2.6$  & $97.1{\pm}0.8$  & $64.3{\pm}16.8$  & $71.1{\pm}22.7$  & $87.9{\pm}5.4$  & $96.4{\pm}0.7$  & $85.1$  & $88.2$ \\
\midrule
Learned prompts  & \multirow{3}{*}{4}  & $98.3{\pm}0.8$  & $92.4{\pm}1.8$  & $97.4{\pm}0.3$  & $84.7{\pm}3.2$  & $98.2{\pm}1.1$  & $95.0{\pm}1.3$  & $98.1{\pm}0.5$  & $92.6$  & $97.9$ \\
Learned scale (no text)  &   & $98.5{\pm}0.9$  & $91.9{\pm}2.4$  & $97.5{\pm}0.3$  & $85.1{\pm}1.9$  & $98.3{\pm}1.1$  & $95.0{\pm}0.9$  & $98.2{\pm}0.5$  & $92.6$  & $98.0$ \\
Prompt-scale (no text)  &   & $98.2{\pm}0.8$  & $90.8{\pm}5.1$  & $96.5{\pm}0.8$  & $73.7{\pm}4.9$  & $87.4{\pm}18.7$  & $94.4{\pm}0.1$  & $98.2{\pm}0.4$  & $89.3$  & $94.0$ \\
\midrule
Learned prompts  & \multirow{3}{*}{8}  & $98.9{\pm}0.5$  & $94.8{\pm}1.0$  & $97.8{\pm}0.3$  & $88.2{\pm}2.2$  & $99.2{\pm}0.5$  & $96.9{\pm}1.1$  & $98.9{\pm}0.1$  & $94.7$  & $98.6$ \\
Learned scale (no text)  &   & $98.8{\pm}0.5$  & $94.7{\pm}1.1$  & $97.8{\pm}0.4$  & $87.6{\pm}1.5$  & $99.2{\pm}0.6$  & $96.9{\pm}1.2$  & $98.9{\pm}0.1$  & $94.5$  & $98.6$ \\
Prompt-scale (no text)  &   & $98.6{\pm}0.3$  & $95.1{\pm}1.1$  & $97.7{\pm}0.3$  & $83.6{\pm}3.8$  & $98.7{\pm}1.0$  & $96.3{\pm}1.1$  & $98.8{\pm}0.2$  & $93.4$  & $98.4$ \\
\midrule
Learned prompts  & \multirow{3}{*}{16}  & $99.6{\pm}0.2$  & $96.0{\pm}0.5$  & $97.6{\pm}0.4$  & $90.5{\pm}0.9$  & $99.5{\pm}0.5$  & $98.6{\pm}0.2$  & $99.4{\pm}0.1$  & $96.2$  & $98.8$ \\
Learned scale (no text)  &   & $99.7{\pm}0.1$  & $95.8{\pm}0.6$  & $97.7{\pm}0.6$  & $91.1{\pm}0.8$  & $99.1{\pm}0.7$  & $98.6{\pm}0.2$  & $99.4{\pm}0.1$  & $96.3$  & $98.7$ \\
Prompt-scale (no text)  &   & $99.6{\pm}0.1$  & $96.1{\pm}0.5$  & $98.1{\pm}0.1$  & $87.7{\pm}3.4$  & $99.1{\pm}1.0$  & $97.7{\pm}0.2$  & $99.2{\pm}0.1$  & $95.3$  & $98.8$ \\
\bottomrule
\end{tabular}
\end{adjustbox}
\endgroup
\end{table}

\FloatBarrier
\section{Full Result Tables}
\label{app:full_results}

The main paper reports the CLIP ViT-L/14 medical anchor results, the combined external-detector
summary, aggregate transfer contrasts, and the compact industrial boundary experiment. The tables
below give, in order, the industrial replication across all three vision backbones, the
individual source--target transfer rows and their DINOv3-L/16 replication, the per-dataset
adapter-family results, and the complete three-vision-backbone medical anchor factorials.

\begin{table}[!h]
\centering
\caption{Full-shot industrial experiment across three vision backbones (\%). Each
vision backbone uses a linear adapter and full-shot synthetic-anomaly training. Cells are
mean$\pm$sd over three training seeds; for randomized rows, each training seed also selects the
corresponding geometry seed. Pixel columns are AUROC, per-region overlap (PRO) and maximum F1 (F1max); image columns are AUROC, average precision (AUPR) and F1max. The two AUROC columns are category-macro means over the $T$ categories; PRO, F1max and AUPR are pooled over all categories.}
\label{tab:industrial_anchor_conditions_backbones}
\begingroup
\scriptsize
\setlength{\tabcolsep}{5pt}
\renewcommand{\arraystretch}{1.0}
\begin{adjustbox}{max width=\textwidth,center}
\begin{tabular}{llcccccc}
\toprule
\multicolumn{8}{c}{\textbf{VisA}} \\
\midrule
\multirow{2}{*}{\textbf{Vision backbone}} & \multirow{2}{*}{\textbf{Anchor condition}} & \multicolumn{3}{c}{\textbf{Pixel}} & \multicolumn{3}{c}{\textbf{Image}} \\
\cmidrule(lr){3-5}\cmidrule(lr){6-8}
& & AUROC & PRO & F1max & AUROC & AUPR & F1max \\
\midrule
\multirow{3}{*}{\textbf{DINOv3-L/16}} & Correct prompts & $88.5{\pm}0.4$ & $65.7{\pm}0.3$ & $32.2{\pm}0.9$ & $83.1{\pm}0.3$ & $87.7{\pm}0.3$ & $76.3{\pm}0.4$ \\
 & Shared rotation & $88.4{\pm}0.3$ & $65.5{\pm}0.4$ & $32.0{\pm}0.9$ & $83.2{\pm}0.2$ & $87.8{\pm}0.3$ & $76.4{\pm}0.3$ \\
 & Independent rotations & $96.8{\pm}0.1$ & $76.2{\pm}0.5$ & $42.6{\pm}0.5$ & $91.9{\pm}0.4$ & $92.7{\pm}0.5$ & $82.7{\pm}0.9$ \\
\midrule
\multirow{3}{*}{\textbf{DINOv2-L/14}} & Correct prompts & $92.2{\pm}0.1$ & $68.4{\pm}1.1$ & $32.8{\pm}0.5$ & $89.0{\pm}0.3$ & $90.5{\pm}0.2$ & $80.0{\pm}0.5$ \\
 & Shared rotation & $92.2{\pm}0.2$ & $68.5{\pm}0.9$ & $32.8{\pm}0.3$ & $89.2{\pm}0.2$ & $90.7{\pm}0.2$ & $80.3{\pm}0.4$ \\
 & Independent rotations & $97.1{\pm}0.1$ & $76.8{\pm}1.5$ & $41.3{\pm}0.4$ & $92.5{\pm}0.3$ & $93.1{\pm}0.5$ & $83.4{\pm}0.9$ \\
\midrule
\multirow{3}{*}{\textbf{CLIP ViT-L/14}} & Correct prompts & $91.7{\pm}0.1$ & $61.0{\pm}1.1$ & $32.3{\pm}0.3$ & $86.2{\pm}0.1$ & $88.7{\pm}0.0$ & $78.3{\pm}0.3$ \\
 & Shared rotation & $91.6{\pm}0.0$ & $60.9{\pm}0.5$ & $32.1{\pm}0.4$ & $86.3{\pm}0.1$ & $88.7{\pm}0.1$ & $78.3{\pm}0.2$ \\
 & Independent rotations & $96.1{\pm}0.1$ & $73.2{\pm}0.8$ & $39.9{\pm}0.4$ & $91.8{\pm}0.1$ & $92.6{\pm}0.2$ & $83.0{\pm}0.6$ \\
\bottomrule
\end{tabular}
\end{adjustbox}
\\[4pt]
\begin{adjustbox}{max width=\textwidth,center}
\begin{tabular}{llcccccc}
\toprule
\multicolumn{8}{c}{\textbf{MVTec-AD}} \\
\midrule
\multirow{2}{*}{\textbf{Vision backbone}} & \multirow{2}{*}{\textbf{Anchor condition}} & \multicolumn{3}{c}{\textbf{Pixel}} & \multicolumn{3}{c}{\textbf{Image}} \\
\cmidrule(lr){3-5}\cmidrule(lr){6-8}
& & AUROC & PRO & F1max & AUROC & AUPR & F1max \\
\midrule
\multirow{3}{*}{\textbf{DINOv3-L/16}} & Correct prompts & $92.8{\pm}0.1$ & $82.2{\pm}0.6$ & $56.0{\pm}0.1$ & $95.0{\pm}0.1$ & $97.9{\pm}0.1$ & $92.6{\pm}0.4$ \\
 & Shared rotation & $92.8{\pm}0.0$ & $82.3{\pm}0.1$ & $55.9{\pm}0.2$ & $95.0{\pm}0.0$ & $97.9{\pm}0.0$ & $92.6{\pm}0.0$ \\
 & Independent rotations & $96.3{\pm}0.0$ & $85.6{\pm}0.2$ & $54.5{\pm}0.6$ & $97.3{\pm}0.1$ & $98.6{\pm}0.1$ & $94.7{\pm}0.0$ \\
\midrule
\multirow{3}{*}{\textbf{DINOv2-L/14}} & Correct prompts & $94.0{\pm}0.1$ & $85.4{\pm}0.5$ & $59.2{\pm}0.1$ & $96.1{\pm}0.2$ & $98.5{\pm}0.1$ & $94.0{\pm}0.3$ \\
 & Shared rotation & $94.0{\pm}0.1$ & $85.3{\pm}0.5$ & $59.2{\pm}0.1$ & $96.0{\pm}0.2$ & $98.5{\pm}0.1$ & $93.9{\pm}0.3$ \\
 & Independent rotations & $96.9{\pm}0.1$ & $85.8{\pm}0.4$ & $55.8{\pm}0.9$ & $97.5{\pm}0.1$ & $98.9{\pm}0.0$ & $95.3{\pm}0.3$ \\
\midrule
\multirow{3}{*}{\textbf{CLIP ViT-L/14}} & Correct prompts & $93.6{\pm}0.1$ & $85.2{\pm}0.5$ & $53.6{\pm}0.5$ & $96.2{\pm}0.1$ & $98.1{\pm}0.1$ & $92.9{\pm}0.2$ \\
 & Shared rotation & $93.6{\pm}0.1$ & $85.5{\pm}0.8$ & $53.7{\pm}0.2$ & $96.2{\pm}0.1$ & $98.1{\pm}0.1$ & $92.9{\pm}0.2$ \\
 & Independent rotations & $96.6{\pm}0.0$ & $88.4{\pm}0.4$ & $59.5{\pm}0.0$ & $97.4{\pm}0.1$ & $98.8{\pm}0.0$ & $94.6{\pm}0.2$ \\
\bottomrule
\end{tabular}
\end{adjustbox}
\endgroup
\end{table}

\begin{table}[!tbp]
\centering
\caption{Anchor swap cost under a frozen transferred adapter. CLIP ViT-L/14,
$K{=}16$, image AUROC (\%), mean over three seeds. A detector trained on $S$ is
evaluated on $T$ twice, differing only in which anchors score the images:
\textbf{src} uses $S$'s, \textbf{tgt} uses $T$'s; both share the same domain
shift, so \textbf{swap} $=$ src $-$ tgt isolates the substitution.
$\Delta_{\mathrm{swap}}$ is swap(random) $-$ swap(correct).
Dataset keys are B=BraTS2021, L=LiverCT, O=OCT2017, R=RESC, M=Kaggle BrainMRI, U=BUSI, and
C=CheXpert.}
\label{tab:transfer_anchor_swap}
\begingroup
\scriptsize
\setlength{\tabcolsep}{3pt}
\renewcommand{\arraystretch}{1.0}
\begin{adjustbox}{max totalsize={\textwidth}{0.55\textheight},center}
\begin{tabular}{lcccccc@{\hspace{1.5em}}lcccccc}
\toprule
\multicolumn{7}{c}{\textbf{Real-anomaly support}} & \multicolumn{7}{c}{\textbf{Normal-only}} \\
\cmidrule(lr){1-7}\cmidrule(lr){8-14}
& \multicolumn{3}{c}{\textbf{Correct prompts}} & \multicolumn{3}{c}{\textbf{Gram-matched random}}
& & \multicolumn{3}{c}{\textbf{Correct prompts}} & \multicolumn{3}{c}{\textbf{Gram-matched random}} \\
\cmidrule(lr){2-4}\cmidrule(lr){5-7}\cmidrule(lr){9-11}\cmidrule(lr){12-14}
$S{\to}T$ & \textbf{src} & \textbf{tgt} & \textbf{swap} & \textbf{src} & \textbf{tgt} & \textbf{swap}
& $S{\to}T$ & \textbf{src} & \textbf{tgt} & \textbf{swap} & \textbf{src} & \textbf{tgt} & \textbf{swap} \\
\midrule
B$\to$L & $63.4$ & $63.9$ & $-0.5$ & $64.1$ & $50.9$ & $13.1$ & M$\to$U & $73.1$ & $73.7$ & $-0.6$ & $73.9$ & $55.2$ & $18.7$ \\
B$\to$O & $59.7$ & $61.1$ & $-1.5$ & $58.7$ & $64.7$ & $-6.0$ & M$\to$C & $57.6$ & $56.9$ & $0.7$ & $57.6$ & $54.2$ & $3.4$ \\
B$\to$R & $45.2$ & $44.5$ & $0.6$ & $44.9$ & $48.8$ & $-3.9$ & U$\to$M & $78.9$ & $79.6$ & $-0.7$ & $79.7$ & $36.8$ & $42.9$ \\
L$\to$B & $68.9$ & $69.2$ & $-0.3$ & $68.7$ & $54.4$ & $14.3$ & U$\to$C & $55.8$ & $57.2$ & $-1.3$ & $58.7$ & $51.3$ & $7.4$ \\
L$\to$O & $69.7$ & $62.0$ & $7.8$ & $69.7$ & $55.8$ & $13.9$ & C$\to$M & $75.8$ & $68.0$ & $7.9$ & $74.7$ & $32.1$ & $42.5$ \\
L$\to$R & $66.1$ & $66.8$ & $-0.8$ & $67.1$ & $58.6$ & $8.5$ & C$\to$U & $68.3$ & $69.6$ & $-1.3$ & $67.0$ & $67.1$ & $-0.1$ \\
O$\to$B & $55.4$ & $54.2$ & $1.2$ & $54.2$ & $52.5$ & $1.8$ & & & & & & & \\
O$\to$L & $62.3$ & $60.2$ & $2.1$ & $63.4$ & $56.7$ & $6.7$ & & & & & & & \\
O$\to$R & $84.3$ & $86.6$ & $-2.4$ & $84.0$ & $61.9$ & $22.1$ & & & & & & & \\
R$\to$B & $45.6$ & $45.9$ & $-0.3$ & $45.7$ & $54.0$ & $-8.2$ & & & & & & & \\
R$\to$L & $51.6$ & $51.4$ & $0.2$ & $52.0$ & $49.2$ & $2.8$ & & & & & & & \\
R$\to$O & $90.3$ & $89.3$ & $1.0$ & $90.5$ & $42.0$ & $48.5$ & & & & & & & \\
\midrule
\textit{mean} & $63.5$ & $62.9$ & $0.6$ & $63.6$ & $54.1$ & $9.5$ & \textit{mean} & $68.3$ & $67.5$ & $0.8$ & $68.6$ & $49.4$ & $19.1$ \\
\textbf{$\Delta_{\mathrm{swap}}$} & \multicolumn{6}{c}{$+8.9$} & \textbf{$\Delta_{\mathrm{swap}}$} & \multicolumn{6}{c}{$+18.4$} \\
\bottomrule
\end{tabular}
\end{adjustbox}
\endgroup
\end{table}

\begin{table}[!tbp]
\centering
\caption{Anchor swap cost under a frozen transferred adapter. DINOv3-L/16,
$K{=}16$, image AUROC (\%), mean over three seeds. A detector trained on $S$ is
evaluated on $T$ twice, differing only in which anchors score the images:
\textbf{src} uses $S$'s, \textbf{tgt} uses $T$'s; both share the same domain
shift, so \textbf{swap} $=$ src $-$ tgt isolates the substitution.
$\Delta_{\mathrm{swap}}$ is swap(random) $-$ swap(correct). The source-anchor score reuses the pair
that trained the adapter and preserves its anchor frame. In the random condition, the target pair
uses a different seed and changes the readout direction without co-rotating the adapter; its swap
measures coordinate-frame disruption, not a benefit from linguistic meaning.
Dataset keys are B=BraTS2021, L=LiverCT, O=OCT2017, R=RESC, M=Kaggle BrainMRI, U=BUSI, and
C=CheXpert.}
\label{tab:transfer_anchor_swap_dinov3}
\begingroup
\scriptsize
\setlength{\tabcolsep}{3pt}
\renewcommand{\arraystretch}{1.0}
\begin{adjustbox}{max totalsize={\textwidth}{0.55\textheight},center}
\begin{tabular}{lcccccc@{\hspace{1.5em}}lcccccc}
\toprule
\multicolumn{7}{c}{\textbf{Real-anomaly support}} & \multicolumn{7}{c}{\textbf{Normal-only}} \\
\cmidrule(lr){1-7}\cmidrule(lr){8-14}
& \multicolumn{3}{c}{\textbf{Correct prompts}} & \multicolumn{3}{c}{\textbf{Gram-matched random}}
& & \multicolumn{3}{c}{\textbf{Correct prompts}} & \multicolumn{3}{c}{\textbf{Gram-matched random}} \\
\cmidrule(lr){2-4}\cmidrule(lr){5-7}\cmidrule(lr){9-11}\cmidrule(lr){12-14}
$S{\to}T$ & \textbf{src} & \textbf{tgt} & \textbf{swap} & \textbf{src} & \textbf{tgt} & \textbf{swap}
& $S{\to}T$ & \textbf{src} & \textbf{tgt} & \textbf{swap} & \textbf{src} & \textbf{tgt} & \textbf{swap} \\
\midrule
B$\to$L & $71.2$ & $73.9$ & $-2.6$ & $71.6$ & $47.6$ & $24.0$ & M$\to$U & $71.6$ & $70.0$ & $1.6$ & $70.5$ & $51.3$ & $19.1$ \\
B$\to$O & $80.2$ & $87.6$ & $-7.3$ & $80.3$ & $64.1$ & $16.2$ & M$\to$C & $49.3$ & $51.1$ & $-1.8$ & $48.6$ & $46.3$ & $2.3$ \\
B$\to$R & $74.7$ & $79.1$ & $-4.4$ & $75.5$ & $50.1$ & $25.4$ & U$\to$M & $93.0$ & $92.9$ & $0.2$ & $92.1$ & $51.2$ & $40.9$ \\
L$\to$B & $67.6$ & $67.7$ & $-0.1$ & $63.6$ & $45.9$ & $17.7$ & U$\to$C & $55.7$ & $56.4$ & $-0.8$ & $56.5$ & $48.1$ & $8.4$ \\
L$\to$O & $72.4$ & $75.6$ & $-3.1$ & $72.1$ & $48.4$ & $23.7$ & C$\to$M & $81.2$ & $80.0$ & $1.2$ & $80.6$ & $51.0$ & $29.6$ \\
L$\to$R & $75.0$ & $78.0$ & $-3.0$ & $75.3$ & $50.2$ & $25.0$ & C$\to$U & $45.2$ & $47.3$ & $-2.1$ & $44.8$ & $65.7$ & $-20.9$ \\
O$\to$B & $57.5$ & $58.9$ & $-1.4$ & $61.4$ & $44.5$ & $17.0$ & & & & & & & \\
O$\to$L & $67.4$ & $64.2$ & $3.2$ & $65.7$ & $57.8$ & $7.9$ & & & & & & & \\
O$\to$R & $86.5$ & $84.8$ & $1.7$ & $86.6$ & $48.2$ & $38.3$ & & & & & & & \\
R$\to$B & $52.9$ & $53.0$ & $-0.1$ & $52.2$ & $58.3$ & $-6.1$ & & & & & & & \\
R$\to$L & $53.8$ & $53.4$ & $0.5$ & $54.0$ & $57.5$ & $-3.5$ & & & & & & & \\
R$\to$O & $94.5$ & $93.5$ & $1.0$ & $94.3$ & $45.3$ & $49.0$ & & & & & & & \\
\midrule
\textit{mean} & $71.2$ & $72.5$ & $-1.3$ & $71.0$ & $51.5$ & $19.6$ & \textit{mean} & $66.0$ & $66.3$ & $-0.3$ & $65.5$ & $52.3$ & $13.2$ \\
\textbf{$\Delta_{\mathrm{swap}}$} & \multicolumn{6}{c}{$+20.9$} & \textbf{$\Delta_{\mathrm{swap}}$} & \multicolumn{6}{c}{$+13.5$} \\
\bottomrule
\end{tabular}
\end{adjustbox}
\endgroup
\end{table}

\begin{table}[p]
\centering
\caption{Per-dataset results across adapter architectures, CLIP ViT-L/14, $K{=}16$, under real-anomaly support (top) and normal-only synthetic-anomaly training (bottom). Cells are mean$\pm$sd over three training seeds; AC is image-level and AS pixel-level AUROC (\%). Each panel fixes the anchor condition. Avg is the unweighted mean over datasets, AS over BraTS2021, LiverCT, and RESC, for which pixel AUROC is reported. As in Table~\ref{tab:semantic_necessity_audit_syn}, the normal-only block reports image AUROC only: BUSI has evaluation masks, but its pixel AUROC is omitted to keep a common image-level endpoint, and CheXpert and Kaggle BrainMRI have no evaluation masks. The families are approximately parameter-matched: summed over the four retained layers, the linear ($1024\!\to\!768$), MLP ($1024\!\to\!448\!\to\!768$), and Transformer ($1024\!\to\!192\!\to\!768$) adapters have 3,148,800, 3,216,128, and 3,159,552 trainable parameters.}
\label{tab:adapter_family_dataset_ac_clip}
\begingroup
\scriptsize
\setlength{\tabcolsep}{3pt}
\renewcommand{\arraystretch}{1.05}
\textbf{Real-anomaly support}\\[2pt]
\begin{adjustbox}{max width=\textwidth,center}
\begin{tabular}{lccccccccc}
\toprule
\multirow{2}{*}{\textbf{Adapter}} &
\textbf{OCT2017} &
\multicolumn{2}{c}{\textbf{BraTS2021}} &
\multicolumn{2}{c}{\makecell{\textbf{LiverCT}\\\textbf{(BTCV+LiTS)}}} &
\multicolumn{2}{c}{\textbf{RESC}} &
\multicolumn{2}{c}{\textbf{Avg}} \\
\cmidrule(lr){2-2}
\cmidrule(lr){3-4}
\cmidrule(lr){5-6}
\cmidrule(lr){7-8}
\cmidrule(lr){9-10}
& \textbf{AC} &
\textbf{AC} & \textbf{AS} &
\textbf{AC} & \textbf{AS} &
\textbf{AC} & \textbf{AS} &
\textbf{AC} & \textbf{AS} \\
\midrule
\multicolumn{10}{l}{\textbf{Correct prompts}} \\
\quad Linear & $99.6{\pm}0.1$ & $96.9{\pm}0.2$ & $99.2{\pm}0.0$ & $91.6{\pm}1.1$ & $99.9{\pm}0.0$ & $97.2{\pm}0.9$ & $99.0{\pm}0.1$ & $96.3$ & $99.4$ \\
\quad MLP & $99.5{\pm}0.2$ & $97.0{\pm}0.0$ & $99.1{\pm}0.1$ & $92.7{\pm}1.4$ & $99.8{\pm}0.1$ & $97.7{\pm}1.2$ & $99.2{\pm}0.1$ & $96.7$ & $99.4$ \\
\quad Transformer & $99.4{\pm}0.1$ & $96.8{\pm}0.3$ & $99.0{\pm}0.0$ & $90.9{\pm}1.2$ & $99.8{\pm}0.1$ & $97.3{\pm}1.0$ & $99.1{\pm}0.0$ & $96.1$ & $99.3$ \\
\midrule
\multicolumn{10}{l}{\textbf{Mismatched-dataset bank}} \\
\quad Linear & $99.7{\pm}0.1$ & $96.9{\pm}0.2$ & $99.3{\pm}0.0$ & $90.8{\pm}1.3$ & $99.9{\pm}0.0$ & $97.2{\pm}0.9$ & $99.0{\pm}0.1$ & $96.1$ & $99.4$ \\
\quad MLP & $99.5{\pm}0.4$ & $97.1{\pm}0.1$ & $99.1{\pm}0.1$ & $91.7{\pm}1.3$ & $99.8{\pm}0.1$ & $97.8{\pm}1.1$ & $99.2{\pm}0.1$ & $96.5$ & $99.4$ \\
\quad Transformer & $99.5{\pm}0.0$ & $96.8{\pm}0.2$ & $99.0{\pm}0.0$ & $90.4{\pm}1.6$ & $99.8{\pm}0.1$ & $97.1{\pm}1.2$ & $99.0{\pm}0.2$ & $96.0$ & $99.3$ \\
\midrule
\multicolumn{10}{l}{\textbf{Gram-matched random}} \\
\quad Linear & $99.6{\pm}0.1$ & $96.9{\pm}0.2$ & $99.1{\pm}0.0$ & $91.5{\pm}1.2$ & $99.9{\pm}0.0$ & $97.2{\pm}0.9$ & $99.0{\pm}0.1$ & $96.3$ & $99.3$ \\
\quad MLP & $99.4{\pm}0.3$ & $97.1{\pm}0.2$ & $99.0{\pm}0.1$ & $93.0{\pm}0.7$ & $99.8{\pm}0.1$ & $97.7{\pm}1.0$ & $99.2{\pm}0.1$ & $96.8$ & $99.4$ \\
\quad Transformer & $99.4{\pm}0.2$ & $96.9{\pm}0.2$ & $99.0{\pm}0.0$ & $91.4{\pm}1.0$ & $99.8{\pm}0.0$ & $97.2{\pm}1.0$ & $99.0{\pm}0.1$ & $96.2$ & $99.3$ \\
\midrule
\multicolumn{10}{l}{\textbf{Unmatched orthogonal}} \\
\quad Linear & $99.2{\pm}0.0$ & $97.0{\pm}0.1$ & $99.2{\pm}0.0$ & $89.0{\pm}0.3$ & $99.8{\pm}0.0$ & $97.1{\pm}1.0$ & $99.1{\pm}0.1$ & $95.6$ & $99.4$ \\
\quad MLP & $98.6{\pm}0.9$ & $97.1{\pm}0.2$ & $99.1{\pm}0.0$ & $89.4{\pm}1.6$ & $99.6{\pm}0.4$ & $97.9{\pm}1.2$ & $99.1{\pm}0.2$ & $95.8$ & $99.3$ \\
\quad Transformer & $81.7{\pm}27.4$ & $96.9{\pm}0.2$ & $99.0{\pm}0.0$ & $87.3{\pm}0.9$ & $98.9{\pm}0.8$ & $97.1{\pm}1.0$ & $99.1{\pm}0.1$ & $90.7$ & $99.0$ \\
\midrule
\multicolumn{10}{l}{\textbf{Matched orthogonal}} \\
\quad Linear & $99.6{\pm}0.1$ & $96.9{\pm}0.2$ & $99.1{\pm}0.0$ & $91.6{\pm}1.4$ & $99.9{\pm}0.0$ & $97.2{\pm}0.9$ & $99.0{\pm}0.1$ & $96.3$ & $99.3$ \\
\quad MLP & $99.3{\pm}0.2$ & $97.1{\pm}0.1$ & $99.1{\pm}0.1$ & $92.9{\pm}1.4$ & $99.8{\pm}0.1$ & $97.8{\pm}0.9$ & $99.2{\pm}0.2$ & $96.8$ & $99.4$ \\
\quad Transformer & $99.3{\pm}0.1$ & $96.8{\pm}0.2$ & $99.0{\pm}0.0$ & $91.1{\pm}1.3$ & $99.8{\pm}0.0$ & $97.4{\pm}1.0$ & $99.0{\pm}0.1$ & $96.1$ & $99.3$ \\
\bottomrule
\end{tabular}
\end{adjustbox}
\\[6pt]
\textbf{Normal-only}\\[2pt]
\begin{adjustbox}{max width=\textwidth,center}
\begin{tabular}{lcccc}
\toprule
\multirow{2}{*}{\textbf{Adapter}} &
\textbf{\makecell{Kaggle\\BrainMRI}} & \textbf{CheXpert} &
\textbf{BUSI} & \textbf{Avg} \\
\cmidrule(lr){2-2}
\cmidrule(lr){3-3}
\cmidrule(lr){4-4}
\cmidrule(lr){5-5}
& \textbf{AC} & \textbf{AC} &
\textbf{AC} & \textbf{AC} \\
\midrule
\multicolumn{5}{l}{\textbf{Correct prompts}} \\
\quad Linear & $92.9{\pm}0.8$ & $69.3{\pm}1.6$ & $88.2{\pm}0.6$ & $83.4$ \\
\quad MLP & $94.8{\pm}0.9$ & $71.6{\pm}1.0$ & $89.2{\pm}0.4$ & $85.2$ \\
\quad Transformer & $86.7{\pm}2.0$ & $66.7{\pm}1.5$ & $86.2{\pm}0.6$ & $79.9$ \\
\midrule
\multicolumn{5}{l}{\textbf{Mismatched-dataset bank}} \\
\quad Linear & $92.9{\pm}0.7$ & $69.0{\pm}2.3$ & $87.7{\pm}0.9$ & $83.2$ \\
\quad MLP & $94.9{\pm}0.8$ & $72.3{\pm}1.5$ & $89.3{\pm}0.9$ & $85.5$ \\
\quad Transformer & $89.8{\pm}2.6$ & $65.2{\pm}1.5$ & $86.0{\pm}0.7$ & $80.3$ \\
\midrule
\multicolumn{5}{l}{\textbf{Gram-matched random}} \\
\quad Linear & $92.9{\pm}0.2$ & $69.5{\pm}1.3$ & $88.4{\pm}0.6$ & $83.6$ \\
\quad MLP & $94.7{\pm}0.7$ & $72.3{\pm}0.4$ & $89.3{\pm}1.0$ & $85.4$ \\
\quad Transformer & $88.2{\pm}2.4$ & $65.1{\pm}1.8$ & $84.7{\pm}1.8$ & $79.4$ \\
\midrule
\multicolumn{5}{l}{\textbf{Unmatched orthogonal}} \\
\quad Linear & $92.5{\pm}0.4$ & $69.8{\pm}1.3$ & $87.8{\pm}0.6$ & $83.4$ \\
\quad MLP & $94.0{\pm}1.3$ & $71.5{\pm}1.0$ & $89.0{\pm}0.5$ & $84.8$ \\
\quad Transformer & $92.7{\pm}0.2$ & $67.3{\pm}2.1$ & $87.4{\pm}1.2$ & $82.5$ \\
\midrule
\multicolumn{5}{l}{\textbf{Matched orthogonal}} \\
\quad Linear & $92.9{\pm}0.7$ & $69.4{\pm}1.7$ & $87.8{\pm}0.8$ & $83.4$ \\
\quad MLP & $95.1{\pm}0.3$ & $72.0{\pm}0.4$ & $89.4{\pm}1.0$ & $85.5$ \\
\quad Transformer & $89.2{\pm}3.7$ & $64.2{\pm}1.9$ & $85.2{\pm}1.4$ & $79.5$ \\
\bottomrule
\end{tabular}
\end{adjustbox}
\endgroup
\end{table}

\begin{table}[p]
\centering
\caption{Per-dataset results across adapter architectures, DINOv3-L/16, $K{=}16$, under real-anomaly support (top) and normal-only synthetic-anomaly training (bottom). Cells are mean$\pm$sd over three training seeds; AC is image-level and AS pixel-level AUROC (\%). Each panel fixes the anchor condition. Avg is the unweighted mean over datasets, AS over BraTS2021, LiverCT, and RESC, for which pixel AUROC is reported. As in Table~\ref{tab:semantic_necessity_audit_syn}, the normal-only block reports image AUROC only: BUSI has evaluation masks, but its pixel AUROC is omitted to keep a common image-level endpoint, and CheXpert and Kaggle BrainMRI have no evaluation masks. This table replicates Table~\ref{tab:adapter_family_dataset_ac_clip} on a second vision backbone.}
\label{tab:adapter_family_dataset_ac}
\begingroup
\scriptsize
\setlength{\tabcolsep}{3pt}
\renewcommand{\arraystretch}{1.05}
\textbf{Real-anomaly support}\\[2pt]
\begin{adjustbox}{max width=\textwidth,center}
\begin{tabular}{lccccccccc}
\toprule
\multirow{2}{*}{\textbf{Adapter}} &
\textbf{OCT2017} &
\multicolumn{2}{c}{\textbf{BraTS2021}} &
\multicolumn{2}{c}{\makecell{\textbf{LiverCT}\\\textbf{(BTCV+LiTS)}}} &
\multicolumn{2}{c}{\textbf{RESC}} &
\multicolumn{2}{c}{\textbf{Avg}} \\
\cmidrule(lr){2-2}
\cmidrule(lr){3-4}
\cmidrule(lr){5-6}
\cmidrule(lr){7-8}
\cmidrule(lr){9-10}
& \textbf{AC} &
\textbf{AC} & \textbf{AS} &
\textbf{AC} & \textbf{AS} &
\textbf{AC} & \textbf{AS} &
\textbf{AC} & \textbf{AS} \\
\midrule
\multicolumn{10}{l}{\textbf{Correct prompts}} \\
\quad Linear & $99.3{\pm}0.1$ & $97.2{\pm}0.2$ & $99.4{\pm}0.1$ & $91.4{\pm}0.9$ & $99.9{\pm}0.0$ & $97.9{\pm}0.7$ & $99.3{\pm}0.1$ & $96.4$ & $99.5$ \\
\quad MLP & $98.9{\pm}0.1$ & $97.2{\pm}0.3$ & $99.3{\pm}0.1$ & $91.1{\pm}2.3$ & $99.8{\pm}0.1$ & $98.1{\pm}0.6$ & $99.4{\pm}0.0$ & $96.3$ & $99.5$ \\
\quad Transformer & $99.1{\pm}0.5$ & $97.0{\pm}0.2$ & $99.3{\pm}0.1$ & $91.1{\pm}0.9$ & $99.8{\pm}0.1$ & $97.8{\pm}0.9$ & $99.2{\pm}0.0$ & $96.2$ & $99.4$ \\
\midrule
\multicolumn{10}{l}{\textbf{Mismatched-dataset bank}} \\
\quad Linear & $99.3{\pm}0.2$ & $97.1{\pm}0.2$ & $99.4{\pm}0.1$ & $90.9{\pm}1.0$ & $99.9{\pm}0.0$ & $97.8{\pm}0.8$ & $99.2{\pm}0.1$ & $96.3$ & $99.5$ \\
\quad MLP & $99.1{\pm}0.2$ & $97.1{\pm}0.4$ & $99.3{\pm}0.1$ & $91.1{\pm}2.4$ & $99.8{\pm}0.2$ & $98.0{\pm}0.6$ & $99.3{\pm}0.1$ & $96.3$ & $99.4$ \\
\quad Transformer & $99.3{\pm}0.3$ & $96.9{\pm}0.4$ & $99.3{\pm}0.1$ & $90.7{\pm}1.7$ & $99.6{\pm}0.4$ & $97.9{\pm}0.8$ & $99.2{\pm}0.0$ & $96.2$ & $99.4$ \\
\midrule
\multicolumn{10}{l}{\textbf{Gram-matched random}} \\
\quad Linear & $99.2{\pm}0.3$ & $97.2{\pm}0.2$ & $99.4{\pm}0.1$ & $91.4{\pm}0.9$ & $99.9{\pm}0.1$ & $97.9{\pm}0.7$ & $99.2{\pm}0.1$ & $96.4$ & $99.5$ \\
\quad MLP & $99.2{\pm}0.2$ & $97.2{\pm}0.1$ & $99.2{\pm}0.1$ & $91.1{\pm}1.8$ & $99.8{\pm}0.1$ & $98.0{\pm}0.6$ & $99.3{\pm}0.1$ & $96.4$ & $99.5$ \\
\quad Transformer & $99.1{\pm}0.4$ & $97.1{\pm}0.2$ & $99.3{\pm}0.1$ & $90.9{\pm}1.4$ & $99.8{\pm}0.1$ & $97.7{\pm}1.0$ & $99.1{\pm}0.1$ & $96.2$ & $99.4$ \\
\midrule
\multicolumn{10}{l}{\textbf{Unmatched orthogonal}} \\
\quad Linear & $98.7{\pm}0.1$ & $97.1{\pm}0.2$ & $99.4{\pm}0.1$ & $88.9{\pm}1.6$ & $99.8{\pm}0.1$ & $97.7{\pm}0.8$ & $99.2{\pm}0.1$ & $95.6$ & $99.5$ \\
\quad MLP & $98.0{\pm}0.8$ & $97.2{\pm}0.1$ & $99.4{\pm}0.1$ & $87.9{\pm}1.8$ & $99.7{\pm}0.2$ & $98.1{\pm}0.9$ & $99.4{\pm}0.1$ & $95.3$ & $99.5$ \\
\quad Transformer & $50.0{\pm}0.0$ & $97.1{\pm}0.1$ & $99.4{\pm}0.1$ & $65.1{\pm}19.9$ & $55.2{\pm}38.5$ & $97.9{\pm}0.6$ & $99.2{\pm}0.0$ & $77.5$ & $84.6$ \\
\midrule
\multicolumn{10}{l}{\textbf{Matched orthogonal}} \\
\quad Linear & $99.2{\pm}0.3$ & $97.1{\pm}0.2$ & $99.4{\pm}0.1$ & $91.3{\pm}0.8$ & $99.9{\pm}0.0$ & $97.9{\pm}0.7$ & $99.2{\pm}0.1$ & $96.4$ & $99.5$ \\
\quad MLP & $99.0{\pm}0.3$ & $97.1{\pm}0.3$ & $99.3{\pm}0.1$ & $90.8{\pm}1.3$ & $99.8{\pm}0.1$ & $98.1{\pm}0.6$ & $99.3{\pm}0.1$ & $96.3$ & $99.5$ \\
\quad Transformer & $99.2{\pm}0.2$ & $97.0{\pm}0.1$ & $99.3{\pm}0.1$ & $91.3{\pm}1.5$ & $99.7{\pm}0.2$ & $98.0{\pm}0.7$ & $99.2{\pm}0.1$ & $96.4$ & $99.4$ \\
\bottomrule
\end{tabular}
\end{adjustbox}
\\[6pt]
\textbf{Normal-only}\\[2pt]
\begin{adjustbox}{max width=\textwidth,center}
\begin{tabular}{lcccc}
\toprule
\multirow{2}{*}{\textbf{Adapter}} &
\textbf{\makecell{Kaggle\\BrainMRI}} & \textbf{CheXpert} &
\textbf{BUSI} & \textbf{Avg} \\
\cmidrule(lr){2-2}
\cmidrule(lr){3-3}
\cmidrule(lr){4-4}
\cmidrule(lr){5-5}
& \textbf{AC} & \textbf{AC} &
\textbf{AC} & \textbf{AC} \\
\midrule
\multicolumn{5}{l}{\textbf{Correct prompts}} \\
\quad Linear & $95.2{\pm}0.4$ & $70.8{\pm}0.9$ & $89.2{\pm}0.8$ & $85.0$ \\
\quad MLP & $94.5{\pm}0.3$ & $69.4{\pm}0.7$ & $88.8{\pm}0.9$ & $84.2$ \\
\quad Transformer & $94.5{\pm}0.6$ & $65.5{\pm}1.3$ & $89.0{\pm}0.5$ & $83.0$ \\
\midrule
\multicolumn{5}{l}{\textbf{Mismatched-dataset bank}} \\
\quad Linear & $95.0{\pm}0.3$ & $70.8{\pm}1.1$ & $89.9{\pm}1.0$ & $85.2$ \\
\quad MLP & $94.4{\pm}0.6$ & $69.4{\pm}0.5$ & $89.0{\pm}0.5$ & $84.3$ \\
\quad Transformer & $94.2{\pm}0.5$ & $64.9{\pm}0.8$ & $89.1{\pm}1.2$ & $82.7$ \\
\midrule
\multicolumn{5}{l}{\textbf{Gram-matched random}} \\
\quad Linear & $95.2{\pm}0.4$ & $70.8{\pm}1.0$ & $89.4{\pm}0.6$ & $85.1$ \\
\quad MLP & $94.5{\pm}0.3$ & $69.1{\pm}0.5$ & $87.9{\pm}1.2$ & $83.8$ \\
\quad Transformer & $94.3{\pm}0.2$ & $64.7{\pm}2.4$ & $88.9{\pm}0.6$ & $82.6$ \\
\midrule
\multicolumn{5}{l}{\textbf{Unmatched orthogonal}} \\
\quad Linear & $93.5{\pm}1.0$ & $70.3{\pm}1.7$ & $88.1{\pm}2.2$ & $84.0$ \\
\quad MLP & $94.9{\pm}0.6$ & $69.5{\pm}0.3$ & $89.8{\pm}3.2$ & $84.8$ \\
\quad Transformer & $94.0{\pm}0.8$ & $66.1{\pm}1.8$ & $89.0{\pm}1.0$ & $83.0$ \\
\midrule
\multicolumn{5}{l}{\textbf{Matched orthogonal}} \\
\quad Linear & $95.1{\pm}0.5$ & $70.6{\pm}0.6$ & $89.6{\pm}0.7$ & $85.1$ \\
\quad MLP & $94.6{\pm}0.5$ & $69.7{\pm}1.3$ & $89.4{\pm}0.6$ & $84.5$ \\
\quad Transformer & $94.4{\pm}0.5$ & $64.7{\pm}1.1$ & $88.6{\pm}0.9$ & $82.6$ \\
\bottomrule
\end{tabular}
\end{adjustbox}
\endgroup
\end{table}

\begin{table}[p]
\centering
\caption{Anchor substitution with real-anomaly support ($K$ normal $+$ $K$ observed abnormal samples). Cells are mean$\pm$sd over three training seeds; AC is image-level and AS pixel-level AUROC (\%). Avg is the unweighted mean over datasets, AS over BraTS2021, LiverCT, and RESC, for which pixel AUROC is reported. Rows within a block share support samples, training and scoring; only the anchor pair differs.}
\label{tab:semantic_necessity_audit_real}
\begingroup
\scriptsize
\setlength{\tabcolsep}{3pt}
\renewcommand{\arraystretch}{1.15}
\begin{adjustbox}{max totalsize={\textwidth}{0.80\textheight},center}
\begin{tabular}{llccccccccc}
\toprule
\multirow{2}{*}{\textbf{$K$}} & \multirow{2}{*}{\textbf{Anchor condition}} &
\textbf{OCT2017} &
\multicolumn{2}{c}{\textbf{BraTS2021}} &
\multicolumn{2}{c}{\makecell{\textbf{LiverCT}\\\textbf{(BTCV+LiTS)}}} &
\multicolumn{2}{c}{\textbf{RESC}} &
\multicolumn{2}{c}{\textbf{Avg}} \\
\cmidrule(lr){3-3}
\cmidrule(lr){4-5}
\cmidrule(lr){6-7}
\cmidrule(lr){8-9}
\cmidrule(lr){10-11}
& & \textbf{AC} &
\textbf{AC} & \textbf{AS} &
\textbf{AC} & \textbf{AS} &
\textbf{AC} & \textbf{AS} &
\textbf{AC} & \textbf{AS} \\
\midrule
\multirow{18}{*}{2} & \multicolumn{9}{l}{\textbf{DINOv3-L/16}} \\
 & \quad Correct prompts & $93.6{\pm}6.7$ & $94.9{\pm}1.0$ & $98.8{\pm}0.3$ & $81.8{\pm}7.1$ & $99.4{\pm}0.4$ & $93.2{\pm}1.9$ & $97.0{\pm}1.2$ & $90.9$ & $98.4$ \\
 & \quad Mismatched-dataset bank & $94.9{\pm}4.8$ & $95.0{\pm}0.8$ & $98.8{\pm}0.3$ & $80.5{\pm}6.8$ & $99.4{\pm}0.4$ & $93.3{\pm}1.7$ & $97.2{\pm}1.1$ & $90.9$ & $98.5$ \\
 & \quad Gram-matched random & $93.7{\pm}5.3$ & $94.8{\pm}1.1$ & $98.8{\pm}0.3$ & $81.5{\pm}7.3$ & $99.4{\pm}0.4$ & $93.2{\pm}2.1$ & $97.0{\pm}1.2$ & $90.8$ & $98.4$ \\
 & \quad Unmatched orthogonal & $81.8{\pm}15.9$ & $94.4{\pm}1.1$ & $98.5{\pm}0.6$ & $78.2{\pm}7.7$ & $97.1{\pm}4.1$ & $93.2{\pm}2.1$ & $97.1{\pm}1.1$ & $86.9$ & $97.6$ \\
 & \quad Matched orthogonal & $93.9{\pm}6.4$ & $94.7{\pm}1.2$ & $98.8{\pm}0.3$ & $81.0{\pm}7.5$ & $99.4{\pm}0.3$ & $93.3{\pm}2.0$ & $97.1{\pm}1.2$ & $90.7$ & $98.4$ \\
 & \multicolumn{9}{l}{\textbf{DINOv2-L/14}} \\
 & \quad Correct prompts & $94.5{\pm}6.2$ & $94.8{\pm}1.2$ & $98.4{\pm}0.6$ & $85.9{\pm}2.8$ & $99.3{\pm}0.6$ & $90.2{\pm}3.2$ & $96.2{\pm}2.0$ & $91.4$ & $98.0$ \\
 & \quad Mismatched-dataset bank & $95.5{\pm}4.6$ & $95.0{\pm}0.9$ & $98.7{\pm}0.4$ & $84.6{\pm}3.4$ & $99.1{\pm}0.8$ & $89.8{\pm}3.0$ & $96.2{\pm}2.0$ & $91.2$ & $98.0$ \\
 & \quad Gram-matched random & $94.6{\pm}5.9$ & $94.7{\pm}1.3$ & $98.4{\pm}0.5$ & $86.7{\pm}2.7$ & $99.4{\pm}0.6$ & $89.8{\pm}3.0$ & $96.2{\pm}1.9$ & $91.5$ & $98.0$ \\
 & \quad Unmatched orthogonal & $93.0{\pm}7.1$ & $94.4{\pm}1.4$ & $98.2{\pm}0.9$ & $80.5{\pm}4.9$ & $97.1{\pm}4.1$ & $89.9{\pm}3.8$ & $96.1{\pm}2.4$ & $89.5$ & $97.2$ \\
 & \quad Matched orthogonal & $94.2{\pm}6.5$ & $94.7{\pm}1.4$ & $98.4{\pm}0.5$ & $86.2{\pm}3.0$ & $99.3{\pm}0.6$ & $89.8{\pm}3.3$ & $96.2{\pm}1.9$ & $91.2$ & $98.0$ \\
 & \multicolumn{9}{l}{\textbf{CLIP ViT-L/14}} \\
 & \quad Correct prompts & $94.0{\pm}3.3$ & $94.7{\pm}0.6$ & $98.0{\pm}0.7$ & $80.7{\pm}8.3$ & $99.1{\pm}0.7$ & $91.9{\pm}2.5$ & $96.8{\pm}1.4$ & $90.3$ & $97.9$ \\
 & \quad Mismatched-dataset bank & $94.2{\pm}2.2$ & $94.8{\pm}0.9$ & $98.4{\pm}0.4$ & $79.7{\pm}8.5$ & $98.1{\pm}2.3$ & $91.6{\pm}2.6$ & $96.7{\pm}1.5$ & $90.1$ & $97.7$ \\
 & \quad Gram-matched random & $93.8{\pm}3.5$ & $94.6{\pm}0.5$ & $98.1{\pm}0.6$ & $80.3{\pm}8.1$ & $99.0{\pm}0.7$ & $91.8{\pm}2.6$ & $96.8{\pm}1.4$ & $90.1$ & $98.0$ \\
 & \quad Unmatched orthogonal & $92.5{\pm}4.4$ & $93.6{\pm}2.4$ & $97.7{\pm}1.4$ & $72.8{\pm}5.2$ & $98.8{\pm}1.3$ & $91.4{\pm}2.7$ & $96.8{\pm}1.3$ & $87.6$ & $97.8$ \\
 & \quad Matched orthogonal & $94.5{\pm}4.3$ & $94.6{\pm}0.6$ & $98.0{\pm}0.6$ & $80.4{\pm}7.9$ & $99.1{\pm}0.7$ & $91.6{\pm}2.7$ & $96.7{\pm}1.4$ & $90.3$ & $97.9$ \\
\midrule
\multirow{18}{*}{4} & \multicolumn{9}{l}{\textbf{DINOv3-L/16}} \\
 & \quad Correct prompts & $97.2{\pm}1.4$ & $96.0{\pm}1.1$ & $99.2{\pm}0.0$ & $87.0{\pm}3.8$ & $99.7{\pm}0.2$ & $93.9{\pm}2.9$ & $98.0{\pm}0.5$ & $93.5$ & $98.9$ \\
 & \quad Mismatched-dataset bank & $98.0{\pm}0.8$ & $96.0{\pm}0.9$ & $99.2{\pm}0.0$ & $85.3{\pm}2.6$ & $99.6{\pm}0.2$ & $93.8{\pm}3.0$ & $97.9{\pm}0.6$ & $93.3$ & $98.9$ \\
 & \quad Gram-matched random & $96.9{\pm}1.5$ & $95.9{\pm}1.0$ & $99.2{\pm}0.0$ & $85.8{\pm}2.0$ & $99.7{\pm}0.1$ & $93.9{\pm}2.9$ & $98.0{\pm}0.6$ & $93.1$ & $98.9$ \\
 & \quad Unmatched orthogonal & $95.5{\pm}1.8$ & $95.8{\pm}1.2$ & $99.1{\pm}0.1$ & $82.6{\pm}2.6$ & $99.4{\pm}0.5$ & $94.2{\pm}2.2$ & $98.3{\pm}0.3$ & $92.0$ & $98.9$ \\
 & \quad Matched orthogonal & $97.6{\pm}0.9$ & $95.9{\pm}1.0$ & $99.2{\pm}0.0$ & $86.7{\pm}3.8$ & $99.7{\pm}0.1$ & $93.9{\pm}2.9$ & $97.9{\pm}0.6$ & $93.5$ & $98.9$ \\
 & \multicolumn{9}{l}{\textbf{DINOv2-L/14}} \\
 & \quad Correct prompts & $96.5{\pm}3.2$ & $95.9{\pm}0.6$ & $99.0{\pm}0.2$ & $88.8{\pm}2.5$ & $99.8{\pm}0.1$ & $93.7{\pm}2.4$ & $97.8{\pm}0.5$ & $93.7$ & $98.9$ \\
 & \quad Mismatched-dataset bank & $97.6{\pm}1.7$ & $96.0{\pm}0.7$ & $99.1{\pm}0.1$ & $88.2{\pm}2.5$ & $99.8{\pm}0.1$ & $93.3{\pm}2.2$ & $97.8{\pm}0.4$ & $93.8$ & $98.9$ \\
 & \quad Gram-matched random & $97.4{\pm}1.5$ & $95.9{\pm}0.6$ & $99.0{\pm}0.2$ & $89.0{\pm}2.6$ & $99.8{\pm}0.1$ & $93.6{\pm}2.3$ & $97.8{\pm}0.5$ & $94.0$ & $98.9$ \\
 & \quad Unmatched orthogonal & $96.4{\pm}1.7$ & $95.9{\pm}0.7$ & $99.1{\pm}0.1$ & $86.2{\pm}2.1$ & $99.5{\pm}0.5$ & $93.3{\pm}2.4$ & $97.9{\pm}0.4$ & $92.9$ & $98.8$ \\
 & \quad Matched orthogonal & $97.4{\pm}1.7$ & $95.9{\pm}0.6$ & $99.0{\pm}0.2$ & $88.9{\pm}2.8$ & $99.8{\pm}0.1$ & $93.6{\pm}2.2$ & $97.8{\pm}0.5$ & $93.9$ & $98.9$ \\
 & \multicolumn{9}{l}{\textbf{CLIP ViT-L/14}} \\
 & \quad Correct prompts & $97.5{\pm}1.7$ & $95.9{\pm}0.2$ & $98.9{\pm}0.0$ & $84.4{\pm}5.0$ & $99.7{\pm}0.2$ & $92.8{\pm}2.4$ & $97.9{\pm}0.4$ & $92.7$ & $98.8$ \\
 & \quad Mismatched-dataset bank & $97.6{\pm}1.7$ & $95.8{\pm}0.3$ & $99.0{\pm}0.1$ & $83.9{\pm}4.4$ & $99.7{\pm}0.2$ & $92.8{\pm}2.3$ & $97.8{\pm}0.4$ & $92.5$ & $98.8$ \\
 & \quad Gram-matched random & $97.6{\pm}1.5$ & $95.8{\pm}0.2$ & $98.9{\pm}0.1$ & $84.6{\pm}4.9$ & $99.7{\pm}0.2$ & $92.8{\pm}2.5$ & $97.8{\pm}0.5$ & $92.7$ & $98.8$ \\
 & \quad Unmatched orthogonal & $96.2{\pm}1.3$ & $96.1{\pm}0.3$ & $98.9{\pm}0.1$ & $82.8{\pm}4.9$ & $99.5{\pm}0.3$ & $92.7{\pm}2.4$ & $98.0{\pm}0.5$ & $91.9$ & $98.8$ \\
 & \quad Matched orthogonal & $97.4{\pm}1.8$ & $95.8{\pm}0.2$ & $98.9{\pm}0.0$ & $84.4{\pm}4.7$ & $99.7{\pm}0.2$ & $92.9{\pm}2.4$ & $97.8{\pm}0.4$ & $92.6$ & $98.8$ \\
\midrule
\multirow{18}{*}{8} & \multicolumn{9}{l}{\textbf{DINOv3-L/16}} \\
 & \quad Correct prompts & $97.8{\pm}0.8$ & $96.1{\pm}0.3$ & $99.2{\pm}0.1$ & $89.4{\pm}3.0$ & $98.6{\pm}1.7$ & $97.3{\pm}0.9$ & $98.7{\pm}0.4$ & $95.2$ & $98.8$ \\
 & \quad Mismatched-dataset bank & $98.0{\pm}0.8$ & $96.0{\pm}0.2$ & $99.1{\pm}0.1$ & $88.8{\pm}3.3$ & $97.5{\pm}3.6$ & $97.2{\pm}1.0$ & $98.7{\pm}0.5$ & $95.0$ & $98.4$ \\
 & \quad Gram-matched random & $97.2{\pm}1.6$ & $96.1{\pm}0.3$ & $99.1{\pm}0.2$ & $89.4{\pm}2.8$ & $98.6{\pm}1.7$ & $97.2{\pm}1.0$ & $98.6{\pm}0.4$ & $95.0$ & $98.8$ \\
 & \quad Unmatched orthogonal & $96.1{\pm}2.0$ & $96.1{\pm}0.2$ & $99.2{\pm}0.2$ & $86.1{\pm}2.7$ & $95.3{\pm}7.4$ & $97.4{\pm}0.5$ & $98.7{\pm}0.4$ & $93.9$ & $97.8$ \\
 & \quad Matched orthogonal & $96.8{\pm}1.8$ & $96.2{\pm}0.4$ & $99.1{\pm}0.2$ & $89.3{\pm}2.8$ & $98.8{\pm}1.4$ & $97.2{\pm}1.0$ & $98.6{\pm}0.4$ & $94.9$ & $98.9$ \\
 & \multicolumn{9}{l}{\textbf{DINOv2-L/14}} \\
 & \quad Correct prompts & $98.8{\pm}0.8$ & $96.4{\pm}0.1$ & $99.0{\pm}0.2$ & $89.6{\pm}4.2$ & $99.2{\pm}0.9$ & $97.3{\pm}0.6$ & $98.5{\pm}0.4$ & $95.5$ & $98.9$ \\
 & \quad Mismatched-dataset bank & $99.1{\pm}0.5$ & $96.5{\pm}0.1$ & $99.2{\pm}0.1$ & $89.2{\pm}3.9$ & $98.8{\pm}1.6$ & $97.3{\pm}0.6$ & $98.5{\pm}0.4$ & $95.5$ & $98.8$ \\
 & \quad Gram-matched random & $99.0{\pm}0.5$ & $96.4{\pm}0.2$ & $99.0{\pm}0.2$ & $89.6{\pm}3.9$ & $99.3{\pm}0.8$ & $97.3{\pm}0.5$ & $98.5{\pm}0.4$ & $95.6$ & $98.9$ \\
 & \quad Unmatched orthogonal & $98.4{\pm}0.3$ & $96.2{\pm}0.1$ & $99.1{\pm}0.1$ & $86.5{\pm}3.3$ & $96.4{\pm}4.7$ & $97.5{\pm}0.8$ & $98.8{\pm}0.2$ & $94.7$ & $98.1$ \\
 & \quad Matched orthogonal & $98.9{\pm}0.4$ & $96.4{\pm}0.1$ & $99.0{\pm}0.2$ & $89.3{\pm}3.9$ & $99.3{\pm}0.8$ & $97.3{\pm}0.5$ & $98.5{\pm}0.4$ & $95.5$ & $98.9$ \\
 & \multicolumn{9}{l}{\textbf{CLIP ViT-L/14}} \\
 & \quad Correct prompts & $98.5{\pm}0.1$ & $96.0{\pm}0.3$ & $98.7{\pm}0.5$ & $87.5{\pm}3.0$ & $98.8{\pm}1.5$ & $95.9{\pm}1.2$ & $98.4{\pm}0.3$ & $94.5$ & $98.6$ \\
 & \quad Mismatched-dataset bank & $98.6{\pm}0.1$ & $96.0{\pm}0.4$ & $99.0{\pm}0.2$ & $87.0{\pm}3.2$ & $98.2{\pm}2.3$ & $95.9{\pm}1.1$ & $98.2{\pm}0.5$ & $94.4$ & $98.5$ \\
 & \quad Gram-matched random & $98.2{\pm}0.5$ & $95.9{\pm}0.3$ & $98.7{\pm}0.5$ & $87.8{\pm}2.6$ & $98.7{\pm}1.6$ & $96.0{\pm}1.1$ & $98.3{\pm}0.3$ & $94.5$ & $98.6$ \\
 & \quad Unmatched orthogonal & $97.8{\pm}0.3$ & $95.9{\pm}0.2$ & $98.9{\pm}0.3$ & $84.3{\pm}3.2$ & $99.0{\pm}0.9$ & $95.8{\pm}1.0$ & $98.5{\pm}0.3$ & $93.5$ & $98.8$ \\
 & \quad Matched orthogonal & $98.3{\pm}0.3$ & $95.9{\pm}0.4$ & $98.7{\pm}0.5$ & $87.7{\pm}2.9$ & $98.7{\pm}1.5$ & $95.9{\pm}1.2$ & $98.3{\pm}0.3$ & $94.5$ & $98.6$ \\
\midrule
\multirow{18}{*}{16} & \multicolumn{9}{l}{\textbf{DINOv3-L/16}} \\
 & \quad Correct prompts & $99.3{\pm}0.1$ & $97.2{\pm}0.2$ & $99.4{\pm}0.1$ & $91.4{\pm}0.9$ & $99.9{\pm}0.0$ & $97.9{\pm}0.7$ & $99.3{\pm}0.1$ & $96.4$ & $99.5$ \\
 & \quad Mismatched-dataset bank & $99.3{\pm}0.2$ & $97.1{\pm}0.2$ & $99.4{\pm}0.1$ & $90.9{\pm}1.0$ & $99.9{\pm}0.0$ & $97.8{\pm}0.8$ & $99.2{\pm}0.1$ & $96.3$ & $99.5$ \\
 & \quad Gram-matched random & $99.2{\pm}0.3$ & $97.2{\pm}0.2$ & $99.4{\pm}0.1$ & $91.4{\pm}0.9$ & $99.9{\pm}0.1$ & $97.9{\pm}0.7$ & $99.2{\pm}0.1$ & $96.4$ & $99.5$ \\
 & \quad Unmatched orthogonal & $98.7{\pm}0.1$ & $97.1{\pm}0.2$ & $99.4{\pm}0.1$ & $88.9{\pm}1.6$ & $99.8{\pm}0.1$ & $97.7{\pm}0.8$ & $99.2{\pm}0.1$ & $95.6$ & $99.5$ \\
 & \quad Matched orthogonal & $99.2{\pm}0.3$ & $97.1{\pm}0.2$ & $99.4{\pm}0.1$ & $91.3{\pm}0.8$ & $99.9{\pm}0.0$ & $97.9{\pm}0.7$ & $99.2{\pm}0.1$ & $96.4$ & $99.5$ \\
 & \multicolumn{9}{l}{\textbf{DINOv2-L/14}} \\
 & \quad Correct prompts & $99.6{\pm}0.0$ & $97.3{\pm}0.2$ & $99.4{\pm}0.1$ & $92.1{\pm}1.4$ & $99.9{\pm}0.1$ & $98.2{\pm}0.6$ & $99.3{\pm}0.1$ & $96.8$ & $99.5$ \\
 & \quad Mismatched-dataset bank & $99.7{\pm}0.0$ & $97.3{\pm}0.2$ & $99.4{\pm}0.0$ & $91.4{\pm}1.4$ & $99.8{\pm}0.1$ & $98.0{\pm}0.7$ & $99.3{\pm}0.1$ & $96.6$ & $99.5$ \\
 & \quad Gram-matched random & $99.6{\pm}0.0$ & $97.3{\pm}0.2$ & $99.4{\pm}0.1$ & $92.2{\pm}1.2$ & $99.9{\pm}0.1$ & $98.2{\pm}0.6$ & $99.3{\pm}0.1$ & $96.8$ & $99.5$ \\
 & \quad Unmatched orthogonal & $99.3{\pm}0.1$ & $97.2{\pm}0.3$ & $99.4{\pm}0.1$ & $89.5{\pm}1.1$ & $99.5{\pm}0.7$ & $98.0{\pm}0.9$ & $99.3{\pm}0.1$ & $96.0$ & $99.4$ \\
 & \quad Matched orthogonal & $99.6{\pm}0.0$ & $97.2{\pm}0.2$ & $99.4{\pm}0.1$ & $92.3{\pm}1.1$ & $99.9{\pm}0.1$ & $98.1{\pm}0.6$ & $99.3{\pm}0.1$ & $96.8$ & $99.5$ \\
 & \multicolumn{9}{l}{\textbf{CLIP ViT-L/14}} \\
 & \quad Correct prompts & $99.6{\pm}0.1$ & $96.9{\pm}0.2$ & $99.2{\pm}0.0$ & $91.6{\pm}1.1$ & $99.9{\pm}0.0$ & $97.2{\pm}0.9$ & $99.0{\pm}0.1$ & $96.3$ & $99.4$ \\
 & \quad Mismatched-dataset bank & $99.7{\pm}0.1$ & $96.9{\pm}0.2$ & $99.3{\pm}0.0$ & $90.8{\pm}1.3$ & $99.9{\pm}0.0$ & $97.2{\pm}0.9$ & $99.0{\pm}0.1$ & $96.1$ & $99.4$ \\
 & \quad Gram-matched random & $99.6{\pm}0.1$ & $96.9{\pm}0.2$ & $99.1{\pm}0.0$ & $91.5{\pm}1.2$ & $99.9{\pm}0.0$ & $97.2{\pm}0.9$ & $99.0{\pm}0.1$ & $96.3$ & $99.3$ \\
 & \quad Unmatched orthogonal & $99.2{\pm}0.0$ & $97.0{\pm}0.1$ & $99.2{\pm}0.0$ & $89.0{\pm}0.3$ & $99.8{\pm}0.0$ & $97.1{\pm}1.0$ & $99.1{\pm}0.1$ & $95.6$ & $99.4$ \\
 & \quad Matched orthogonal & $99.6{\pm}0.1$ & $96.9{\pm}0.2$ & $99.1{\pm}0.0$ & $91.6{\pm}1.4$ & $99.9{\pm}0.0$ & $97.2{\pm}0.9$ & $99.0{\pm}0.1$ & $96.3$ & $99.3$ \\
\bottomrule
\end{tabular}
\end{adjustbox}
\endgroup
\end{table}

\begin{table}[p]
\centering
\caption{Anchor substitution in the normal-only regime ($K$ normal samples; synthetic anomalies generated online). Cells are mean$\pm$sd over three training seeds of image-level AUROC (\%). BUSI has evaluation masks, but its pixel AUROC is omitted to keep a common image-level endpoint across all three datasets; CheXpert and Kaggle BrainMRI have no evaluation masks. Conditions are as in the real-anomaly-support table; the two supervision regimes are never pooled.}
\label{tab:semantic_necessity_audit_syn}
\begingroup
\scriptsize
\setlength{\tabcolsep}{3pt}
\renewcommand{\arraystretch}{1.15}
\begin{adjustbox}{max totalsize={\textwidth}{0.80\textheight},center}
\begin{tabular}{llcccc}
\toprule
\multirow{2}{*}{\textbf{$K$}} & \multirow{2}{*}{\textbf{Anchor condition}} &
\textbf{\makecell{Kaggle\\BrainMRI}} & \textbf{CheXpert} &
\textbf{BUSI} & \textbf{Avg} \\
\cmidrule(lr){3-3}
\cmidrule(lr){4-4}
\cmidrule(lr){5-5}
\cmidrule(lr){6-6}
& & \textbf{AC} & \textbf{AC} &
\textbf{AC} & \textbf{AC} \\
\midrule
\multirow{18}{*}{4} & \multicolumn{4}{l}{\textbf{DINOv3-L/16}} \\
 & \quad Correct prompts & $93.6{\pm}1.7$ & $71.1{\pm}1.3$ & $87.0{\pm}2.7$ & $83.9$ \\
 & \quad Mismatched-dataset bank & $94.1{\pm}1.6$ & $72.2{\pm}1.2$ & $87.3{\pm}1.4$ & $84.5$ \\
 & \quad Gram-matched random & $93.5{\pm}2.4$ & $71.5{\pm}1.3$ & $87.0{\pm}2.1$ & $84.0$ \\
 & \quad Unmatched orthogonal & $91.0{\pm}2.7$ & $69.9{\pm}4.2$ & $84.9{\pm}1.8$ & $82.0$ \\
 & \quad Matched orthogonal & $93.7{\pm}2.3$ & $71.7{\pm}0.9$ & $86.9{\pm}1.2$ & $84.1$ \\
 & \multicolumn{4}{l}{\textbf{DINOv2-L/14}} \\
 & \quad Correct prompts & $92.6{\pm}1.4$ & $66.8{\pm}4.7$ & $85.4{\pm}2.4$ & $81.6$ \\
 & \quad Mismatched-dataset bank & $91.9{\pm}1.6$ & $66.7{\pm}3.4$ & $85.7{\pm}1.4$ & $81.4$ \\
 & \quad Gram-matched random & $92.3{\pm}1.7$ & $67.1{\pm}3.6$ & $86.2{\pm}1.1$ & $81.9$ \\
 & \quad Unmatched orthogonal & $92.1{\pm}2.1$ & $67.2{\pm}3.6$ & $85.3{\pm}0.7$ & $81.5$ \\
 & \quad Matched orthogonal & $92.5{\pm}1.3$ & $66.6{\pm}4.1$ & $85.9{\pm}1.5$ & $81.7$ \\
 & \multicolumn{4}{l}{\textbf{CLIP ViT-L/14}} \\
 & \quad Correct prompts & $91.6{\pm}2.0$ & $69.6{\pm}3.1$ & $86.2{\pm}1.9$ & $82.5$ \\
 & \quad Mismatched-dataset bank & $92.4{\pm}1.4$ & $70.0{\pm}3.2$ & $86.0{\pm}0.4$ & $82.8$ \\
 & \quad Gram-matched random & $92.0{\pm}2.6$ & $70.0{\pm}3.5$ & $85.9{\pm}1.4$ & $82.6$ \\
 & \quad Unmatched orthogonal & $91.9{\pm}2.4$ & $70.1{\pm}3.6$ & $84.4{\pm}0.9$ & $82.1$ \\
 & \quad Matched orthogonal & $91.7{\pm}3.0$ & $70.1{\pm}2.8$ & $86.2{\pm}2.0$ & $82.7$ \\
\midrule
\multirow{18}{*}{8} & \multicolumn{4}{l}{\textbf{DINOv3-L/16}} \\
 & \quad Correct prompts & $95.1{\pm}1.0$ & $70.2{\pm}1.1$ & $89.5{\pm}1.0$ & $84.9$ \\
 & \quad Mismatched-dataset bank & $95.2{\pm}0.8$ & $70.0{\pm}1.1$ & $89.6{\pm}1.1$ & $84.9$ \\
 & \quad Gram-matched random & $95.1{\pm}1.0$ & $70.3{\pm}0.6$ & $89.5{\pm}0.9$ & $85.0$ \\
 & \quad Unmatched orthogonal & $93.1{\pm}1.0$ & $67.4{\pm}2.4$ & $86.1{\pm}1.9$ & $82.2$ \\
 & \quad Matched orthogonal & $95.3{\pm}1.0$ & $70.3{\pm}0.8$ & $89.2{\pm}1.1$ & $84.9$ \\
 & \multicolumn{4}{l}{\textbf{DINOv2-L/14}} \\
 & \quad Correct prompts & $93.6{\pm}0.2$ & $67.2{\pm}3.1$ & $88.2{\pm}1.7$ & $83.0$ \\
 & \quad Mismatched-dataset bank & $93.8{\pm}0.1$ & $66.5{\pm}2.9$ & $88.5{\pm}1.5$ & $82.9$ \\
 & \quad Gram-matched random & $93.4{\pm}0.4$ & $67.3{\pm}3.3$ & $88.2{\pm}2.4$ & $82.9$ \\
 & \quad Unmatched orthogonal & $93.7{\pm}0.3$ & $68.0{\pm}3.4$ & $88.2{\pm}1.6$ & $83.3$ \\
 & \quad Matched orthogonal & $93.6{\pm}0.3$ & $66.6{\pm}2.8$ & $88.6{\pm}1.8$ & $82.9$ \\
 & \multicolumn{4}{l}{\textbf{CLIP ViT-L/14}} \\
 & \quad Correct prompts & $91.2{\pm}0.9$ & $68.7{\pm}2.5$ & $87.5{\pm}0.9$ & $82.4$ \\
 & \quad Mismatched-dataset bank & $92.5{\pm}0.7$ & $69.0{\pm}3.0$ & $87.3{\pm}0.9$ & $83.0$ \\
 & \quad Gram-matched random & $91.9{\pm}0.4$ & $68.8{\pm}2.7$ & $87.3{\pm}0.1$ & $82.7$ \\
 & \quad Unmatched orthogonal & $91.8{\pm}0.8$ & $69.3{\pm}2.9$ & $87.2{\pm}0.8$ & $82.8$ \\
 & \quad Matched orthogonal & $91.9{\pm}0.5$ & $69.1{\pm}2.3$ & $87.1{\pm}1.3$ & $82.7$ \\
\midrule
\multirow{18}{*}{16} & \multicolumn{4}{l}{\textbf{DINOv3-L/16}} \\
 & \quad Correct prompts & $95.2{\pm}0.4$ & $70.8{\pm}0.9$ & $89.2{\pm}0.8$ & $85.0$ \\
 & \quad Mismatched-dataset bank & $95.0{\pm}0.3$ & $70.8{\pm}1.1$ & $89.9{\pm}1.0$ & $85.2$ \\
 & \quad Gram-matched random & $95.2{\pm}0.4$ & $70.8{\pm}1.0$ & $89.4{\pm}0.6$ & $85.1$ \\
 & \quad Unmatched orthogonal & $93.5{\pm}1.0$ & $70.3{\pm}1.7$ & $88.1{\pm}2.2$ & $84.0$ \\
 & \quad Matched orthogonal & $95.1{\pm}0.5$ & $70.6{\pm}0.6$ & $89.6{\pm}0.7$ & $85.1$ \\
 & \multicolumn{4}{l}{\textbf{DINOv2-L/14}} \\
 & \quad Correct prompts & $93.7{\pm}0.7$ & $67.0{\pm}3.0$ & $90.0{\pm}0.7$ & $83.6$ \\
 & \quad Mismatched-dataset bank & $93.6{\pm}0.5$ & $66.7{\pm}3.0$ & $89.8{\pm}0.6$ & $83.4$ \\
 & \quad Gram-matched random & $93.7{\pm}0.6$ & $66.8{\pm}3.0$ & $89.3{\pm}0.6$ & $83.3$ \\
 & \quad Unmatched orthogonal & $94.2{\pm}0.5$ & $67.7{\pm}2.0$ & $89.7{\pm}0.3$ & $83.9$ \\
 & \quad Matched orthogonal & $93.8{\pm}0.7$ & $66.5{\pm}2.7$ & $89.9{\pm}0.8$ & $83.4$ \\
 & \multicolumn{4}{l}{\textbf{CLIP ViT-L/14}} \\
 & \quad Correct prompts & $92.9{\pm}0.8$ & $69.3{\pm}1.6$ & $88.2{\pm}0.6$ & $83.4$ \\
 & \quad Mismatched-dataset bank & $92.9{\pm}0.7$ & $69.0{\pm}2.3$ & $87.7{\pm}0.9$ & $83.2$ \\
 & \quad Gram-matched random & $92.9{\pm}0.2$ & $69.5{\pm}1.3$ & $88.4{\pm}0.6$ & $83.6$ \\
 & \quad Unmatched orthogonal & $92.5{\pm}0.4$ & $69.8{\pm}1.3$ & $87.8{\pm}0.6$ & $83.4$ \\
 & \quad Matched orthogonal & $92.9{\pm}0.7$ & $69.4{\pm}1.7$ & $87.8{\pm}0.8$ & $83.4$ \\
\midrule
\multirow{18}{*}{32} & \multicolumn{4}{l}{\textbf{DINOv3-L/16}} \\
 & \quad Correct prompts & $94.7{\pm}0.2$ & $69.9{\pm}1.7$ & $87.4{\pm}0.4$ & $84.0$ \\
 & \quad Mismatched-dataset bank & $94.6{\pm}0.2$ & $69.6{\pm}1.9$ & $88.0{\pm}0.2$ & $84.1$ \\
 & \quad Gram-matched random & $94.2{\pm}0.3$ & $69.9{\pm}1.9$ & $87.2{\pm}0.5$ & $83.8$ \\
 & \quad Unmatched orthogonal & $92.8{\pm}1.4$ & $67.6{\pm}1.5$ & $87.0{\pm}1.5$ & $82.5$ \\
 & \quad Matched orthogonal & $94.2{\pm}0.0$ & $69.9{\pm}1.3$ & $87.3{\pm}0.2$ & $83.8$ \\
 & \multicolumn{4}{l}{\textbf{DINOv2-L/14}} \\
 & \quad Correct prompts & $94.2{\pm}0.1$ & $65.8{\pm}1.9$ & $89.5{\pm}0.7$ & $83.2$ \\
 & \quad Mismatched-dataset bank & $94.1{\pm}0.1$ & $65.9{\pm}1.8$ & $89.6{\pm}0.3$ & $83.2$ \\
 & \quad Gram-matched random & $94.1{\pm}0.1$ & $65.8{\pm}2.0$ & $89.4{\pm}0.5$ & $83.1$ \\
 & \quad Unmatched orthogonal & $94.1{\pm}0.2$ & $66.7{\pm}2.3$ & $88.9{\pm}0.1$ & $83.2$ \\
 & \quad Matched orthogonal & $94.0{\pm}0.4$ & $65.8{\pm}1.9$ & $89.8{\pm}0.4$ & $83.2$ \\
 & \multicolumn{4}{l}{\textbf{CLIP ViT-L/14}} \\
 & \quad Correct prompts & $93.0{\pm}0.3$ & $69.7{\pm}1.0$ & $88.1{\pm}0.4$ & $83.6$ \\
 & \quad Mismatched-dataset bank & $92.9{\pm}0.2$ & $69.3{\pm}1.4$ & $87.4{\pm}0.2$ & $83.2$ \\
 & \quad Gram-matched random & $92.9{\pm}0.3$ & $68.8{\pm}1.1$ & $87.8{\pm}0.5$ & $83.2$ \\
 & \quad Unmatched orthogonal & $92.8{\pm}0.4$ & $70.1{\pm}1.1$ & $87.6{\pm}0.3$ & $83.5$ \\
 & \quad Matched orthogonal & $92.9{\pm}0.1$ & $69.1{\pm}1.4$ & $87.8{\pm}0.2$ & $83.3$ \\
\bottomrule
\end{tabular}
\end{adjustbox}
\endgroup
\end{table}

\end{document}